% This must be in the first 5 lines to tell arXiv to use pdfLaTeX, which is strongly recommended.
\pdfoutput=1
% In particular, the hyperref package requires pdfLaTeX in order to break URLs across lines.

\documentclass{article}

\usepackage[preprint]{acl}
% for making an anonymized submission, change to
% \usepackage[review]{acl}

%\usepackage{fullpage}
\usepackage[utf8]{inputenc}
\usepackage{xcolor}

\usepackage{listings}

\usepackage{natbib}
\usepackage{verbatim}

\usepackage{amssymb}
\usepackage{amsthm}
\usepackage{amsmath}
\usepackage{amssymb}
\usepackage{amsthm}
\usepackage{makecell}
\usepackage{subcaption}
\usepackage{float}
\usepackage{booktabs}

\allowdisplaybreaks

\newcommand{\ypost}{y}%
\newcommand{\ypre}{Y}%
\newcommand{\head}{b}%
\newcommand{\headmac}[3]{b_{#1,#2}^{(#3)}}%
\newcommand{\seqlen}{n}%
\newcommand{\normFactor}{N}
\newcommand{\blowupInLemma}{N}%
\newcommand{\weightmac}[4]{\widehat{a}_{#1,#2}^{(#3,#4)}}%
\newcommand{\influence}{\operatorname{I}}%
\newcommand{\Blowup}{\operatorname{Blowup}}%

\newcommand{\flipBit}[1]{^{\oplus #1}}

\newcommand{\logMatPosPos}{A}%
\newcommand{\logMatPosTok}{C}%
\newcommand{\logMatTokPos}{B}%
\newcommand{\logMatTokTok}{W}%

\newcommand{\numLayers}{L}%

\usepackage[T1]{fontenc}

\usepackage{hyperref}

\usepackage{pslatex}
\usepackage[english]{babel}
\usepackage[utf8]{inputenc}
\usepackage{amsmath}
\usepackage{bm}
\usepackage{graphicx}
\usepackage{tikz}
\usepackage{xcolor}
\usepackage{url}
\usepackage{rotating}
\usepackage{amssymb}

\usepackage{amsthm}
 
\newcounter{theorem}

\newtheorem{corollary}[theorem]{Corollary}

\newtheorem{defin}[theorem]{Definition}
\newtheorem{remark}[theorem]{Remark}
\newtheorem{lemma}[theorem]{Lemma}
\newtheorem{thm}[theorem]{Theorem}
\newtheorem{fact}[theorem]{Fact}

%\title{A Provable Inductive Bias of Transformers Explain the Difficulty of Sensitive Functions}
\title{Why are Sensitive Functions Hard for Transformers?}
%\title{Why are Sensitive Functions Hard for Transformers?\\ From Expressive Capacity to Loss Landscapes} 
%\author{Michael Hahn$^{1,2}$, Mark Rofin$^{1}$ \\ $^1$Department of Language Science and Technology, $^2$Department of Computer Science \\ Saarland Informatics Campus \\ Saarland University, Saarbr{\"u}cken, Germany \\ \texttt{\{mhahn, mrofin\}@lst.uni-saarland.de}}
\author{Michael Hahn, Mark Rofin \\ Saarland Informatics Campus \\ Saarland University, Saarbr{\"u}cken, Germany \\ \texttt{\{mhahn, mrofin\}@lst.uni-saarland.de}}

\usepackage{natbib}
\usepackage{graphicx}

\begin{document}

\maketitle

\begin{abstract}
Empirical studies have identified a range of learnability biases and limitations of transformers, such as a persistent difficulty in learning to compute simple formal languages such as PARITY, and a bias towards low-degree functions. However, theoretical understanding remains limited, with existing expressiveness theory either overpredicting or underpredicting realistic learning abilities.
We prove that, under the transformer architecture, the loss landscape is constrained by the input-space sensitivity: Transformers whose output is sensitive to many parts of the input string inhabit isolated points in parameter space, leading to a low-sensitivity bias in generalization. 
We show theoretically and empirically that this theory unifies a broad array of empirical observations about the learning abilities and biases of transformers, such as their generalization bias towards low sensitivity and low degree, and difficulty in length generalization for PARITY.
This shows that understanding transformers' inductive biases requires studying not just their in-principle expressivity, but also their loss landscape.

\end{abstract}

\section{Introduction}

% preprint more recent than this one \cite{vasudeva2024simplicity}
Given dramatic advances in machine learning applications powered by  transformer models, there has been substantial interest in understanding which functions are easier or harder to learn and represent using transformers.
Empirical research on both formal languages and synthetic functions has uncovered an intriguing array of learning biases, but theoretical understanding is lacking.
For instance,  \citet{abbe2023generalization} experimentally argued that heldout generalization is biased towards low-degree polynomials and \citet{bhattamishra2022simplicity} provided empirical evidence that transformers prefer to represent  functions of \emph{low sensitivity}, that is, functions that do not strongly depend on many input bits.
Perhaps the most prominent example of such learning biases is a consistent difficulty in learning the PARITY function, mapping bitstrings to their parity.
This function is extremely sensitive, in the sense that flipping any bit flips the string's parity.
Empirical studies have consistently found that training transformers to compute parities is difficult, and that solutions for shorter inputs do not generalize to longer inputs \citep[e.g.][]{bhattamishra2020ability, chiang2022overcoming, deletang2022neural, ruoss2023randomized}.
This stands in stark contrast to previously-popular reccurent models which easily fit PARITY with correct length generalization \citep{bhattamishra2020ability}.

\begin{figure}
    % \centering
    \includegraphics[scale=0.9]{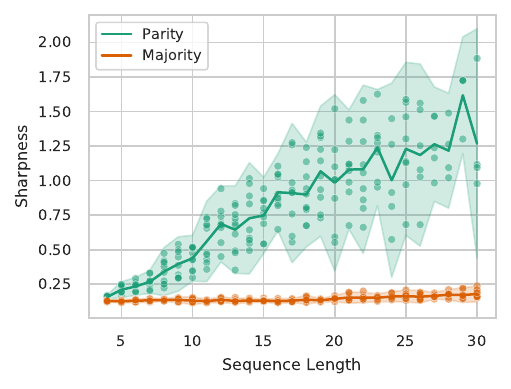}
    \caption{Training transformers on inputs of increasing length produces a steeper loss landscape for PARITY (as measured by average direction sharpness), while the loss landscape of MAJORITY does not show significant changes. 
    Our main result (Theorem~\ref{thm:lrho-bound}) provides a rigorous explanation for this phenomenon.}
    \label{exp:scaling-parity-majority}
\end{figure}

While a substantial amount of theoetical work has considered both the learnability \citep[e.g.][]{edelman2022inductive,ahn2023linear} and the expressiveness of transformers  \citep[e.g.][]{yun2019transformers,hahn2020theoretical,Yao_2021,hao2022formal,DBLP:journals/tacl/MerrillSS22,merrill2023logic, chiang2023tighter, strobl2023survey, strobl2023averagehard, angluin2023masked}, existing theoretical studies do not consistently explain such learning biases.
\citet{hahn2020theoretical} proved that, under two formal models of self-attention, no transformer can express PARITY at all input lengths. 
However, various other formal results showed that slightly relaxed assumptions about the transformer architecture resolved such expressiveness limitations.
Most notably, \citet{chiang2022overcoming} found that layer norm, by breaking the Lipschitz assumption used in \citet{hahn2020theoretical}'s Theorem 2, allows expressing PARITY in principle.
Simultaneously, they empirically confirmed that such a solution could not be practically found via (S)GD training.
Various other formal models of transformers \citep[e.g.][]{weiss2021thinking,merrill2023logic,merrill2023parallelism,strobl2023averagehard} can also express PARITY despite its empirical difficulty.
As already concluded by \citet{chiang2022overcoming}, these findings highlight a disconnect between expressive capacity and learnability: not all functions which transformers may express in principle are also learnt efficiently. Evidently, existing expressiveness theory for transformers is not able to consistently account for the practical learnability of problems under gradient descent.

Some prior work has studied the learnability of problems for transformers.
For example, \citet{edelman2022inductive}  bound the statistical capacity of the transformer architecture, showing that on those functions that transformers prefer to represent, they can generalize with good sample efficiency. 
Notably, they found that \emph{sparse} parities could indeed be learned well by transformers.
However, this result does not prove that PARITY, or other highly sensitive functions, are hard to learn, as that technique does not provide a direct characterization of which functions transformers prefer to represent. 
Other work has studied simplified setups such as  linear attention \citep[e.g.][]{ahn2023linear} or individual attention layers \citep[e.g.][]{sanford2023representational}. 

Here, we provide results that  have direct bearing on the learnability of PARITY and other sensitive functions, characterizing the loss landscape of transformers in terms of input-space sensitivity.
We formally prove that, for the transformer architecture, parameter settings achieving high sensitivity in input space are necessarily brittle, so that close neighbors in parameter space will usually define different (typically much less sensitive) functions when inputs are long.
As a consequence, transformers fitting high-sensitivity functions must inhabit very steep minima.
We argue that this explains both difficulty in training and length generalization for PARITY (observed by \citet{bhattamishra2020ability, deletang2022neural, ruoss2023randomized}), and a low-sensitivity and low-degree bias in random initialization and generalization (observed by \citet{abbe2023generalization, bhattamishra2022simplicity}).

\paragraph{Expressiveness theory does not account for learnability.}
While unique hard attention provably cannot represent PARITY \citep{hahn2020theoretical, hao2022formal, angluin2023masked}, more realistic upper bounds accounting for soft attention \citep{weiss2021thinking, merrill2023logic, merrill2023parallelism, strobl2023averagehard, chiang2022overcoming} leave the hardness of sensitive functions unexplained.
Not only does PARITY have transformers \citep{chiang2022overcoming}, but it can also be easily  represented in formalisms that have been suggested to meaningfully upper-bound the abilities of various formal models of soft-attention:\footnote{\citet{zhou2023algorithms}  suggest that PARITY may not be representable in the RASP-L model, though the expressiveness of RASP-L is not well understood.} 
\begin{fact}[Existing theory overpredicting abilities]\label{fact:overpredicting}
    Simple representations for PARITY, valid across all input lengths, exist in RASP \citep{weiss2021thinking}, uniform circuits with majority gates \citep{merrill2023parallelism, strobl2023averagehard}, and FO[M] \citep{merrill2023logic}. 
\end{fact}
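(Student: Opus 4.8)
The claim packages three separate constructive statements, one per formalism, and none of them requires real depth: PARITY lies in (DLOGTIME-uniform) $\mathrm{TC}^0$ by classical results. The plan is therefore to exhibit, for each of RASP, uniform threshold circuits, and FO[M], an explicit representation that is uniform in the input length $\seqlen$, so that the same description works simultaneously at all lengths.

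For RASP \citep{weiss2021thinking}: the first step is to use a select-all attention pattern together with a mean-aggregate to compute, at every position, the fraction $f$ of input tokens equal to $1$; then multiply by the built-in \texttt{length} value to recover the integer count $c$ of ones; then apply the elementwise map $c \bmod 2$. Since RASP permits arbitrary elementwise numerical operations and this program uses only a boolean-predicate selector and a single aggregation, it is a legal RASP program, and it plainly outputs PARITY$(x)$ for every $\seqlen$.

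For uniform majority/threshold circuits \citep{merrill2023parallelism, strobl2023averagehard}: writing $s(x) = \sum_{i=1}^{\seqlen} x_i$, the key observation is that
\[ \mathrm{PARITY}(x) \;=\; \bigvee_{\substack{1 \le k \le \seqlen \\ k \text{ odd}}} \bigl(\, [\, s(x) \ge k \,] \wedge \lnot [\, s(x) \ge k+1 \,] \,\bigr), \]
where each threshold $[\, s(x) \ge k \,]$ is a single threshold gate over the $\seqlen$ inputs. This is a depth-$3$, size-$O(\seqlen)$ threshold circuit whose wiring (the $k$-th disjunct connects only to thresholds $k$ and $k+1$) is trivially computable, hence DLOGTIME-uniform. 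For FO[M] \citep{merrill2023logic}, one can either invoke the equivalence FO[M] $=$ DLOGTIME-uniform $\mathrm{TC}^0$ and reuse the circuit, or write a formula directly: the majority quantifier lets one encode the count $s(x)$ in binary via the $\mathrm{BIT}$ predicate and then test its least-significant bit, which is exactly the parity.

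The main obstacle here is not mathematical but bookkeeping about \emph{fragment membership and uniformity}: checking that the RASP program stays inside the operations admitted by \citet{weiss2021thinking}, that the circuit family is uniform under the precise notion used in \citet{merrill2023parallelism, strobl2023averagehard}, and that the FO[M] formula has quantifier rank independent of $\seqlen$ (which forces the $\mathrm{BIT}$-predicate counting encoding rather than a naive $\seqlen$-fold disjunction). None of these is difficult, but each must be matched carefully against the exact definitions in the cited work.
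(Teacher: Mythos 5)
Your proposal is correct, but it takes a genuinely different route from the paper. The paper proves all three memberships at once via a single characterization (its Lemma~\ref{lemma:folklore-parity}): a string has odd parity iff, for a strict majority of the positions holding a~$1$, more than half of the $1$s occur no later than that position; this nested-MAJORITY predicate is then observed to be directly expressible in RASP, in FO[M] (via relativized majority quantifiers), and hence in uniform majority circuits through the standard FO[M]-to-circuit translation. You instead give three separate standard constructions: count-then-reduce-mod-$2$ in RASP, the exact-count decomposition $\bigvee_{k \text{ odd}}\bigl([\,s(x)\ge k\,]\wedge\lnot[\,s(x)\ge k+1\,]\bigr)$ for threshold circuits, and BIT-based binary counting (or the equivalence with DLOGTIME-uniform $\mathrm{TC}^0$) for FO[M]. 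Both arguments are sound. Your version is more self-contained per formalism and leans on classical facts about symmetric functions in $\mathrm{TC}^0$; the paper's version is more economical and, importantly, is not chosen only for the proof of the Fact --- the nested-majority form is reused later in the paper's discussion, where the two nested MAJORITY operations, each of average sensitivity $\Theta(\sqrt{\seqlen})$, compose to the sensitivity $\Theta(\seqlen)$ of PARITY, illustrating why closure-based expressiveness classes cannot track learnability. Your only remaining obligations are the bookkeeping items you already flag (legality of the elementwise $c \bmod 2$ map in RASP, the precise uniformity notion, and definability of binary counting in FO[M] with BIT), all of which are standard.
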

We prove this in Appendix~\ref{sec:appendix:construct-parity}.
Thus, existing expressiveness bounds do not account for the difficulty that transformers encounter in learning sensitive functions, in particular given that previously-popular recurrent models do not encounter this difficulty. 
Another family of results consists of Lipschitzness bounds \citep{hahn2020theoretical, li2023stability}, which bound the influence that any individual input bit has on the output of a transformer.
These turn out to \emph{underpredict} the abilities of transformers:

\begin{fact}[Existing theory underpredicting abilities]\label{fact:underpredicting}
By results of \citet{hahn2020theoretical, li2023stability}, the following holds:
Consider a transformer without layer norm.
If $x, x' \in \{\pm 1\}^n$ differ only in the $i$-th bit, then at any other position $j\neq i$, the output of a transformer differs only by $\mathcal{O}(\frac{1}{\seqlen})$.
\end{fact}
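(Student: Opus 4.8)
The plan is to recover this bound as a localized version of the Lipschitz-type analysis of \citet{hahn2020theoretical, li2023stability}; the point worth spelling out is why the rate is $\mathcal{O}(1/\seqlen)$ specifically at positions $j \neq i$. Write $h^{(\ell)}_k(x)$ for the hidden state at position $k$ after layer $\ell$ on input $x$, and set $\influence^{(\ell)}_{i\to k} := \sup_{x}\big\|h^{(\ell)}_k(x) - h^{(\ell)}_k(x\flipBit{i})\big\|$, the worst-case effect of flipping bit $i$ on position $k$. First I would record that, without layer norm and with a fixed depth and bounded parameters, every hidden state has norm at most a constant $B$ independent of $\seqlen$: the embeddings (token plus a bounded positional encoding) are $\mathcal{O}(1)$, each attention sublayer outputs a convex combination of value vectors and so does not inflate the norm beyond a constant factor, each feed-forward sublayer is Lipschitz, and the residual stream therefore grows only geometrically in the (constant) depth.

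The crucial consequence is a \emph{dispersion} bound: since the attention logits $s_{jk} = \langle Q h^{(\ell)}_j, K h^{(\ell)}_k\rangle$ lie in a bounded window $[-S,S]$, every softmax weight satisfies $p_{jk} = e^{s_{jk}}/\sum_l e^{s_{jl}} \le e^{2S}/\seqlen = \mathcal{O}(1/\seqlen)$, simply because $\seqlen$ comparable logits are being normalized. I would then show by induction on $\ell$ that $\influence^{(\ell)}_{i\to j} = \mathcal{O}(1/\seqlen)$ for every $j\neq i$, while $\influence^{(\ell)}_{i\to i} = \mathcal{O}(1)$. The base case is immediate since the embedding at $j$ depends only on $x_j$. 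For the inductive step at an attention sublayer feeding a position $j\neq i$, write $v_k,v_k'$ and $p_{jk},p_{jk}'$ for the value vectors and attention weights before and after the flip, and split the change in $\sum_k p_{jk}v_k$ into a value term $\sum_k p_{jk}(v_k'-v_k)$ and a weight term $\sum_k (p_{jk}'-p_{jk})v_k$. The value term is at most $\sum_{k\neq i} p_{jk}\,\mathcal{O}\big(\influence^{(\ell)}_{i\to k}\big) + p_{ji}\,\mathcal{O}\big(\influence^{(\ell)}_{i\to i}\big) = \mathcal{O}(1/\seqlen)$, using the induction hypothesis, the dispersion bound $p_{ji}=\mathcal{O}(1/\seqlen)$ for the $k=i$ contribution, and $\sum_k p_{jk}=1$ for the rest. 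The weight term is bounded by $\|p_j'-p_j\|_1\cdot B$; here the logit perturbation has small total size, because the query side changes by $\influence^{(\ell)}_{i\to j}=\mathcal{O}(1/\seqlen)$ and on the key side only the single coordinate $k=i$ moves by $\mathcal{O}(1)$ while the other $\seqlen-1$ coordinates each move by $\mathcal{O}(1/\seqlen)$, so $\|s_j'-s_j\|_1 = \mathcal{O}(1)$. A softmax sensitivity estimate $\|p_j'-p_j\|_1 \le 2\big(\max_{k,t}\mathrm{softmax}(s_j(t))_k\big)\,\|s_j'-s_j\|_1$ along the line segment $s_j(t)$ from $s_j$ to $s_j'$, combined with dispersion ($\max_k\mathrm{softmax}(s_j(t))_k = \mathcal{O}(1/\seqlen)$, since the segment stays in the bounded window), then yields $\|p_j'-p_j\|_1 = \mathcal{O}(1/\seqlen)$. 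Feed-forward sublayers are position-wise Lipschitz and residuals only add, so across the constantly many layers $\influence^{(\numLayers)}_{i\to j} = \mathcal{O}(1/\seqlen)$ for $j\neq i$, and any final position-wise read-out is Lipschitz and preserves the rate.

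The step I expect to be the main obstacle is the weight term in the inductive step: the naive softmax bound $\|p_j'-p_j\|_1 \le 2\|s_j'-s_j\|_\infty = \mathcal{O}(1)$ is too weak, precisely because flipping bit $i$ moves the key (hence the logit) at position $i$ by $\mathcal{O}(1)$. The fix is exactly the dispersion bound: a softmax all of whose weights are $\mathcal{O}(1/\seqlen)$ cannot redistribute more than $\mathcal{O}(1/\seqlen)$ total mass when one logit moves by $\mathcal{O}(1)$ and the rest by $\mathcal{O}(1/\seqlen)$. Making the path estimate for $\|p_j'-p_j\|_1$ rigorous (e.g.\ integrating $\partial p_k/\partial s_l = p_k(\delta_{kl}-p_l)$ along $s_j(t)$) and checking that the implicit constants stay independent of $\seqlen$ as they compound over the fixed number of layers and heads is the only delicate bookkeeping; everything else is routine.
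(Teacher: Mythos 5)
Your argument is correct and is essentially the standard Lipschitzness analysis that the paper merely cites (it gives no proof of Fact~\ref{fact:underpredicting} itself): your value-term/weight-term split of the head output is exactly the decomposition in the paper's Lemma~\ref{lemma:head-influence-bound}, and your dispersion-plus-softmax-derivative handling of the weight term is the same mechanism as Lemma~\ref{lemma:influence-att-higher}, where the $1/\seqlen$ rate likewise comes from the denominator $\sum_y c_y \geq \seqlen e^{-C_A}$. The path-integral bound $\|p_j'-p_j\|_1 \leq 2(\max_{k,t}p_k(s_j(t)))\|s_j'-s_j\|_1$ you flag as delicate does go through, since the segment stays in the bounded logit window, so no gap remains.
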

This accounts for the difficulty of learning PARITY.
But the bound suggests even simple sparse functions, such as FIRST (the language $1(0|1)^*$) to be difficult, but transformers learn these well \citep{bhattamishra2022simplicity, edelman2022inductive}. Indeed, \citet{chiang2022overcoming} note that the bound is overcome by layer norm or input-length-dependent scaling of attention logits, which enable modeling of sparse functions.

We will show that the observed low-sensitivity bias can be understood in terms of the \emph{loss landscape}: while transformers can express highly sensitive functions, such transformers are isolated in parameter space, and minima interpolating a sensitive function are very sharp.
Indeed, we prove that tiny perturbations of a highly sensitive transformer tend to define, when inputs are sufficiently long, very different functions with much lower sensitivity.

\section{Model of Transformers}

We will focus on boolean functions.
Following the conventions in the Analysis of Boolean Functions literature \citep{odonnell2014analysis} in modeling bitstrings as elements of $\{-1,1\}^\seqlen$, we assume the alphabet $\Sigma = \{-1,1\}$, with word embeddings $e(-1), e(+1) \in \mathbb{R}^d$.
There further are positional encodings $p_1, p_2, p_3, \dots \in \mathbb{R}^d$.
At the zero-th layer, token and positional encodings are added: $y_i^{(0)} := e(x_i) + p_i$ ($i=1, \dots, n$), where $x \in \{\pm 1\}^\seqlen$ is the input string.

A transformer has a fixed number $L$ of \textbf{layers}; the \textbf{activations} $y_i^{(k)} \in \mathbb{R}^d$ at position $i$ of the $k$-th layer ($k=1, \dots, \numLayers$) are defined as follows.
Each layer has a set of $H$ \textbf{attention heads}; we first compute attention scores for the $h$-th head:
\begin{align*}
    a_{i,j}^{(k,h)} =& (K_{k,h} y_j^{(k-1)})^T Q_{k,h} y_i^{(k-1)} \\
    \weightmac{i}{j}{k}{h} =& \frac{\exp(a_{i,j}^{(k,h)})}{\sum_s a_{i,s}^{(k,h)}}
\end{align*}
where $K_{k,h}$ (``key''), $Q_{k,h}$ (``query'') are $\in \mathbb{R}^{d\times d}$.
The  activation of the head is computed by weighting according to attention weights $\hat{a}_{i,j}^{(k,h)}$, and applying a linear transformation $V$ (``value''):
\begin{equation}
    \headmac{i}{h}{k} =  \sum_{j=1}^{\seqlen}\hat{a}_{i,j}^{(k,h)}  V_{k,h} \ypost_j^{(k-1)} 
\end{equation}
The  per-position activations are then computed as
\begin{equation}\label{eq:def:ypre}
\ypre_i^{(k)} :=  f^{MLP}\left(\ypost_i^{(k-1)} +  \sum_{h=1}^H \headmac{i}{h}{k}\right)
\end{equation}
where $f^{MLP}$ is a one-layer MLP with a skip-connection.
Transformers additionally implement layer norm \citep{DBLP:journals/corr/BaKH16}:
\begin{equation}\label{eq:def:layer-norm}
    LayerNorm(y) := \frac{y-mean(y)}{\sqrt{\sigma^2(y)+\epsilon}}
\end{equation}
where $\epsilon\geq 0$ is a hyperparameter ensuring numerical stability, and $\sigma^2(\cdot)$ denotes the variance.
By design, $\|LayerNorm(y)\|_2 \leq \sqrt{d}$, with equality at $\epsilon=0$.
Transformer variants differ in where exactly layer norm is applied \citep[e.g.][]{takase2022layer}; we here assume for notational simplicity that layer norm applies after the MLP, but the details are irrelevant to our results, provided layer norm applies at least once. We thus set:
\begin{equation}\label{eq:layer-norm-and-mlp}
    \ypost_i^{(k)} := LayerNorm\left(\ypre_i^{(k)}\right) % f^{MLP}(
\end{equation}
Of key importance will be the the normalization factor:
\begin{equation}\label{eq:def:ln-fctor}
    \normFactor_i^{(k)} := \frac{1}{\sqrt{\sigma^2(\ypre_i^{(k)})+\epsilon}} 
\end{equation}
Our theoretical results will link $\normFactor_i^{(k)}$ both to input-space sensitivity and parameter-space sharpness:
We will find that large values of $\normFactor_i^{(k)}$ can increase expressive capacity, but at the price of increased brittleness.

Finally, we assume that predictions are made by  $T := v_{out}^T \cdot \ypost_\seqlen^{(\numLayers)}$ for some parameter $v_{out} \in \mathbb{R}^d$. 
Throughout, we will add the input string $x \in \{\pm 1\}^\seqlen$ as an argument when needed for disambiguation, e.g., writing $T(x)$ for the overall prediction made on $x$.

\section{Average Sensitivity}

Our results are centered around \emph{average sensitivity}, a simple but foundational complexity metric for functions on the Boolean cube~\citep[e.g.][]{kahn1988the,de2008brief,odonnell2014analysis}:
\begin{defin}
For a bitstring $x \in \{\pm 1\}^n$ and a function $f : \{\pm 1\}^{\seqlen}\rightarrow \mathbb{R}$, we write
\begin{equation}\label{eq:def:s}
s(x,f) := \frac{1}{4} \sum_{i=1}^n |f(x) - f(x^{\flipBit{i}})|^2
\end{equation}
where $x^{\flipBit{i}}$ denotes the bitstring $x$ with the $i$-th bit flipped.
The average sensitivity for inputs of length $\seqlen$ is
\begin{equation}
    as_n(f) := \frac{1}{2^n} \sum_{x \in \{\pm 1\}^\seqlen} s(x,f)
\end{equation}
\end{defin}
If $f$ maps to $\{\pm 1\}$, then $s(x,f)$ is the number of Hamming neighbors of $x$ on which $f$ flips.
This definition of $as_n(f)$ corresponds to the ``total influence'' from \citet[][Def. 2.27]{odonnell2014analysis}.
We explicitly define average sensitivity relative to input length $\seqlen$, as we will investigate the behavior of transformers performing a single function $f$ across varying input lengths.
The use of squared distances, rather than simple absolute distances, ensures that results about $as_\seqlen(f)$ transfer to results about degree profiles \citep{abbe2023generalization}, which we will later investigate (Eq.~\ref{eq:sensitivity-fourier}).

Average sensitivity is a general complexity metric with wide-ranging applications in theoretical computer science \citep[e.g.][]{jukna2012boolean}.
It is closely linked to the Fourier analysis on the Boolean cube \citep{de2008brief}, and is an average-case version of a family of sensitivity measures, closely related to other natural metrics such as decision tree depth and polynomial degree \citep{hatami2010variations}.
Both average sensitivity itself \citep{bhattamishra2022simplicity} and the Fourier structure \citep{abbe2023generalization} have been empirically linked to transformers' generalization behavior. We will ground these empirical findings by relating average sensitivity to loss landscapes for the transformer architecture.

\paragraph{Example Functions}
Our theoretical results will apply to general functions on the Boolean cube.
In order to ground these, we will illustrate transformers' low-sensitivity bias at the example of a few natural functions which have played a role in the theoretical literature of transformers or are otherwise illustrative of variability in average sensitivity.

\textbf{PARITY} indicates whether the number of ones in a bitstring is even (output 1) or odd (output -1); over the input space $\{\pm 1\}^\seqlen$ and output space $\{\pm 1\}$, it can be formally defined as the map $x \mapsto \prod_{i=1}^\seqlen x_i$.
As flipping any input bit flips the output, $as_\seqlen(f) = n$.
As described above, this function can in principle be represented by transformers, but has empirically been found to be very hard to learn. 

\textbf{MAJORITY} maps $x \in \{\pm 1\}^\seqlen$ to 1 if $\#\{i: x_i=1\} > \seqlen/2$, and $-1$ else.
Transformers show good length generalization \citep{DBLP:journals/tacl/MerrillSS22, zhou2023algorithms}.
It is known that $as_\seqlen(f)  = \Theta(\sqrt{\seqlen})$ \citep[Ex. 2.22]{odonnell2014analysis}.
However, $s(f,x) = \seqlen$ whenever the ones and zeros in $x$ are almost fully balanced.

\textbf{FIRST} maps $x$ to its first bit, $x_1$.
As only the first bit matters, $as_\seqlen(f) = 1$. 
It is a simple example of a \emph{sparse} function; more generally, a $k$-PARITY is a restriction of the PARITY function to only $k$ inputs, where $k$ is a constant.
Transformers learn such sparse functions well \citep{bhattamishra2022simplicity, edelman2022inductive}. 

\textbf{MEAN} maps $x \mapsto \frac{1}{\seqlen} \sum_{i=1}^\seqlen x_i \in [-1,1]$. We have $as_\seqlen(f) = \frac{1}{\seqlen}$.

The PARITY function, when varying the number of bits considered, is a universal basis for Boolean functions, in the sense that any function $\{\pm 1\}^n \rightarrow \mathbb{R}$ can be represented as a linear combination of parities applying to different subsets of $\{x_1, \dots, x_\seqlen\}$.
Functions are more sensitive when parities applying to larger subsets appear in this representation.
We will investigate this connection further below.

\section{Lower Bounds for Sensitive Functions}

We first prove that representing sensitive functions with transformers requires large parameter norms and, when inputs get longer and longer, highly unbounded normalization factors $\normFactor_i^{(k)}$ in layer norm (\ref{eq:def:ln-fctor}).
We start from the global Lipschitzness bounds developed by \citet{hahn2020theoretical, edelman2022inductive, li2023stability}, but  obtain more fine-grained average-case and high-probability bounds. These will then form the basis of our characterization of loss landscape sharpness around sensitive transformers.
Our bounds will include a constant $C$ that is the product of
\begin{equation}\label{eq:def:c}
    \exp\left(4d \max_h \sum_{i=2}^L \|K_{i,h}^TQ_{i,h}\|_{2}\right)
\end{equation}
and a term polynomial in $H$, $d$, the spectral norms of all parameter matrices appearing in the transformer, and the maximum norm of any positional or word embedding.
See (\ref{eq:def:c-appendix}) in the Appendix for formal definition.

Existing Lipschitzness bounds (Fact~\ref{fact:underpredicting}) imply a bound along the lines of\footnote{Lipschitzness bounds by \citet{edelman2022inductive} are not directly applicable here, as these authors consider Lipschitzness in the \emph{parameter space}, not the effect of changes in the \emph{input space} as Theorem 3. See also Appendix, Remark~\ref{remark:comparison-head-edelman}.
}
\begin{equation}\label{eq:theorem-first}
s(f,x) \leq  \frac{C \exp(4d\max_h\|K_{1,h}^TQ_{1,h}\|_{2})}{\epsilon^{\numLayers/2}}
\end{equation}
uniformly for each $x \in \{\pm 1\}^\seqlen$.
\citet{li2023stability} noted that the exponential dependency on the spectral norm of the key-query matrix is unavoidable for a global Lipschitzness bound.
Our first result is that this dependency can be eliminated at the input layer for the vast majority of inputs, at the cost of a logarithmic factor, leading to a bound of the form (assuming $\epsilon > 0$ in (\ref{eq:def:layer-norm})) 
\begin{equation}\label{eq:theorem-second}
	s(f,x) \leq C \frac{\log{\seqlen}}{\epsilon^{\numLayers/2}}
\end{equation}
for $1-H\seqlen^{-2}$ of inputs.
We show this using a concentration bound argument, combining a Chernoff bound applying to each attention weight individually, with a union bound over all attention weights (Appendix, Lemma~\ref{lemma:attention-bound-first-layer}).
Next, we address the role of layer norm.
\citet{chiang2022overcoming} showed that, at $\epsilon\rightarrow 0$ in (\ref{eq:def:layer-norm}), layer norm enables transformers to represent PARITY. Lipschitzness bounds in terms of $\epsilon$ (\ref{eq:theorem-second}) cease being meaningful in this limit. 
We thus study the layer-norm induced blowup in more detail. 
In each layer, we consider the maximum blowup, given an input string $x \in \{\pm 1\}^\seqlen$:
\begin{equation}\label{eq:def:tau}
    \tau^{(k)}(x) := \max_{w=1,\dots,\seqlen} \{ 1+\normFactor_w^{(k)}(x)\} 
\end{equation}
The addition of 1 is for technical reasons (Appendix, Lemma~\ref{lemma:influence-layer-norm}); it has little impact in the cases relevant to our results, which will be when $\tau^{(k)}(x) = \omega(1)$.
We then write $\Blowup(x) := \prod_{k=1}^L \tau^{(k)}(x)$, an upper bound on the product of the successive layer-norm-induced blowups when going from the input to the output.
When $\epsilon=0$ in (\ref{eq:def:layer-norm}), $\Blowup(x)$ can be arbitrarily large.
Foreshadowing Theorem~\ref{thm:lrho-bound}, we will find that large values of $\Blowup(x)$ create very sharp minima.

Our first theorem localizes the layer norm blowup to the Hamming neighborhoods of sensitive inputs:
\begin{thm}[Local Bounds on Layer Norm Blowup]\label{thm:sensitivity-blowup-bound}
Consider a transformer with layer norm at arbitrary $\epsilon \geq 0$. 
With probability $1-\frac{H}{\seqlen^{2}}$ over the choice of $x \in \{\pm 1\}^\seqlen$, we have
\begin{equation}\label{eq:main-formula-localized}
    \frac{s(f,x)}{C\sqrt{\seqlen\log\seqlen}}  
    \leq  \Blowup(x)^2 +\frac{1}{\seqlen} \sum_{i=1}^{\seqlen} \Blowup(x\flipBit{i})^2 
\end{equation}
\end{thm}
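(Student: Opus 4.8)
The plan is to expand $s(f,x)=\tfrac14\sum_{i=1}^{\seqlen}|T(x)-T(x\flipBit{i})|^2$ and to bound, for each $i$, the effect of the bit flip $x\mapsto x\flipBit{i}$ on the read-out $T=v_{out}^\top\ypost_\seqlen^{(\numLayers)}$ by following how the layer-$0$ perturbation at position $i$ propagates to position $\seqlen$ of layer $\numLayers$. The engine is a per-layer influence-propagation estimate (Appendix, Lemma~\ref{lemma:influence-layer-norm}): passing from layer $k-1$ to layer $k$, the discrepancy $\|\ypost_w^{(k)}(x)-\ypost_w^{(k)}(x\flipBit{i})\|$ is controlled by the layer-$(k-1)$ discrepancies up to (i) a factor from the attention/value/MLP maps — which for $k\ge2$ is bounded because the layer-normalized activations keep the attention logits within $d\|K_{k,h}^\top Q_{k,h}\|_2$, so the resulting $\exp(4d\|K_{k,h}^\top Q_{k,h}\|_2)$ term, together with the polynomial-in-parameters factors at every layer, is absorbed into $C$ — and (ii) the layer-norm amplification, bounded by $\tau^{(k)}$ on the two runs. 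Iterating over the $\numLayers$ layers and using $\tau^{(k)}\ge1$, the products of $\tau$'s taken from the two runs are dominated — crudely, but $\numLayers$ is constant — by $\Blowup(x)^2+\Blowup(x\flipBit{i})^2$. For the layers $k\ge2$ this is everything; the one subtlety is the first layer.

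At the first layer the activations $\ypost_w^{(0)}=e(x_w)+p_w$ are not yet normalized, so a direct bound would cost $\exp(\mathcal{O}(\|K_{1,h}^\top Q_{1,h}\|_2))$, which by \citet{li2023stability} is unavoidable for a uniform Lipschitz bound. The way around it is that for $i\neq\seqlen$ the perturbation at position $i$ can reach any other position $w$ at layer $1$ only through the attention edge of weight $\weightmac{w}{i}{1}{h}$, and for a typical input these weights are of order $1/\seqlen$; this damping then persists through the higher layers (where the attention weights are bounded by $\mathcal{O}_C(1)/\seqlen$ by the previous paragraph), so $|T(x)-T(x\flipBit{i})|$ is correspondingly small. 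Making this rigorous is the role of Appendix Lemma~\ref{lemma:attention-bound-first-layer}: a Chernoff bound on an individual first-layer attention weight shows it is $\mathcal{O}(\log\seqlen/\seqlen)$ except with probability $\le\seqlen^{-3}$, and a union bound over the $\le H\seqlen$ head/position pairs makes this hold simultaneously with probability $\ge1-H\seqlen^{-2}$. This is precisely where the failure probability and the logarithmic factor in the statement come from, and where the exponential dependence on the first-layer key--query norm is shed. The only position for which no such damping is available is $i=\seqlen$, where the perturbation sits at the read-out position itself; that single term is handled not by propagation but by the trivial bound $|T(x)-T(x\flipBit{\seqlen})|\le 2\|v_{out}\|\sqrt d$ coming from the layer-norm output bound.

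It remains to sum over $i$. On the high-probability event the $\seqlen-1$ terms with $i\neq\seqlen$ carry an $\mathcal{O}(\log\seqlen/\seqlen)$-type dilution and, after bounding the two-run $\tau$-products by $\Blowup(x)+\Blowup(x\flipBit{i})$ and using $\sum_i\Blowup(x\flipBit{i})^2=\seqlen\cdot\tfrac1\seqlen\sum_i\Blowup(x\flipBit{i})^2$, contribute a quantity of the form $C^2\,\mathrm{poly}(\log\seqlen)\bigl(\Blowup(x)^2+\tfrac1\seqlen\sum_i\Blowup(x\flipBit{i})^2\bigr)$; the single $i=\seqlen$ term and the small-$\seqlen$ regime (where the concentration step or the probability bound is vacuous) are absorbed via $\Blowup\ge1$, $\mathrm{poly}(\log\seqlen)\le\sqrt{\seqlen\log\seqlen}$ for large $\seqlen$, and the universal bound $s(f,x)\le\seqlen\|v_{out}\|^2 d$, after enlarging $C$ if necessary. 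The generous $\sqrt{\seqlen\log\seqlen}$ prefactor is exactly the slack that lets these crude manipulations — and the passage from two-run products of $\tau$'s to the clean neighbourhood form — go through. The main obstacle is the first-layer concentration step: compressing the exponential in $\|K_{1,h}^\top Q_{1,h}\|_2$ into a logarithm \emph{uniformly over all attention weights}, together with the bookkeeping that ensures the two runs' layer-norm blowups recombine into a bound referring only to $\Blowup$ at $x$ and at its Hamming neighbours.
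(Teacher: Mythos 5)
Your overall skeleton matches the paper's: propagate the influence of a bit flip layer by layer, collect the layer-norm amplification factors on the two runs into $\Blowup(x)\Blowup(x\flipBit{i})$ and convert the product to a sum at the end, use the bounded logits to absorb the higher layers into $C$, and handle the first layer by a concentration-plus-union-bound argument that sheds the exponential dependence on $\|K_{1,h}^TQ_{1,h}\|_2$. However, the first-layer step --- the crux of the theorem --- is mischaracterized in a way that breaks the argument. Lemma~\ref{lemma:attention-bound-first-layer} does \emph{not} assert that an individual attention weight $\weightmac{w}{i}{1}{h}$ is $\mathcal{O}(\log\seqlen/\seqlen)$ with high probability; that claim is false in general (take logits depending only on position, so that attention concentrates on position $1$ for every input). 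What the lemma bounds is the doubly-summed influence $\sum_{q}\sum_{j}\influence_q[\weightmac{i}{j}{1}{h}]=\mathcal{O}(\log\seqlen)$, i.e.\ the total effect of all single-bit flips on the entire softmax distribution. Relatedly, a flip at position $i$ does not reach position $w$ "only through the edge $\weightmac{w}{i}{1}{h}$": it perturbs \emph{every} weight $\weightmac{w}{j}{1}{h}$ through the softmax denominator, and it is precisely this second channel, $\sum_j\influence_i[\weightmac{w}{j}{1}{h}]$, that requires the Chernoff argument (the first channel sums to $\sum_i\weightmac{w}{i}{1}{h}=1$ trivially).

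Because only the \emph{sum over $i$} of the first-layer influences is controlled --- not each term individually --- your claimed per-$i$ dilution of order $\log\seqlen/\seqlen$ is unsupported, and the bound you would obtain from it, $s(f,x)\lesssim C\,\mathrm{poly}(\log\seqlen)\cdot(\cdots)$, is strictly stronger than the theorem and does not follow from the available lemmas. The missing ingredient is the step that converts $\sum_i\influence_i[T]^2$ into $\sum_i\influence_i[T]$ using boundedness of the (layer-normed) activations, i.e.\ $\influence^2\leq 2(\max_{x'}\|f(x')\|_2)\,\influence$, followed by Cauchy--Schwarz to decouple $\Blowup(x\flipBit{i})$ from the influence sum; this is exactly where the $\sqrt{\seqlen}$ in the $\sqrt{\seqlen\log\seqlen}$ prefactor comes from, rather than being "generous slack." Your treatment of the higher layers, of the $i=\seqlen$ term, and of the recombination of the two runs' $\tau$-products into $\Blowup(x)^2+\frac{1}{\seqlen}\sum_i\Blowup(x\flipBit{i})^2$ is consistent with the paper, but the first-layer mechanism and the summation over $i$ need to be redone along the lines above.
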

The proof is in Appendix~\ref{sec:app:pointwise}.
This permits us to state a bound on average sensitivity, in terms of the average layer norm blowup:
\begin{corollary}[First Main Result]\label{thm:bigtheorem}
Consider a transformer with layer norm at arbitrary $\epsilon \geq 0$.
Then
\begin{equation}\label{eq:theorem-third}
 C \cdot  \mathbb{E}[\Blowup(x)^2] \geq       \frac{as_\seqlen(f)}{\sqrt{\seqlen\log{\seqlen}}}  -   \frac{H}{n}
\end{equation}
where the expectation is over the uniform distribution over $x\in\{\pm 1\}^\seqlen$.
\end{corollary}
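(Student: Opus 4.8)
The plan is to obtain Corollary~\ref{thm:bigtheorem} by simply averaging the pointwise estimate of Theorem~\ref{thm:sensitivity-blowup-bound} over a uniformly random input $x\in\{\pm 1\}^\seqlen$, treating separately the small-probability event on which that estimate is not guaranteed to hold. Let $E\subseteq\{\pm 1\}^\seqlen$ denote the event on which~(\ref{eq:main-formula-localized}) is valid, so that $\Prob[E]\geq 1-H/\seqlen^2$ by Theorem~\ref{thm:sensitivity-blowup-bound}. Writing $as_\seqlen(f)=\E[s(f,x)]$ and splitting according to $E$,
\begin{equation*}
    as_\seqlen(f) = \E[s(f,x)\,\mathbf{1}_{E}] + \E[s(f,x)\,\mathbf{1}_{E^c}] .
\end{equation*}

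On $E$ I would apply~(\ref{eq:main-formula-localized}) and then drop the indicator (legitimate since $\Blowup(\cdot)^2\geq 0$), obtaining
\begin{equation*}
    \E[s(f,x)\,\mathbf{1}_{E}] \leq C\sqrt{\seqlen\log\seqlen}\,\E\!\left[\Blowup(x)^2 + \tfrac{1}{\seqlen}\sum_{i=1}^{\seqlen}\Blowup(x\flipBit{i})^2\right].
\end{equation*}
The key step is a symmetry observation: for each fixed $i$, the bit flip $x\mapsto x\flipBit{i}$ is a measure-preserving bijection of the uniform distribution on $\{\pm 1\}^\seqlen$, hence $\E[\Blowup(x\flipBit{i})^2]=\E[\Blowup(x)^2]$ and the bracketed expectation equals $2\,\E[\Blowup(x)^2]$. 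For the complementary term, since $f$ is $\{\pm 1\}$-valued (more generally, bounded in $[-1,1]$) one has the crude a priori bound $s(f,x)\leq\seqlen$ pointwise, so $\E[s(f,x)\,\mathbf{1}_{E^c}]\leq\seqlen\,\Prob[E^c]\leq H/\seqlen$. Combining, $as_\seqlen(f)\leq 2C\sqrt{\seqlen\log\seqlen}\,\E[\Blowup(x)^2]+H/\seqlen$; dividing by $\sqrt{\seqlen\log\seqlen}$, absorbing the factor $2$ into the polynomial part of $C$, and using $\seqlen^{-1}(\seqlen\log\seqlen)^{-1/2}\leq\seqlen^{-1}$ to replace the resulting error term by $H/\seqlen$, yields~(\ref{eq:theorem-third}).

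All of the real work is already packed into Theorem~\ref{thm:sensitivity-blowup-bound} (and the concentration/Lipschitz lemmas it rests on), so I do not expect a serious obstacle here; the derivation is essentially bookkeeping. The only two points that need a little care are (i) controlling the contribution of the failure event $E^c$, which requires nothing more than the trivial bound $s(f,x)\leq\seqlen$ valid for bounded-range $f$, and (ii) tracking constants — specifically, the factor $2$ from the two blowup terms (harmlessly absorbed into $C$) and checking that the error term is genuinely $O(H/\seqlen)$, which holds \emph{a fortiori} since it is in fact $O(H/\seqlen^{3/2})$ before the crude bound.
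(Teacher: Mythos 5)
Your proposal is correct and follows essentially the same route as the paper's proof: average the pointwise bound of Theorem~\ref{thm:sensitivity-blowup-bound} over $x$, use the fact that each bit flip $x\mapsto x\flipBit{i}$ is a measure-preserving bijection so that $\mathbb{E}[\Blowup(x\flipBit{i})^2]=\mathbb{E}[\Blowup(x)^2]$, and control the failure event via the crude bound $s(f,x)\leq\seqlen$ together with $\Prob[E^c]\leq H/\seqlen^2$. If anything, you are slightly more careful than the paper in explicitly absorbing the resulting factor of $2$ into $C$.
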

The proof is in Appendix~\ref{sec:app:average}.
Note that $\frac{H}{n}$ is small when $n$ is large, and the bound is dominated by $\frac{as_\seqlen(f)}{\sqrt{\seqlen\log{\seqlen}}}$.
This result thus shows a tradeoff between parameters and LN blowup: at least one of them needs to be large to represent a sensitive function. When the sensitivity depends on the input length and grows faster than $\sqrt{n \log n}$, this tradeoff changes with the input length, requiring larger parameters or larger layer norm blowup as the input length increases.

Let us investigate the implications for the functions introduced in Section above.
For PARITY, $as_\seqlen(f) = n$, and
(\ref{eq:theorem-third}) predicts
\begin{equation}
C \cdot  \mathbb{E}[\Blowup(x)^2] =  \Omega\left(\frac{\sqrt{\seqlen}}{\log{\seqlen}}\right) = \omega(1)
\end{equation}
showing that, for fixed parameters, the layer norm blowup needs to increase as the input length increases.
For the other functions, the bound is $O(1)$:
For FIRST, $s(f,x) = as_\seqlen(f) = 1$, and  the RHS of (\ref{eq:theorem-third}) is $O(1)$. 
Indeed, sparse functions can be modeled well by a family of transformers where the logits in the input layer scale with $\log{\seqlen}$ \citep{edelman2022inductive, chiang2022overcoming}.
Unlike the prior Lipschitzness bounds (\ref{eq:theorem-first}), these scaled logits do not contribute to $C$ -- our new bound is thus consistent with the ease with which transformers learn sparse functions.
For MEAN, $s(f,x) \sim \frac{1}{\seqlen^2}$; again no blowup is predicted.
For MAJORITY, as $as_\seqlen(f) \sim \sqrt{\seqlen}$, no nontrivial average blowup is predicted.
However, $s(f,x) = \seqlen$ whenever the ones and zeros in $x$ are almost fully balanced; on such strings or their Hamming neighbors, (\ref{eq:main-formula-localized}) predicts $Blowup = \Omega(\seqlen^{1/4})$.

\section{Sensitive Transformers are Brittle}

Leveraging Corollary~\ref{thm:bigtheorem}, we now show that transformers expressing sensitive functions must be very sensitive to small perturbations of model parameters.
That is, \emph{sensitivity in input space} entails \emph{sensitivity in parameter space}.
An important consequence is that, for any highly sensitive function, any interpolating minima will be very sharp.
This property is nontrivial, as seen from the fact that it disappears when adding a scratchpad (see below).

Given a parameter vector $\theta$ defining the transformer $T_\theta$, the \emph{average direction sharpness} is \citep[e.g.][]{DBLP:journals/corr/abs-2211-05729,jiang2019fantastic,foret2020sharpness}
\begin{equation}\label{eq:def:lrho}
    L_{\rho,\seqlen}(T) := \mathbb{E}_{x \in \{\pm 1\}^\seqlen} \mathbb{E}_{\|\Delta\|_2 = \rho} (T_{\theta+\Delta}(x) - T_{\theta}(x))^2
\end{equation}
where the second expectation is over the radius-$\rho$ sphere in the space of parameter vectors.
$L_{\rho,\seqlen}$ quantifies the change in $T_\theta$ when perturbing $\theta$ by a size-$\rho$ vector in a random direction.
Lower bounds on $L_{\rho,\seqlen}$ (Theorem~\ref{thm:lrho-bound}) immediately entail lower bounds on other common sharpness measures, such as worst-case sharpness, and (in the limit $\rho\rightarrow 0$) the trace of the loss function's Hessian at $\theta$.

In defining the parameter vector $\theta$ in (\ref{eq:def:lrho}), we exclude positional encodings, both because they are often frozen, and because their number depends on $\seqlen$, hindering fair comparison across $\seqlen$.

Our next result lower-bounds $L_{\rho,\seqlen}$ in terms of the average sensitivity, provided the transformer is sufficiently wide in relation to its depth:
\begin{thm}[Second Main Result]\label{thm:lrho-bound}
Let $T_\theta$ be a transformer where $d > 12L$.
Assume $T_\theta(x) \in [-1,1]$ for each $x$.
Then:
\begin{equation}\label{eq:lrho-blowup}
  \lim_{\rho \rightarrow 0} \liminf_{n\rightarrow \infty}   L_{\rho,\seqlen}(T) \geq \liminf_{n\rightarrow \infty} \frac{as_n(T_\theta)}{2n} - \numLayers\exp(-\Omega(d))
  \end{equation}
  Here, ``$\Omega(d)$'' scales positively with $d$.
  If $T_\theta$ has Boolean outputs ($T_\theta(x) \in \{\pm 1\}$), 
  then the factor ``2'' can be eliminated from (\ref{eq:lrho-blowup}).
\end{thm}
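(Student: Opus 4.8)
The plan is to reduce the sharpness lower bound to the layer-norm blowup bound of Corollary~\ref{thm:bigtheorem} by a contrapositive/dichotomy argument. The key observation is that if a transformer $T_\theta$ has large average sensitivity $as_n(T_\theta)$, then by Corollary~\ref{thm:bigtheorem} it must have large expected squared blowup $\mathbb{E}[\Blowup(x)^2]$ (for any fixed, i.e.\ perturbation-independent, constant $C$ — here we use that $\theta$ is fixed so $C$ is a genuine constant). Large blowup means that on a non-negligible fraction of inputs, some normalization factor $\normFactor_w^{(k)}(x)$ is large, which in turn means the pre-layer-norm activation $\ypre_w^{(k)}(x)$ has tiny variance $\sigma^2(\ypre_w^{(k)}) + \epsilon$. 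The heart of the argument is then: when an activation sits near the degenerate locus where layer norm has a huge derivative, an arbitrarily small random perturbation $\Delta$ of the parameters will, with constant probability, move the normalized output by an $\Omega(1)$ amount, because the map $\theta \mapsto LayerNorm(f^{MLP}(\ypre_w^{(k)}))$ has gradient of norm $\sim \normFactor_w^{(k)}$ in generic directions. Propagating one such $\Omega(1)$ change through the remaining (fixed, hence Lipschitz in their inputs) layers and reading it out through $v_{out}$ yields an $\Omega(1)$ change in $T(x)$, hence a contribution to $L_{\rho,n}(T)$. Summing the probabilities that this happens at \emph{some} layer $k$ and \emph{some} position $w$ — and subtracting a $\numLayers \exp(-\Omega(d))$ failure term that comes from the (low-dimensional) bad directions in which the perturbation fails to be "generic" — gives the stated bound; the normalization by $2n$ (vs.\ $n$) tracks the difference between $s(f,x) = \tfrac14\sum_i|f(x)-f(x^{\oplus i})|^2$ with bounded-but-not-Boolean outputs versus Boolean outputs, exactly as in the Boolean-vs-real distinction already flagged in Theorem~\ref{thm:sensitivity-blowup-bound} and Corollary~\ref{thm:bigtheorem}.

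Concretely I would proceed in the following steps. First, fix $\rho$ small and a large $n$, and let $\Delta$ be uniform on the radius-$\rho$ sphere in parameter space. Second, for each layer $k$ and position $w$, analyze the single-layer map and show: conditioned on $\sigma^2(\ypre_w^{(k)}(x)) + \epsilon$ being small (say $\leq \delta$), the probability over the \emph{direction} of $\Delta$ that $\|LayerNorm(f^{MLP}(\ypre_w^{(k)}(x); \theta+\Delta)) - LayerNorm(f^{MLP}(\ypre_w^{(k)}(x);\theta))\|_2 \geq c$ is at least $1 - \exp(-\Omega(d))$, uniformly in $\rho$ as $\rho\to0$ after rescaling — here the width condition $d > 12L$ ensures enough dimensions remain "generic" even after we have to avoid the $O(L)$-dimensional subspace of bad directions across all layers. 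This is essentially a statement that a sphere in $\mathbb{R}^d$ concentrates away from any fixed low-dimensional subspace, combined with the fact that moving $\ypre$ off the variance-degenerate set by an amount $\eta$ changes $LayerNorm$ by $\sim \eta/\sqrt{\delta}$, which is $\Omega(1)$ once $\eta \gtrsim \sqrt{\delta}$; and $\eta$ scales like $\rho$ times a fixed positive quantity, so the statement survives $\rho\to0$ at fixed magnitude once we measure the perturbed output relative to the unperturbed one (the $\liminf_n$ then $\lim_\rho$ order in the theorem is exactly what lets us first send $n\to\infty$ to make the blowup as large as we want, then $\rho\to0$). Third, show that such an $\Omega(1)$ change at layer $k$, position $w$, survives propagation to the final readout $T = v_{out}^T y_n^{(L)}$: the remaining layers depend on $\theta$ which has only been perturbed by $\rho\to0$, so up to $o(1)$ they act as fixed functions, and — crucially — they are Lipschitz \emph{with a lower bound} in the relevant direction, or at least one can pick among the $w$'s and $k$'s a configuration where cancellation does not occur; more carefully, one routes the argument through the output position $n$ and uses that $as_n$ is itself defined via the output, so the sensitivity that forced the blowup is precisely sensitivity that is visible at the readout. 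Fourth, assemble: $L_{\rho,n}(T) \geq \Prob_x[\exists k,w: \sigma^2(\ypre_w^{(k)}) + \epsilon \leq \delta] \cdot c^2 - (\text{failure terms})$, and lower-bound that probability by $\mathbb{E}[\Blowup(x)^2]$-type quantities via Corollary~\ref{thm:bigtheorem} and Markov/Paley–Zygmund, then feed in $\mathbb{E}[\Blowup(x)^2] \gtrsim as_n(f)/(C\sqrt{n\log n})$; a slightly more careful bookkeeping, tying the blowup at each $(k,w)$ directly to the per-input sensitivity $s(f,x)$ via Theorem~\ref{thm:sensitivity-blowup-bound} rather than through the averaged corollary, is what produces the clean $as_n/(2n)$ on the right-hand side instead of an expression involving $\sqrt{n\log n}$.

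The main obstacle I anticipate is Step~three: controlling the \emph{propagation} of a localized $\Omega(1)$ perturbation of an intermediate activation through the subsequent layers without it being cancelled, and in particular the possibility that the perturbed direction is one to which the final readout $v_{out}$ is insensitive. The clean way around this is to not argue "blowup at some layer $\Rightarrow$ output changes" directly, but rather to argue contrapositively at the level of the \emph{whole} function: if $L_{\rho,n}(T)$ were small for all small $\rho$ and large $n$, then $T_\theta$ would be, in a suitable sense, "structurally stable" — its behavior would be reproduced by a whole neighborhood of parameter vectors — and then run the averaged blowup bound of Corollary~\ref{thm:bigtheorem} not just for $\theta$ but uniformly over that neighborhood to derive a contradiction with large $as_n$; alternatively, one exploits that layer norm is applied at position $n$ in the final layer (or, by the "applies at least once" assumption, at some fixed layer from which only fixed Lipschitz maps and the linear readout remain), so that a blowup event \emph{at that specific spot} immediately yields an $\Omega(1)$ output change with no propagation issue, and one shows that large $as_n$ forces blowup events concentrated enough to hit that spot with non-negligible probability. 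A secondary technical point is making the "$\exp(-\Omega(d))$ with $\Omega(d)$ scaling positively in $d$" bookkeeping honest: each layer contributes one bad direction to avoid, there are $L$ of them, the width is $d$, and spherical concentration gives a failure probability $\le L\exp(-c(d - O(L))) = \numLayers\exp(-\Omega(d))$ precisely under $d > 12L$ with room to spare for the constants hidden in the concentration inequality.
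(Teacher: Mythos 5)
Your proposal correctly identifies several ingredients of the argument: the centrality of the layer-norm blowup, the spherical anticoncentration in $d$ dimensions that produces the $\numLayers\exp(-\Omega(d))$ term, and the role of the order of limits ($n\to\infty$ before $\rho\to 0$). However, the core strategy --- locate a blowup event at some layer $k$ and position $w$, show a small perturbation causes an $\Omega(1)$ change \emph{there}, and propagate it to the readout --- has a gap that you yourself flag as ``Step three'' but do not close, and neither of your proposed workarounds closes it. Workaround (b) fails because Theorem~\ref{thm:sensitivity-blowup-bound} only forces the product over layers of the \emph{maximum over positions} of the normalization factors to be large; nothing localizes the blowup to the final layer at position $\seqlen$, so there is no reason the blowup event ``hits that spot.'' Workaround (a) gestures in the right direction but omits the decisive ingredient: you need a way to convert a statement about the \emph{sensitivity} of the perturbed transformer into a statement about its \emph{$L^2$ distance} from $T_\theta$, and nothing in your write-up supplies that conversion.

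The paper's proof runs in the opposite direction from yours and avoids propagation entirely. It first proves a purely function-space lemma (Lemma~\ref{lemma:as-rmse-diff}, via the Fourier--Walsh decomposition, Parseval, and a degree-threshold projection): if $as_\seqlen(f)\sim\seqlen$ and $as_\seqlen(f')=o(\seqlen)$, then $\sqrt{\mathbb{E}[(f-f')^2]}\gtrsim\sqrt{as_\seqlen(f)/(2\seqlen)}$ --- two functions with very different average sensitivities must be far apart in $L^2$, regardless of how they are computed. It then shows (the anticoncentration step you partially anticipate) that with probability $1-\mathcal{O}(n^{2-(d-1)\zeta/L})-\numLayers\exp(-\Omega(d))$ over $\Delta$, the perturbation pushes every pre-layer-norm activation away from the variance-degenerate locus, so the \emph{perturbed} transformer has bounded blowup and hence, by Theorem~\ref{thm:sensitivity-blowup-bound}, sensitivity $o(n)$; the condition $d>12L$ enters to make $n^{6L/d-1/2}=o(1)$. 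Feeding the resulting bound on $\mathbb{E}_\Delta\mathbb{E}_x\, s(T_{\theta+\Delta},x)/n$ into the lemma yields (\ref{eq:lrho-blowup}) directly, with the factor $2$ coming from the choice $\alpha=2$ and the bound $\mu/(2-\mu/\|f\|_2^2)\geq\mu/2$ (not, as you suggest, from a Boolean-versus-real distinction in Theorem~\ref{thm:sensitivity-blowup-bound}). Without Lemma~\ref{lemma:as-rmse-diff} or an equivalent, your argument cannot be completed as stated: showing that the perturbation disturbs an intermediate activation does not by itself lower-bound $(T_{\theta+\Delta}(x)-T_\theta(x))^2$, since the disturbance may be invisible to the remaining layers and the readout $v_{out}$.
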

Informally, this theorem says that, when $as_n(T_\theta) \sim \seqlen$, then even tiny perturbations to the parameters will, in expectation, lead to a substantial change in predictions on long inputs.
This means that, as the input gets longer, the Hessian of the mean-squared loss at the minimizer fitting a sensitive function has unboundedly large entries.

See Appendix~\ref{appendix:sec:theorem-sharpness} for the proof.
The key idea of the proof is that, if $T_\theta$ is very sensitive in input space, small perturbations to the parameters usually lead to a large drop in sensitivity when $\seqlen$ is large, because they lead to large changes in the layer-norm induced blowup that is needed to represent high-sensitivity functions.
As a consequence, transformers computing sensitive functions are isolated in parameter space. 

The theorem applies when $d$ is substantially larger than $\numLayers$, as is indeed true of typical transformers (e.g., LLaMa 7B has $d=4096$ and $\numLayers=32$).
The convergence of the limits is slower when the parameter-norms, as summarized by $C$, are larger, as larger parameters can increase sensitivity.
However, remarkably, for any fixed transformer, $C$ becomes irrelevant for $L_{\rho,\seqlen}$ in the limit where $n\rightarrow \infty$ (\ref{eq:lrho-blowup}).

While we stated Theorem~\ref{thm:lrho-bound} for an individual transformer, the same statement holds for families of transformers where weights may depend on $\seqlen$, as long as $C$ remains bounded.
A consequence is that scaling attention logits with $\log n$ (used to represent sparse functions by \citet{edelman2022inductive, chiang2022overcoming}) will, at least in the input layer, not mitigate the difficulty of sensitive functions.

\section{Implications}
We discuss how Theorem~\ref{thm:lrho-bound} unifies a range of diverse empirical findings about the behavior of transformers.
\paragraph{Difficulty of PARITY}
For PARITY, $L_{\rho,\seqlen}$ converges to $1$ for large $d$, showing that arbitrarily small perturbations to a transformer computing PARITY will lead to a high loss for sufficiently long inputs.
Previously, \citet{chiang2022overcoming} noted that, for their hand-constructed transformer representing PARITY, small changes to a specific parameter led to a large increase in loss, suggesting that this made the solution impossible to reach with SGD.
Theorem~\ref{thm:lrho-bound} shows that this phenomenon is unavoidable for any transformer representing a high-sensitivity function.
For functions with sublinear average sensitivity, Theorem~\ref{thm:lrho-bound} entails no nontrivial lower bound on sharpness, and no such phenomenon is predicted.

\paragraph{Random Initialization}
A key step in Theorem~\ref{thm:lrho-bound} is to show that $s(T_{\theta+\Delta},x)$ is bounded with very high probability over the choice of $\Delta$ (Appendix, Eq. (\ref{eq:raw-sens-perturbed-over-n})); this immediately entails that high-sensitivity transformers can only inhabit a small volume in parameter space.
This explains why randomly initialized transformers empirically show low average sensitivity, more so than recurrent networks \citep{bhattamishra2022simplicity}. 

\paragraph{Length Generalization}
An important corollary is that, for a sensitive function, length generalization requires exact match to the minimum: the slightest deviation from the exact minimum will, in expectation, lead to failure when inputs get sufficient long, even if the minimum itself represents a length-generalizing solution.
This provides, for the first time, a rigorous explanation why, despite the in-principle existence of length-generalizing transformers, transformers struggle with length generalization for PARITY \citep[e.g.][]{bhattamishra2020ability}.

\paragraph{Generalization on Unseen Data}
Another corollary is that, in expectation, the training loss landscape around an interpolating minimum places a constraint on a function's overall sensitivity, \emph{even if not a single pair of Hamming neighbors} are in the training set.
This is because (\ref{eq:lrho-blowup}) remains true, on average across randomly selected training sets, if replacing the expectation over the full input space with an expectation over the training set in (\ref{eq:def:lrho}).
This means that, for long inputs, flat minima of the training loss generalize with bounded sensitivity.
To the extent that gradient-based training tends to find flatter minima \citep[e.g.][]{Damian2023SelfStab}, this  provides a theoretical justification for the empirical result that transformers'  generalization behavior, when trained on a Boolean function on some training subset of $\{\pm 1\}^\seqlen$, shows a strong bias towards low average sensitivity \citep{bhattamishra2022simplicity}.

\citet{abbe2023generalization}  proposed a \emph{min-degree} bias, that is, a generalization bias towards functions that are linear combinations of functions that each depend on only a few inputs. Any function $f : \{\pm 1\}^{\seqlen}\rightarrow \mathbb{R}$ can be uniquely written as a linear combination of the multilinear monomials $\chi_P(x) := \prod_{i \in P} x_i$, where $P \subseteq \{1, \dots, n\}$: $f(x) = \sum_P \lambda_P \chi_P(x)$.
The coefficients $\lambda_P$ define the Fourier-Walsh transform of $f$.
For $\chi_P$, both its degree as a polynomial, and its average sensitivity, are $|P|$. 
The degree profile, as defined by \citet{abbe2023generalization}, is the tuple $(d_1, ..., d_n)$ where $d_i = \sum_{P : |P|=i} |\lambda_P|^2$. Minimization of degree profile then refers to setting as many of the later entries to zero, and minimizing the size of the last nonzero entry.
\citet{abbe2023generalization} proved inductive biases towards functions with a low degree profile for a random features model and diagonal networks. Evidence in the case of transformers was limited to empirical data on three example functions.
Our results entail prove an bound on degree profiles for the full transformer architecture, because
average sensitivity is a summary statistic of the degree profile \citep{odonnell2014analysis}:
\begin{equation}\label{eq:sensitivity-fourier}
    as_n(f) = \sum_{P \subseteq \{1,\dots,n\}} \lambda_P^2 |P| = \sum_{i=0}^{\seqlen}i \cdot d_i
\end{equation}
Hence, functions with degree profile assigning substantial weight to degrees of order $\sim \seqlen$ are brittle in parameter space, corresponding to very sharp minima.

\paragraph{Intermediate Steps Reduce Sensitivity}
%In the realm of multiplication over finite monoids, foundational to regular languages and the simulation of finite automata  \citep[e.g.][]{liu2022transformers, deletang2022neural, angluin2023masked}, average sensitivity creates a  dichotomy: If $f$ indicates $n$-fold multiplication over a finite monoid, then $as_n(f) = O(1)$ if the monoid is aperiodic and $as_n(f) = \Theta(n)$ else. 
%The theory thus predicts that transformer shortcuts to automata \citep{liu2022transformers} will be brittle for many non-aperiodic automata.
%
PARITY can be solved well with a scratchpad \citep{anil2022exploring,liu2022transformers}.
Existing theoretical accounts of the benefit of intermediate steps for transformers' expressive capacity \citep[e.g.][]{merrill2023expresssive, feng2023towards} do not account for the benefit of intermediate steps for PARITY-like problems: While the theoretical models of transformer expressivenes used in these studies do predict versions with intermediate steps to be easy, they do not predict that computing PARITY in a single step would be hard, due to Fact 1. 
The concept of average sensitivity provides a simple explanation for the benefit of intermediate steps.
Formally, we can consider the problem of simulating a finite automaton with state set $\mathcal{X}$ either translating to the final state ${t}_n$ in one go (standard), or to autoregressively translate it into a sequence of states ${t}_1, ..., {t}_n$ (scratchpad). Then (proof in Appendix~\ref{sec:proof:scratchpad}):

\begin{thm}
Simulating an automaton with scratchpad has sensitivity $\mathcal{O}(1)$ for each autoregressive step.
\label{thm:scratchpad}
\end{thm}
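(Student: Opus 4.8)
The plan is to observe that a single autoregressive step of the scratchpad simulation computes a strictly \emph{local} function of its inputs, so that flipping all but a bounded number of input bits cannot change its output; the $\mathcal{O}(n)$ sensitivity of automaton simulation is entirely an artifact of trying to do it in one shot.

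First I would fix the encoding. Since the state set $\mathcal{X}$ and the input alphabet are finite, encode each automaton state and each input symbol as a bitstring over $\{\pm 1\}$ of fixed length, and let $c$ be the sum of these two lengths — a constant independent of the sequence length $\seqlen$. In the scratchpad formulation, when the model produces the $k$-th state its input consists of the original string $x_1,\dots,x_\seqlen$ together with the states $t_1,\dots,t_{k-1}$ emitted so far, and its target output is $t_k$. By the definition of automaton simulation, $t_k=\delta(t_{k-1},x_k)$, where $\delta$ is the transition function and $t_0$ is the fixed initial state. The key step is then immediate: the function $g_k$ mapping the step-$k$ input $(x_1,\dots,x_\seqlen,t_1,\dots,t_{k-1})$ to $t_k$ depends only on the at most $c$ coordinates encoding $x_k$ and $t_{k-1}$. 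Hence, for every input $z$ to step $k$ and every bit position $i$ outside those $c$ coordinates, $g_k(z)=g_k(z\flipBit{i})$, so $s(z,g_k)\le c$, and therefore the average sensitivity of each step is $\le c=\mathcal{O}(1)$, uniformly in $k$ and $\seqlen$. (If one insists on $\{\pm 1\}$-valued outputs, apply the argument to each of the $\mathcal{O}(1)$ output bits separately and sum; the bound stays $\mathcal{O}(1)$.) One can then contrast this with the one-shot map $x\mapsto t_\seqlen$, whose average sensitivity is $\Theta(\seqlen)$ precisely for non-aperiodic transition monoids, recovering the dichotomy mentioned above.

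I do not expect a substantive obstacle here: locality of the transition relation does all the work. The only care needed is bookkeeping — stating precisely what "a step" is (which tokens lie in context, how states and symbols are bit-encoded), and verifying that the constant $c$ that absorbs the encoding length and the number of output tokens is genuinely independent of both $k$ and $\seqlen$. A secondary point worth spelling out is how the step-$k$ sensitivity is measured: it should be over the Boolean cube of step-$k$ inputs (original string plus previously emitted states), and the bound $s(\cdot,g_k)\le c$ holds pointwise, hence also in expectation.
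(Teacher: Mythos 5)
Your proposal is correct and follows essentially the same argument as the paper: each autoregressive step computes $t_{k}=\delta(t_{k-1},x_k)$, which depends on only a constant number $c$ of input coordinates (those encoding $x_k$ and $t_{k-1}$), so its sensitivity is at most $c=\mathcal{O}(1)$ independent of $\seqlen$. The paper's proof is a terser version of exactly this locality observation.
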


\section{Experiments}

\subsection{Setup}

We conducted experiments to test the predictions made by our theory, specifically assessing predictions regarding loss landscapes and sharpness of the minima.
In all experiments, we use the transformer encoder architecture, using the default implementation in PyTorch \citep{pytorch}. Each model is trained to fit a function $f$ on a specific sequence length $\seqlen$. Each training input $x$ is generated uniformly from $\{\pm 1\}^\seqlen$, and each input bit, treated as a separate token, is embedded using learned token and positional encodings.\footnote{Code is available at \url{https://github.com/lacoco-lab/sensitivity-hardness}.}

The representation of the last token is passed to a linear layer to generate the prediction $T_\theta(x)$. Parameters $\theta$ of the transformer are optimized for MSE loss between $T_\theta(x)$ and $f(x)$ using AdamW \citep{Loshchilov2017DecoupledWD}. For full details on hyperparameters and training setup, refer to Appendix \ref{app:hyperparameters}.

In implementation, we assumed versions of the functions outputting to $\{0,1\}$; we rescaled sharpness values accordingly for comparability with the theory. 

We analyzed models using  the following metrics:
\begin{enumerate}
    \item \textbf{Parameter Norm.} We compute the L2 norm of the entire model's parameter vector,  excluding positional encoding matrices. 
    We discard the norm of positional encodings, so that the norms of the models trained for different sequence lengths are comparable.
    \item \textbf{LayerNorm Blowup.} This metric is computed by computing the maximum normalization factor (\ref{eq:def:ln-fctor}) across the entire layer in each application of layer norm, and take the product over all applications of layer norm. This essentially corresponds to $\Blowup$.\footnote{In the theory, Blowup has an additional 1+... in each factor for technical reasons. This difference is immaterial, as we are interested in situations where $Blowup = \omega(1)$.}
    \item \textbf{Sharpness.} 
    In order to avoid committing to any specific $\rho$, we sample $\Delta$ in (\ref{eq:def:lrho}) not from the radius-$\rho$-sphere, but from a mean-zero Gaussian with STD $\rho = 0.02$.
    We estimate using  $N_{s,p}$ perturbations and $N_{s, b}$ input strings $x$. This provides results equivalent to $L_{\rho,\seqlen}$ in the $\rho\rightarrow 0$ asymptotic, while avoiding committing to any specific $\rho$.
\end{enumerate}

As we are interested in properties of models that compute given functions, runs that did not converge (evaluation MSE higher than $10^{-3}$) were discarded.

\begin{figure}
    \centering
    \includegraphics[scale=0.9]{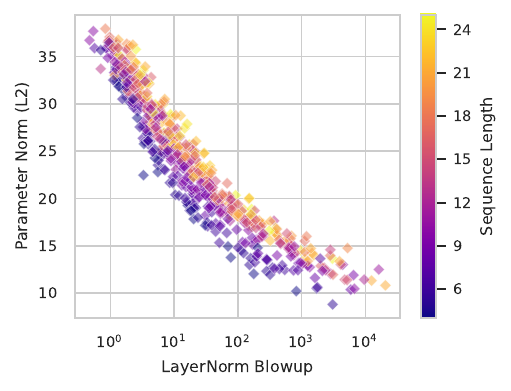}
    \caption{The tradeoff between parameter norm of Transformers trained to approximate PARITY and the blowup of their Layer Normalization layers. The tradeoff depends on the input length; blowup or parameter weights need to increase with the input length (in accordance with Corollary \ref{thm:bigtheorem}). This length dependency is not observed with low sensitivity functions (Appendix, Figures \ref{exp:tradeoff-all-functions-comparison} and \ref{exp:tradeoff-4-lengths}).}
    \label{exp:tradeoff-parity}
\end{figure}

\subsection{Results}

\paragraph{Higher Sensitivity Implies Sharper Minima.}

In this experiment, we train transformers to fit $f \in \{\text{PARITY}, \text{MAJORITY}, \text{FIRST}, \text{MEAN}\}$ on sequence lengths from 4 to 30. For each function and sequence length, we retrain the model 10 times from different random initializations.

For PARITY, sharpness stably increases with the input length (Figure~\ref{exp:scaling-parity-majority}). 
For the other functions, whose sensitivity grows more slowly with $\seqlen$, (a) the absolute value of sharpness is orders of magnitude lower than for PARITY; (b) there is little increase with $\seqlen$.
More results are shown in Appendix \ref{appendix:exp_results}.

\begin{figure}
    \centering
    \includegraphics[scale=0.9]{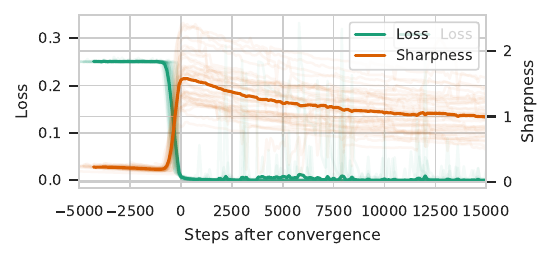}
    \caption{During training a transformer on PARITY, a sudden drop in the loss coincides with an increase in sharpness. Sharpness decreases again in further training, but asymptotes to a nontrivial value (Appendix, Figure~\ref{fig:dynamics-100k}).  See corresponding curves for weight norm and LN Blowup in Appendix, Figure \ref{exp:dynamic-main}.}
    \label{exp:dynamic-small}
\end{figure}

\paragraph{Tradeoff between Weight Norm and Layer Norm Blowup.}

At any fixed input length $\seqlen$, high sensitivity can be achieved by a combination of large weights and a large LayerNorm blowup. 
By Theorem \ref{thm:bigtheorem}, the product of $C$ and squared blowup is bounded from below with some value $B_n(f)$. Hence, the product of $\sqrt{C}$ and blowup is bounded with $\sqrt{B_n(f)}$, and the sum of $\frac{1}{2} \log C$ and $\log \mathrm{Blowup}$ is bounded with $\frac{1}{2} \log {B_n(f)}$. 
However, $C$ depends exponentially on parameters, and thus we expect the parameter norm to trade off with the logarithm of the layer norm blowup.
Moreover, for PARITY the value of $B_n(f)$ increases with $n$, and therefore sum of parameter norms and the logarithm of the blowup should also increase with $n$.

To test this prediction, for each function $f$ we train a set of models with varying sequence lengths $\seqlen$,   weight decay and learning rate parameters. It allows us to obtain datapoints with diverse values of LN Blowup and parameter norm.

Results for PARITY can be seen in Figure \ref{exp:tradeoff-parity}, and for other functions in Figure \ref{exp:tradeoff-all-functions-comparison}. For all functions, there is a clear log Blowup-Parameter Norm tradeoff.
For PARITY, the shape of the tradeoff indeed depends on $\seqlen$, with transformers trained for high $n$ located above others in the log Blowup-Parameter Norm coordinates. For other functions, dependency on $\seqlen$ is not visible, at least at this range of $\seqlen$.

\paragraph{The Limits of Transformers' Generalization.}
As discussed above, Theorem~\ref{thm:lrho-bound} predicts that transformers will generalize with low sensitivity, as low-sensitivity functions will tend to have flatter minima.

To test this prediction, we created random functions $f := \{\pm 1\}^\seqlen \rightarrow \{\pm 1\}$, and sampled random training sets from $\{\pm 1\}^\seqlen$ of size 128/256/512. We fixed $\seqlen=10$.
For each $f$, we train a transformer $T_1$ on the training set and use it to label the whole input space of sequences of length $n$ (1024 objects in our case).
Replicating \citet{bhattamishra2022simplicity}, these extrapolated functions $T_1$ have lower average sensitivity than the original functions $f$, indicating a low-sensitivity bias in generalization (Figure~\ref{fig:generalization}).
Now, in order to directly compare the sharpness of minima corresponding to the true function $f$ and the extrapolated function $T_1$, we trained new transformers to fit both functions on the entire dataset $\{\pm 1\}^\seqlen$, and measured the sharpness for these two new transformers. The results were averaged over 10 random functions $f$, 10 training sets per $f$ and training set size, and 5 new transformers per each $T_1$.
Sharpness was indeed lower for the transformer fitting the extrapolated functions than for the  transformer matching the original random functions $f$.
This also held when measuring sharpness only on the training set (Appendix, Figure~\ref{fig:generalization-sharpness}).

%Taken together, these results are compatible with an alignment between low sharpness and low sensitivity as predicted by Theorem~\ref{thm:lrho-bound}.

\begin{figure}
\includegraphics[scale=0.9]{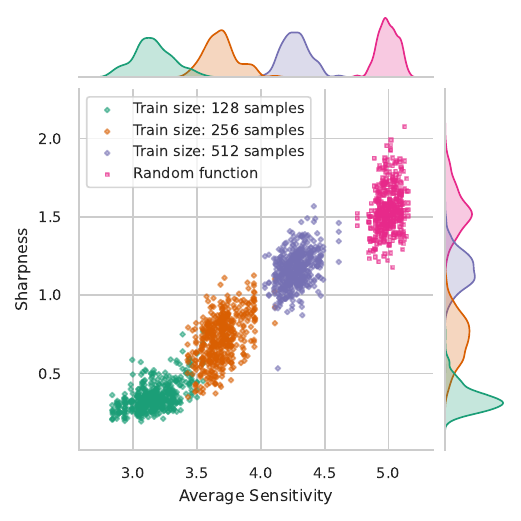}
    \caption{Generalization: When trained on data from a random Boolean function a subset of $\{\pm 1\}^n$ (here: n=10), transformers generalize with reduced sensitivity compared to the actual function. The solutions found have lower sharpness than a solution fitting the actual function. When the training size is smaller, the inferred function is less constrained, and learnt functions have even lower sensitivity.}\label{fig:generalization}
\end{figure}

\paragraph{Scratchpad Eliminates Sharpness.}
By Theorem \ref{thm:scratchpad}, sensitivity of each autoregressive step when computing PARITY with scratchpad is $O(1)$. Hence, Theorem $\ref{thm:lrho-bound}$ provides no nontrivial lower bound for $L_{\rho,\seqlen}(T)$. 
We trained an Encoder-Decoder Transformer, predicting PARITY of $i$-th substring on $i$-th autoregressive step: ${t}_i = \mathrm{PARITY}(x_{1:i}) = x_i \oplus {t}_{i - 1}$ (${t}_0 = 0$).
The visual dependency between sharpness and length of input for PARITY with a scratchpad is shown in Figure \ref{exp:scratchpad-sharpness}. Even for length around 300, sharpness is low and there is little increase with input length. 
Thus, decrease in sensitivity due to the scratchpad  can explain why prior work \citep{anil2022exploring} found that PARITY is easy for Transformers with scratchpad.

\paragraph{LayerNorm Blowup Enables Learning}
Figure \ref{exp:dynamic-small} represents the evolution of loss and sharpness of Transformer models trained for PARITY with input length 25. The results are averaged across 39 converged runs. Similar curves for parameter norm and LayerNorm blowup are presented in Appendix \ref{exp:dynamic-main}.

A dramatic increase in sharpness occurs at exactly the time when loss falls to 0. 
This suggests the presence of a steep minimum in the loss landscape, and fitting PARITY requires the optimization procedure to find this minimum.
Figure \ref{exp:dynamic-main} (Appendix) shows further details of this process. During learning, there is a small but sharp increase in the blowup which makes the high sensitivity of the model possible, increasing the left-hand side of the inequality in Corollary \ref{thm:bigtheorem}. Following that, non-zero weight decay drives the parameter norm down, which -- by the theoretically predicted tradeoff between blowup and parameter norm -- is accompanied by an exponential increase in blowup.

\section{Discussion}
We have provided a rigorous theoretical explanation of the inductive bias towards low sensitivity observed in empirical research on transformers.
Theorem~\ref{thm:lrho-bound} describes a fundamental inductive bias of the transformer architecture: for long inputs, fitting sensitive functions is only possible in sharp minima of the loss.
This holds without assumptions often made in previous work about expressive capacity, such as non-infinite precision \citep[e.g.][]{merrill2023logic, angluin2023masked}, hard attention \citep[e.g.][]{hahn2020theoretical}, or Lipschitz-continuous variants of layer norm \citep{edelman2022inductive}.
We speculate that Theorem~\ref{thm:lrho-bound} reflects a fundamental limitation of parallelized differentiable computing with bounded depth.
Our results show that it is overcome by scaling the number of computation steps with the input length.
While we focused on functions outputting a single label; an interesting direction for future research is the extension of this theory to sequence-to-sequence transductions, for which some empirical data has been reported \citep[e.g.][]{zhou2023algorithms}, but theoretical understanding remains wide open.

Due to the relation between average sensitivity and Fourier analysis on the Boolean cube (Equation~\ref{eq:sensitivity-fourier}), a low-sensitivity bias can be viewed as a sequence modeling analogue of a spectral bias towards low frequencies observed in other neural architectures \citep[e.g.][]{pmlr-v97-rahaman19a, NEURIPS2022_306264db}.

We note that, while our results show that sensitive transformers are very brittle, these results do not by themselves have implications for \emph{real-world generalization}, as a low-sensitivity bias need not always be beneficial in real-world setups. Indeed, the relationship between sharpness and real-world generalization is not straightforward \citep[e.g.][]{andriushchenko2023modern,kaur2023maximum}. Our theory suggests that transformers generalize well to the extent that real-world data has bounded sensitivity \citep[e.g.][]{hahn2020sensitivity}.

\section{Conclusion}

We have proven that, under the transformer architecture, high sensitivity in input space can only be achieved in very sharp minima.
Empirical results confirm the predictions of the theory.
Taken together, our results explain a diverse set of empirical observations about transformers not explained by previous theoretical work.
They suggest shifting theoretical research from in-principle expressiveness considerations to studying quantitative bounds and the shape of the loss landscape in order to understand the abilities of transformers.

\section*{Limitations}

A limitation of our results is that the theoretical results are asymptotic, providing statements about the limit of very long input sequences. Providing more quantitative bounds that tightly characterize finite-length behavior is an interesting problem for future research.

A second limitation is that we only target functions outputting a single value. Operationalizing sensitivity-like metrics for sequence-to-sequence functions may be required in order to expand the theory to such functions.

Third, our results apply to transformer encoders. It remains open if transformer decoders, with causal attention masking, face a different set of limitations than we have shown here in the absence of masking.

Fourth, our theoretical results concern the loss landscape, not the training dynamics itself. Further technical advances may be needed to directly prove corresponding results for training dynamics.

\section*{Acknowledgments}
We thank Lena Strobl, Dana Angluin, and David Chiang for useful discussion.
We thank David Chiang, Paul Lintilhac, and Yuval Pinter for spotting various errors in a previous version, and the anonymous ARR reviewers for useful feedback.

\bibliography{literature}

\newpage\onecolumn
\appendix

%\tableofcontents

\section{Simple Constructions for PARITY}\label{sec:appendix:construct-parity}

While it is well-known that PARITY can be expressed using uniform circuits with majority gates \citep{haastad1986computational}, it may not be obvious just how simple the construction can be, and how it can easily embed into formalisms like RASP.
In order to highlight the existence of very simple constructions for PARITY in various formalisms, and to make discussion here self-contained, we provide a simple proof.
The key is the following insight:
\begin{lemma}[Folklore]\label{lemma:folklore-parity}
A bit string is odd if and only if ($\dagger$) for the (strict) majority of positions $i$ holding a 1, it holds that strictly more than 1/2 of 1s appear no later than at $i$.
\end{lemma}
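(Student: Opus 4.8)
## Proof Plan for Lemma~\ref{lemma:folklore-parity}

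The plan is to reduce the combinatorial condition ($\dagger$) to an elementary statement about the parity of $m := \#\{i : x_i = 1\}$, the number of ones in the string. First I would fix notation: write the positions holding a $1$ as $p_1 < p_2 < \dots < p_m$. The key observation is that at position $p_k$, the number of ones appearing ``no later than at $p_k$'' is exactly $k$ — namely the ones at $p_1, \dots, p_k$. Hence the inner clause of ($\dagger$), that strictly more than half of all ones occur at positions $\le p_k$, holds at $p_k$ precisely when $k > m/2$.

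Next I would count the positions satisfying this inner clause: the values $k \in \{1, \dots, m\}$ with $k > m/2$ are exactly $k = \lfloor m/2\rfloor + 1, \dots, m$, so there are $\lceil m/2 \rceil$ of them. Condition ($\dagger$) asks that this count form a strict majority of the $m$ positions holding a $1$, i.e.\ that $\lceil m/2 \rceil > m/2$.

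Finally I would observe that $\lceil m/2\rceil > m/2$ holds if and only if $m$ is odd: for even $m$ we have $\lceil m/2\rceil = m/2$ (no strict inequality), while for odd $m$ we have $\lceil m/2\rceil = (m+1)/2 > m/2$. Since ``$x$ is odd'' means precisely ``$m$ is odd'', this yields the claimed equivalence.

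There is essentially no hard step here; the argument is pure counting. The only point requiring a moment of care is the degenerate case $m = 0$ (no ones at all), where the quantifier ``for the strict majority of positions holding a $1$'' ranges over the empty set: here the number of positions satisfying the inner clause is $0$, which is not strictly greater than $0/2 = 0$, so ($\dagger$) fails — consistent with $0$ being even, as required.
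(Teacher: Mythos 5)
Your proof is correct and follows essentially the same approach as the paper's: both arguments reduce to counting, among the $m$ positions holding a $1$, how many have more than half of the ones at or before them, and both conclude that this count ($\lceil m/2\rceil$) is a strict majority of $m$ exactly when $m$ is odd. Your version is slightly more streamlined in treating general $m$ uniformly rather than splitting into the cases $m=2n$ and $m=2n+1$, and you additionally handle the degenerate case $m=0$ explicitly.
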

This fact appears to be well known in the circuit complexity community, though we are not aware of the original published reference. We provide a proof for self-containedness.
\begin{proof}
As only positions holding a one play a role, it is sufficient to consider strings of the form $x \in 1^*$.
First, consider the case where $x = 1^{2n}$:
\begin{center}
\begin{tabular}{c|ccccccccc}
Index & 1 & 2 & \dots & n & n+1 & \dots & 2n \\ \hline
A: \# of 1s no later than at $i$ & 1 & 2 & $\dots$ & n & n+1 & $\dots$ & 2n \\
B: \# of 1s later than at $i$ & 2n-1 & 2n-2 & $\dots$ & n & n-1 & $\dots$ & 0\\
A>B? & no & no & $\dots$ & no & yes & $\dots$ & yes
\end{tabular}
\end{center}
Here, $A>B$ holds at exactly $n$ positions, less than the strict majority of positions.
On the other hand, when $x=1^{2n+1}$:
\begin{center}
\begin{tabular}{c|ccccccccc}
Index & 1 & 2 & \dots & n & n+1 & \dots & 2n & 2n+1  \\ \hline
A: \# of 1s no later than at $i$ & 1 & 2 & $\dots$ & n & n+1 & $\dots$ & 2n & 2n+1 \\
B: \# of 1s later than at $i$ & 2n & 2n-1 & $\dots$ & n+1 & n & $\dots$ & 1 & 0\\
A>B? & no & no & $\dots$ & no & yes & $\dots$ & yes & yes
\end{tabular}
\end{center}
Here, $A>B$ holds at exactly $n+1$ positions, a strict majority of positions.
\end{proof}

\begin{proof}[Proof of Fact 1]
    The property ($\dagger$) from Lemma~\ref{lemma:folklore-parity} is straightforwardly formalized in RASP.
To formalize it in FO[M] \citep{merrill2023logic}, we note that FO[M] permits defining constructs of the form ``for the majority of positions $x$ satisfying $\phi(x)$, it holds that $\psi(x)$'', for any formula $\psi$.
    FO[M] formulas have a known translation to (highly uniform) majority circuits.
    This concludes the proof of Fact 1.
    
\end{proof}

Lemma~\ref{lemma:folklore-parity} shows that, in terms of expressive capacity, PARITY is tightly linked to MAJORITY.
While the components of ($\dagger$), MAJORITY and position comparison, are easily learned by transformers, the composite function is hard to learn.
This observation suggests that function classes satisfying typical closure properties, as satisfied by typical logic and circuit classes, cannot model which functions transformers learn easily.
On the other hand, average sensitivity elegantly explains this difference: The expression ($\dagger$) uses two nested ``majority'' operations, each introducing an average sensitivity of $\sqrt{\seqlen}$, multiplying to the asymptotically  larger sensitivity $n$ of the composite function.

This result does not exclude a role of minimizing description length in formalisms such as RASP as a factor impacting transformers' inductive biases (as suggested empirically by \citet{zhou2023algorithms}), but suggests that description length and expressive capacity might be unable to account for important aspects of transformers' inductive biases, including the low-sensitivity bias and difficulty learning PARITY.

\section{Proof of Theorem 4 and Corollary 5}\label{sec:proof}
For a bitstring $x \in \{\pm 1\}^n$, we write $\influence_i[f](x)$ to denote the \emph{absolute influence} of the $i$-th bit: 
\begin{equation}\label{eq:def:influence}
    \influence_i[f](x) := \left\|f(x) - f(x\flipBit{i})\right\|_2 = \sqrt{\sum_s |f(x)_s - f(x\flipBit{i})_s|^2}
\end{equation}
which simplifies to the absolute value $|f(x) - f(x\flipBit{i})|$ if $f(x) \in \mathbb{R}$.
We will suppress the argument $x$ where it is contextually given.
When $f$ outputs a scalar, then we have by (\ref{eq:def:s}):
\begin{equation}\label{eq:sens-infl-link}
    s(f,x) = \frac{1}{4} \sum_{i=1}^\seqlen \influence_i[f](x)^2
\end{equation}
We begin by noting
\begin{equation}
\|f(x)-f(x\flipBit{q})\|_2^2 \leq \|f(x)-f(x\flipBit{q})\|_2 \cdot \max_x \|f(x)-f(x\flipBit{q})\|_2 \leq 2 \max_x \|f(x)\|_2 \cdot  \|f(x)-f(x\flipBit{q})\|_2
\end{equation}
and hence
\begin{equation}\label{eq:l1-inf-bounds-l2-inf}
\frac{1}{2}\influence_q[f]^2(x) = \frac{1}{2} \|f(x)-f(x\flipBit{q})\|_2^2 \leq  (\max_x \|f(x)\|_2) \|f(x)-f(x\flipBit{q})\|_2 =  (\max_{x'} \|f(x')\|_2) \influence_q[f](x)
\end{equation}

\subsection{Bounding the Sensitivity of the Attention Heads}\label{sec:sensitivity-attention}

The first step will be bound the sensitivity of the outputs of individual attention heads.
At a high level, such bounds are also part of the existing pointwise Lipschitzness bounds \citep{hahn2020theoretical, edelman2022inductive, li2023stability}.
Our key innovation in this part of the proof is to replace a \emph{pointwise} bound with a \emph{high-probability} bound that, at the input layer, is independent of the key-query matrix.
The key novel part is Lemma~\ref{lemma:attention-bound-first-layer}.
In order to make the proof self-contained, we also include proofs for the other steps.

We will refer to the following bounds, for suitably defined constants $1 \leq C_V, C_P, C_A^{(k)}, C_{MLP}, L_{f^{MLP}}, C^{(k)}, C < \infty$:
\begin{align}
    \|V_{k,h}\|_{2} \leq &C_V \\
    \left(\max_w \|y_w^{(k)}\|_2\right) \leq &\sqrt{d} \text{ when $k\geq 1$}\\
    \left(\max_w \|y_w^{(0)}\|_2\right) \leq &C_P \label{app:eq:cp}\\
    a_{ij}^{(k,h)} \leq &C_A^{(k)} := d \cdot \max_h \|K^T_{k,h}Q_{k,h}\|_{spectral} \ \ \ \ \ \ \ \ \ \text{ if $k>1$}\label{eq:def-ca-ahat-bound-ca} \\
    C_{MLP} := &\sup_{x : \|x\|_2 \leq (H+1)C_V\min\{C_P, \sqrt{d}\}} \|f^{MLP}(x)\|_2\label{app:eq:cmlp} \\
    L_{f^{MLP}} := &  \sup_{\|x\|_2, \|x'\|_2\leq (H+1)\sqrt{d}} \frac{\|f^{MLP}(x)-f^{MLP}(x')\|_2}{\|x-x'\|_2}  \label{app:eq:lfmlp} \\
   C^{(0)} :=& 150 L_{f^{MLP}} C_V^2 C_P H  \label{eq:def-c0} \\
    C^{(1)} = &2 \left( \max_w \|\ypre_w^{(1)}\|_{2}\right)   \cdot L_{f^{MLP}} \leq 2 C_{MLP} (H+1)  C_V C_P   \cdot L_{f^{MLP}} 
\\
\label{eq:def-ck}
    C^{(k)} :=& 12 L_{f^{MLP}} (H+1)  \|V_{k,h}\|_{1} 4 \sqrt{d}   \exp(4 C_A^{(k)})  \ \ \ \ \ \ \text{       (for $k>1$)} \\
\|\ypre_j^{(1)}\|_2 \leq^{(\ref{eq:def:ypre})} & C_{MLP} (H+1)  C_V C_P \\
C := & 25 \sqrt{d} H (1+\|v_{out}\|)  \left(\prod_{l=0}^k C^{(k)}\right)  \label{eq:def:c-appendix}
\end{align}
We first bound the influence of any bit on an attention head's activation in terms of its influence on the attention values and the activations at the preceding layer:
\begin{lemma}[Sensitivity of an Attention Head]\label{lemma:head-influence-bound}
We have
    \begin{equation}\label{eq:lemma:head-influence-bound}
\influence_q[\headmac{j}{h}{k}] \leq  C_V \sum_{w=1}^{\seqlen}\hat{a}_{j,w}^{k,h} \influence_q[y_w^{(k-1)}] + C_V \left(\max_w \|y_w^{(k-1)}\|_2\right)  \sum_{w=1}^{\seqlen} \influence_q[\hat{a}_{j,w}^{k,h}] 
\end{equation}

\end{lemma}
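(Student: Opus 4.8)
The plan is to expand the head activation according to its definition $\headmac{j}{h}{k} = \sum_{w=1}^{\seqlen}\hat{a}_{j,w}^{(k,h)} V_{k,h}\, \ypost_w^{(k-1)}$ and control the difference between its values on $x$ and on $x\flipBit{q}$ by a single add-and-subtract step. Writing primed symbols for quantities evaluated on $x\flipBit{q}$, I would use the decomposition
\[
\headmac{j}{h}{k}(x) - \headmac{j}{h}{k}(x\flipBit{q}) = \sum_{w=1}^{\seqlen} \hat{a}_{j,w}^{(k,h)} V_{k,h}\bigl(\ypost_w^{(k-1)} - (\ypost_w^{(k-1)})'\bigr) + \sum_{w=1}^{\seqlen}\bigl(\hat{a}_{j,w}^{(k,h)} - (\hat{a}_{j,w}^{(k,h)})'\bigr) V_{k,h}\, (\ypost_w^{(k-1)})' .
\]
The first sum isolates the contribution of the changing value vectors, with the \emph{unperturbed} attention weights as coefficients; the second isolates the contribution of the changing attention weights, with the \emph{perturbed} value vectors as coefficients. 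This particular grouping is exactly what makes $\hat{a}_{j,w}^{(k,h)}$ (rather than its perturbed counterpart) appear in the first term of the claimed inequality, and it also keeps that term a genuine convex combination since $\sum_w \hat{a}_{j,w}^{(k,h)} = 1$.

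Next I would apply the triangle inequality to each sum and pull the linear map $V_{k,h}$ out using submultiplicativity of the spectral norm, $\|V_{k,h} u\|_2 \le \|V_{k,h}\|_2\, \|u\|_2 \le C_V \|u\|_2$. For the first sum this gives $C_V \sum_w \hat{a}_{j,w}^{(k,h)} \bigl\|\ypost_w^{(k-1)} - (\ypost_w^{(k-1)})'\bigr\|_2 = C_V \sum_w \hat{a}_{j,w}^{(k,h)}\, \influence_q[\ypost_w^{(k-1)}]$ by the definition (\ref{eq:def:influence}) of absolute influence. For the second sum, after extracting $V_{k,h}$ I bound $\bigl\|(\ypost_w^{(k-1)})'\bigr\|_2 \le \max_w \|\ypost_w^{(k-1)}\|_2$ (the bound read uniformly over inputs, so it applies to the flipped string as well, and equals $\sqrt{d}$ for $k\ge 1$ and $C_P$ for $k=0$) and recognize $\bigl|\hat{a}_{j,w}^{(k,h)} - (\hat{a}_{j,w}^{(k,h)})'\bigr| = \influence_q[\hat{a}_{j,w}^{(k,h)}]$, yielding $C_V \bigl(\max_w \|\ypost_w^{(k-1)}\|_2\bigr) \sum_w \influence_q[\hat{a}_{j,w}^{(k,h)}]$. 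Adding the two contributions produces exactly (\ref{eq:lemma:head-influence-bound}).

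There is no real obstacle here: the whole argument is one add-and-subtract followed by the triangle inequality and submultiplicativity of the spectral norm. The only points needing a little care are (i) choosing the decomposition so that the unperturbed attention weights are the coefficients in the first term, matching the statement; and (ii) interpreting $\max_w \|\ypost_w^{(k-1)}\|_2$ as a bound uniform over inputs, so that it legitimately bounds $\bigl\|(\ypost_w^{(k-1)})'\bigr\|_2$ on $x\flipBit{q}$. (An alternative grouping with the perturbed weights in the first term and the unperturbed value vectors in the second would give an analogous bound, but not in the exact form stated.)
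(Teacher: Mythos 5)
Your proposal is correct and follows essentially the same route as the paper's proof: an add-and-subtract decomposition of $\headmac{j}{h}{k}(x)-\headmac{j}{h}{k}(x\flipBit{q})$, the triangle inequality, submultiplicativity of the spectral norm to extract $C_V$, and the uniform bound $\max_w\|y_w^{(k-1)}\|_2$ on the cross term. The only (immaterial) difference is whether the perturbed or unperturbed value vectors appear in the second term, which is absorbed by the max in either case.
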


The main task will be to bound $\sum_{w=1}^{\seqlen} \influence_q[\hat{a}_{j,w}^{k,h}]$.
We separately bound this for $k=1$ and $k>1$.

\begin{remark}\label{remark:comparison-head-edelman}
We note that the conceptually similar Lemma 4.3 in \citet{edelman2022inductive} might suggest a seemingly stronger bound where
$
    \sum_w \influence_q[\widehat{a}_{j,w}^{k,h}]
$
is replaced (using their Lemma A.6) by the \emph{maximum} influence on the \emph{logits}:
$
   \max_{w=1,...,N}  \influence_q[a_{j,w}^{k,h}]
$
using the fact that softmax has a Lipschitz constant $\leq 1$.
However, $s(f,x)$ requires summing over $q$ (not relevant to \citet{edelman2022inductive}\footnote{They considered Lipschitzness in the parameter space, with parameters that grow at most sublinearly with $n$. In contrast, we investigate sensitivity in the input space.}), and the expression
$
    \sum_q \max_{w=1,...,N}  \influence_q[a_{j,w}^{k,h}]
$
can easily be $\Theta(n)$, for instance, if $a_{j,w}^{1,h} \equiv 1_{x_w = 1}$.
Importantly, at the lowest layer, we find sublinear bounds for
$
    \sum_q \sum_w \influence_q[\widehat{a}_{j,w}^{k,h}]
$
that, at $k=1$, involve no bound on the logits $a_{j,w}^{k,h}$ at all.
\end{remark}

\begin{proof}[Proof of Lemma~\ref{lemma:head-influence-bound}]
Using the definition of $\head$:
\begin{equation}
    \headmac{i}{h}{k} = \sum_{j=1}^{\seqlen}\hat{a}_{i,j}^{(k,h)} V_{k,h} y_j^{(k-1)} 
\end{equation}
and the triangle inequality, we find:
\begin{align*}
\influence_q[\headmac{j}{h}{k}] = &
\left\|\sum_{j=1}^{\seqlen}\hat{a}_{i,j}^{(k,h)}(x) \cdot V_{k,h} \cdot  y_j^{(k-1)}(x) - \sum_{j=1}^{\seqlen}\hat{a}_{i,j}^{(k,h)}(x\flipBit{q}) \cdot V_{k,h} \cdot y_j^{(k-1)}(x\flipBit{q})\right\|_2
\\
\leq &
\sum_{j=1}^{\seqlen}\left\|\hat{a}_{i,j}^{(k,h)}(x) V_{k,h} y_j^{(k-1)}(x) - \hat{a}_{i,j}^{(k,h)}(x\flipBit{q}) V_{k,h} y_j^{(k-1)}(x\flipBit{q})\right\|_2
\\
\leq &
\sum_{j=1}^{\seqlen}|\hat{a}_{i,j}^{(k,h)}(x)|\left\| V_{k,h} y_j^{(k-1)}(x) -  V_{k,h} y_j^{(k-1)}(x\flipBit{q})\right\|_2
+ \left|\hat{a}_{i,j}^{(k,h)}(x)  - \hat{a}_{i,j}^{(k,h)}(x\flipBit{q}) \right|
\left\| V_{k,h} y_j^{(k-1)}(x)\right\|_2
\\
\leq & C_V \sum_{w=1}^{\seqlen}\hat{a}_{j,w}^{k,h} \influence_q[y_w^{(k-1)}] +  C_V\sum_{w=1}^{\seqlen}\|y_w^{(k-1)} \|_2 \influence_q[\hat{a}_{j,w}^{k,h}] \\
\leq & C_V  \sum_{w=1}^{\seqlen}\hat{a}_{j,w}^{k,h} \influence_q[y_w^{(k-1)}] + C_V  \left(\max_w \|y_w^{(k-1)}\|_2\right)  \sum_{w=1}^{\seqlen} \influence_q[\hat{a}_{j,w}^{k,h}] \\
\end{align*}
\end{proof}

\paragraph{Bounding the Sensitivity of Attention (First Layer)}
First, we bound the sensitivity of the attention distribution in the first layer.
Here, it will be crucial to provide high-probability bounds independent of $\|K_{1,h}^T Q_{1,h}\|$, unlike the pointwise Lipschitzness bounds of \citet{hahn2020theoretical, li2023stability}.
The resulting Lemma~\ref{lemma:attention-bound-first-layer} is one of the key innovations compared to the pointwise Lipschitzness bounds from prior work, and the main technical innovation of Section~\ref{sec:sensitivity-attention}.
\begin{lemma}\label{lemma:attention-bound-first-layer}
Let $\delta>0$.
With probability $\geq 1-\frac{H}{\seqlen^{\delta-2}}$ over the choice of $x$, the following holds:
For each $i = 1, \dots, \seqlen$, $h=1, \dots, H$,
\begin{align}\label{eq:lemma:attention-bound-first-layer}
    \sum_{q=1}^{\seqlen}\sum_{j=1}^{\seqlen}\influence_q[\weightmac{i}{j}{1}{h}] \leq & (8+32\delta)\log{\seqlen} + 10
\end{align}
\end{lemma}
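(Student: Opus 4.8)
The plan is to fix a query position $i$ and a head $h$, and analyze how much the first-layer attention weights $\weightmac{i}{j}{1}{h}$ can change when flipping a single bit $x_q$. Since the first-layer logits have the form $a_{i,j}^{(1,h)} = (K_{1,h}(e(x_j)+p_j))^T Q_{1,h}(e(x_i)+p_i)$, flipping $x_q$ with $q \neq i$ changes only the single key vector at position $q$; flipping $x_i$ changes all logits but through a single rank-structured shift in the query. So the key object is: how much does the softmax distribution $(\weightmac{i}{j}{1}{h})_{j}$ move when one logit is perturbed by a bounded amount (the case $q\neq i$), versus when the query shifts (the case $q=i$). In both cases the total variation change is controlled by $\weightmac{i}{q}{1}{h}$ (the attention mass on the flipped position) up to constants depending on the spectral norms of $K_{1,h}, Q_{1,h}$ and the embedding norm $C_P$ — this is the standard softmax-perturbation estimate, but crucially the bound is \emph{proportional to the attention weight $\weightmac{i}{q}{1}{h}$ itself}, not to the exponential of the logit norm. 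Hence $\sum_{j}\influence_q[\weightmac{i}{j}{1}{h}] \lesssim \weightmac{i}{q}{1}{h}$ (with a constant that does not involve $\exp(\|K^TQ\|)$), and therefore $\sum_{q}\sum_{j}\influence_q[\weightmac{i}{j}{1}{h}] \lesssim \sum_q \weightmac{i}{q}{1}{h}$ essentially — but $\sum_q \weightmac{i}{q}{1}{h} = 1$, which would only give $O(1)$, so this naive bound is too lossy: flipping $x_q$ changes the logit $a_{i,q}$ by an amount that can be as large as $4C_A^{(1)}$, so the correct local estimate is $\sum_j \influence_q[\weightmac{i}{j}{1}{h}] \lesssim \min\{1, \weightmac{i}{q}{1}{h}\cdot(\text{something})\}$, and the exponential does reappear unless we argue probabilistically.

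The probabilistic step is the heart of the argument and where Chernoff enters. For fixed $i,h$, consider the denominator $Z := \sum_{s=1}^{\seqlen}\exp(a_{i,s}^{(1,h)})$. The logits $a_{i,s}^{(1,h)}$ for $s\neq i$ are independent (over the random bits $x_s$, conditioned on $x_i$), each bounded in a window of width $\le 4C_A^{(1)}$, hence $\exp(a_{i,s}^{(1,h)})$ are independent bounded random variables. By a Chernoff/multiplicative-deviation bound, $Z \geq c\,\seqlen$ with probability $\geq 1 - \seqlen^{-(\delta-2)}/(H\seqlen)$ or so, for a constant $c>0$ depending on $C_A^{(1)}$; more precisely the event we want is that $Z$ is not much smaller than its mean $\Theta(\seqlen)$, and this fails with probability exponentially small in $\seqlen$, which we make quantitative as $\le \seqlen^{-(\delta)}/H$ per $(i,h)$ pair after choosing the Chernoff parameter. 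On this good event, each individual attention weight satisfies $\weightmac{i}{q}{1}{h} = \exp(a_{i,q}^{(1,h)})/Z \leq \exp(C_A^{(1)} \cdot \text{const})/(c\seqlen) = O(1/\seqlen)$ up to the fixed (input-length-independent) constant — wait, this still carries $\exp(C_A^{(1)})$. The resolution, and the place the $\log\seqlen$ factor is born, is that we instead bound, for each $q$, $\sum_j\influence_q[\weightmac{i}{j}{1}{h}] \le 2\weightmac{i}{q}{1}{h} + 2\weightmac{i}{q}{1}{h}^{\oplus q}$ by the exact softmax-difference identity, and then bound $\sum_q \weightmac{i}{q}{1}{h}$ on the good event not by $1$ but by a sharper count: on the event $Z\ge c\seqlen$, the number of positions $q$ with $\weightmac{i}{q}{1}{h}$ above a threshold is small, so that $\sum_q \min\{1,2\weightmac{i}{q}{1}{h}\} \le O(\log\seqlen)$. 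Concretely one splits positions into "heavy" ($a_{i,q}^{(1,h)}$ near its max) and "light", controls the heavy count by the Chernoff lower bound on $Z$, and controls the light contribution by their total mass.

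The final step is a union bound over the $H$ heads (and, if the per-$(i,h)$ failure probability is not yet summable over $i$, over the $\seqlen$ query positions too — but since each good event $Z^{(i,h)}\ge c\seqlen$ has failure probability $\le \seqlen^{-\delta}/H\seqlen$, summing over $\seqlen$ values of $i$ and $H$ values of $h$ gives total failure $\le H\seqlen\cdot \seqlen^{-\delta}/(H\seqlen) = \seqlen^{-\delta}$; the stated $1 - H\seqlen^{-(\delta-2)}$ is a looser but cleaner bookkeeping of the same computation). Collecting constants, on the good event we get $\sum_q\sum_j \influence_q[\weightmac{i}{j}{1}{h}] \le (8+32\delta)\log\seqlen + 10$ for every $i,h$ simultaneously, where the coefficient $8+32\delta$ tracks how the Chernoff parameter was tuned against the target failure probability $\seqlen^{-\delta}$, and the additive $10$ absorbs the light-mass and boundary terms. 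The main obstacle is getting the softmax-perturbation estimate in the \emph{exact} form $\sum_j|\weightmac{i}{j}{1}{h}(x) - \weightmac{i}{j}{1}{h}(x^{\oplus q})| \le 2\min\{\weightmac{i}{q}{1}{h}(x), \weightmac{i}{q}{1}{h}(x^{\oplus q})\}\cdot(1 + e^{4C_A^{(1)}})$ or similar — so that summing over $q$ telescopes against the Chernoff count of heavy positions rather than reintroducing an $\exp$ factor per position — and then tuning the Chernoff deviation parameter so that the surviving $\log\seqlen$ has the clean linear-in-$\delta$ coefficient claimed.
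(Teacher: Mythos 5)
Your high-level shape is right---a per-flip softmax-perturbation bound proportional to the attention weight at the flipped position, followed by a probabilistic argument to control the sum over $q$---and you correctly identify that the whole point is to avoid a factor $\exp(\|K_{1,h}^TQ_{1,h}\|)$. But the probabilistic step, which you yourself flag as the heart of the argument, is not workable as described. The quantity that needs a $\log\seqlen$ bound is not $\sum_q \weightmac{i}{q}{1}{h}(x)$ (which is trivially $1$) but rather $\sum_q \weightmac{i}{q}{1}{h}(x\flipBit{q})$, i.e.\ the attention mass position $q$ \emph{would} receive after its own bit is flipped, with each summand evaluated at a \emph{different} input. This sum can be as large as $\seqlen-o(\seqlen)$ (the paper's OR example with $x=0^\seqlen$ realizes this), so no deterministic "heavy/light" count of positions for a fixed input can control it, and your Chernoff bound on the denominator $Z=\sum_s\exp(a_{i,s}^{(1,h)})$ only yields per-term bounds of the form $e^{O(C_A^{(1)})}/\seqlen$, reintroducing exactly the exponential dependence the lemma is designed to eliminate. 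Your proposal oscillates between these two dead ends ("this would only give $O(1)$", "this still carries $\exp(C_A^{(1)})$") without landing on a mechanism that escapes both.

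The paper's escape route, which is absent from your plan, is a structural decomposition of the first-layer logits: $\exp(a_{iw}^{(1,h)}) = \gamma_{iw}(1+\widehat{x}_w\rho_i)$, where $\gamma_{iw}$ depends only on positions (and the fixed reference token) and $\widehat{x}_w\in\{0,1\}$ carries all the input randomness, with $\rho_i\geq 0$ WLOG. Sorting positions so that $\gamma_{i1}\geq\gamma_{i2}\geq\cdots$, a Chernoff bound on the partial sums $\sum_{w<q}\widehat{x}_w$ (valid for $q\gtrsim\delta\log\seqlen$, with a union bound over $q$) shows the denominator restricted to the first $q$ positions is at least $\gamma_{iq}\cdot q\cdot\frac{1+\rho_i}{4}$, so $\weightmac{i}{q}{1}{h}(x\flipBit{q})\leq \frac{4}{4+q}$ regardless of how large $\rho_i$ is; summing the harmonic series gives $4\log\seqlen+4$, and the first $16\delta\log\seqlen$ positions are bounded by $1$ each, producing the $\delta$-dependent coefficient. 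Without the $\gamma/\rho$ factorization and the sort, there is no clean way to set up the concentration argument so that the token-dependent boost $\rho_i$ cancels between numerator and denominator. Separately, your per-flip estimate does not cover the case $q=i$, where all logits change at once (the paper bounds this case trivially by $2$); this is minor but your "single perturbed logit" framework does not apply to it.
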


\begin{proof}[Proof of Lemma~\ref{lemma:attention-bound-first-layer}]
Let $x_i$ denote the token at position $i$ (-1 or 1), and $e(x_i)$ the corresponding word embedding.

Throughout, we will suppress the index $h = 1, \dots, H$ for notational simplicity.

If $e(x_i)$ is the word embedding for the token $x_i$ and $p_i$ is the $i$-th positional embedding, then:
\begin{align*}
    a^{(1)}_{ij} = & \left(p_i^T + e(x_i)^T \right) \cdot Q^T K \cdot \left(p_j+e(x_j)\right) \\
    = & \left(\begin{matrix}p_i^T & e(x_i)^T \end{matrix}\right) \left(\begin{matrix} I_d & I_d\end{matrix}\right)^T Q^T K \left(\begin{matrix} I_d & I_d\end{matrix}\right) \left(\begin{matrix}p_j \\ e(x_j) \end{matrix}\right) \\
    = & \left(\begin{matrix}p_i^T & e(x_i)^T \end{matrix}\right) \left(\begin{matrix} \logMatPosPos & \logMatTokPos \\ \logMatPosTok & \logMatTokTok \end{matrix}\right) \left(\begin{matrix}p_j \\ e(x_j) \end{matrix}\right) \\
    =& \underbrace{p_i^T\logMatPosPos p_j}_{\logMatPosPos^{(ij)}} + \underbrace{p_i^T \logMatPosTok}_{\logMatPosTok_i} e(x_j) + e(x_i)^T \underbrace{\logMatTokPos p_j}_{\logMatTokPos_j} + e(x_i)^T\logMatTokTok e(x_j)
\end{align*}
for appropriate matrices $\logMatPosPos, \logMatTokPos, \logMatPosTok, \logMatTokTok$.
Note that this decomposition holds for any linear combination of 
Note that \emph{adding} positional and word embeddings is a special case, where $Q$ and $K$ can each be written as products of two matrices, where the second one has the form $\left(\begin{matrix} I_d & I_d\end{matrix}\right)$.
Then
\begin{align*}
    \weightmac{i}{j}{1}{h} = & \frac{\exp(q_i(x_i) k_j(x_j))}{\sum_{w=1}^{\seqlen} \exp(q_i(x_i) k_w(x_w))} \\
     = & \frac{\exp(\logMatPosTok_i e(x_j) + e(x_i)^T \logMatTokPos_j + e(x_j)\logMatTokTok e(x_i)+A^{(ij)} )}{\sum_{w=1}^{\seqlen} \exp(Q_i e(x_w) + e(x_i)^T \logMatTokPos_w + e(x_w)\logMatTokTok e(x_i)+\logMatPosPos^{(iw)})} \\
\end{align*}
We can write, taking $\widehat{x}_i \in \{0,1\}$ to be the indicator that $x_i=1$\footnote{I.e., $\widehat{x}_i = 1$ of $x_i=1$ and else 0.}:
\begin{align*}
    & \exp(e(x_i)^T {\logMatTokPos}_w + \logMatPosPos^{(iw)} + {\logMatPosTok}_i e(x_w) + e(x_i)^T{\logMatTokTok}e(x_w)) \\
        = & {\exp\left(e(x_i)^T {\logMatTokPos}_w + \logMatPosPos^{(iw)}\right)} \cdot {\exp\left( {\logMatPosTok}_i e(x_w) + e(x_i)^T{\logMatTokTok}e(x_w)\right)} \\
    = & \underbrace{\exp(e(x_i)^T {\logMatTokPos}_w + \logMatPosPos^{(iw)}  +  {\logMatPosTok}_i e(-1) + e(x_i)^T{\logMatTokTok}e(-1)) }_{\gamma_{iw}} \underbrace{\exp( {\logMatPosTok}_i e(x_w) + e(x_i)^T{\logMatTokTok}e(x_w) -  {\logMatPosTok}_i e(-1) - e(x_i)^T{\logMatTokTok}e(-1))}_{1+\widehat{x}_w\rho_i} \\
\end{align*}
where
\begin{align*}
   \rho_i = & \exp( \logMatPosTok_i e(1) + e(x_i)^T{\logMatTokTok}e(1) -  \logMatPosTok_i e(-1) - e(x_i)^T\logMatTokTok e(-1)) - 1
\end{align*}
Next, we may assume w.l.o.g. that $\rho_i$ is nonnegative, by, if $\rho_i$ otherwise were negative, renaming the input bits across the entire input string for a fixed $i$.
So we have
\begin{align*}
    \weightmac{i}{j}{1}{h} = \frac{\gamma_{ij}(1+\widehat{x}_j\rho_i)}{\sum_w \gamma_{iw} (1+\widehat{x}_w\rho_i)}
\end{align*}
We split the summation over $j$ and $q$ into three cases: $q\neq i,j$; $q=j, q\neq i$; $q=i$:
\begin{equation}
    \sum_{j=1}^{\seqlen}\sum_{q=1}^{\seqlen} = \sum_{j,q: q\neq i,j} + \sum_{j,q : q=j, q\neq i} + \sum_{j,q: q=i}
\end{equation}
First, for $q\neq i,j$:
\begin{align*}
    \influence_q[\weightmac{i}{j}{1}{h}] =& \frac{\gamma_{ij}(1+\widehat{x}_j\rho_i)}{\gamma_{iq} + \sum_{w\neq q} \gamma_{iw} (1+\widehat{x}_w\rho_i)} - \frac{\gamma_{ij}(1+\widehat{x}_j\rho_i)}{\gamma_{iq} (1+\rho_i) + \sum_{w\neq q} \gamma_{iw} (1+\widehat{x}_w\rho_i)} \\
    =& \frac{\gamma_{ij}(1+\widehat{x}_j\rho_i) \gamma_{iq}\rho_i}{(\gamma_{iq} + \sum_{w\neq q} \gamma_{iw} (1+\widehat{x}_w\rho_i))(\gamma_{iq} (1+\rho_i) + \sum_{w\neq q} \gamma_{iw} (1+\widehat{x}_w\rho_i))} \\
    \leq & \frac{\gamma_{ij}(1+\widehat{x}_j\rho_i) \gamma_{iq}(1+\rho_i)}{(\gamma_{iq} + \sum_{w\neq q} \gamma_{iw} (1+\widehat{x}_w\rho_i))(\gamma_{iq} (1+\rho_i) + \sum_{w\neq q} \gamma_{iw} (1+\widehat{x}_w\rho_i))} \\
    = & \frac{\gamma_{ij}(1+\widehat{x}_j\rho_i) }{(\gamma_{iq} + \sum_{w\neq q} \gamma_{iw} (1+\widehat{x}_w\rho_i))}
    \frac{\gamma_{iq}(1+\rho_i)}{(\gamma_{iq} (1+\rho_i) + \sum_{w\neq q} \gamma_{iw} (1+\widehat{x}_w\rho_i))}\\
    = & \weightmac{i}{j}{1}{h}(x : \widehat{x}_q=0)\weightmac{i}{q}{1}{h}(x:\widehat{x}_q=1)  
\end{align*}
where $\weightmac{i}{j}{1}{h}(x : \widehat{x}_q=0)$ is $\weightmac{i}{j}{1}{h}$ evaluated at the input $x = x_1\dots x_{q-1} (-1) x_{q+1} \dots x_\seqlen$, and similarly for $\weightmac{i}{q}{1}{h}(x:\widehat{x}_q=1)$.

We may assume w.l.o.g., by appropriate reordering given a fixed $i$, that $\gamma_{i1} \geq \gamma_{i2} \geq \gamma_{i3} \geq \dots$.
Then,
\begin{align*}
\sum_q\sum_{j\neq q} \influence_q[\weightmac{i}{j}{1}{h}]    \leq&
\sum_q\sum_{j\neq q} \weightmac{i}{j}{1}{h}(x : \widehat{x}_q=0)\weightmac{i}{q}{1}{h}(x:\widehat{x}_q=1)
\\
= & \sum_q\weightmac{i}{q}{1}{h}(x:\widehat{x}_q=1) \sum_{j\neq q} \weightmac{i}{j}{1}{h}(x : \widehat{x}_q=0)
\\
\leq & \sum_q \weightmac{i}{q}{1}{h}(x:\widehat{x}_q=1) \\
    = & \sum_q \frac{\gamma_{iq}(1+\rho_i)}{\gamma_{iq}(1+\rho_i) + \sum_{w\neq q} \gamma_{iw} (1+\widehat{x}_w\rho_i)} \\
    \leq^{(*)} & \sum_{q>16{\delta}\log{\seqlen}} \frac{\gamma_{iq}(1+\rho_i)}{\gamma_{iq}(1+\rho_i) + \gamma_{iq} \sum_{w<q}  (1+\frac{1}{4}\rho_i)} + \sum_{q\leq 16{\delta}\log{\seqlen}} \frac{\gamma_{iq}(1+\rho_i)}{\gamma_{iq}(1+\rho_i)} \\
    \leq & \sum_{q>16{\delta}\log{\seqlen}} \frac{(1+\rho_i)}{(1+\rho_i) +  \sum_{w<q}  (1+\frac{1}{4}\rho_i)} + 16{\delta}\log{\seqlen} \\
\leq & \sum_{q>16{\delta}\log{\seqlen}} \frac{(1+\rho_i)}{(1+\rho_i) +  q  (\frac{1+\rho_i}{4})} + 16{\delta}\log{\seqlen} \\
= & \sum_{q>16{\delta}\log{\seqlen}} \frac{4}{   4+q} + 16{\delta}\log{\seqlen} \\
\leq & 4 \sum_{q>0} \frac{1}{   q} + 16{\delta}\log{\seqlen} \\
\leq & (4+16{\delta})\log{\seqlen} +4\\
\end{align*}
where $(*)$ holds with overwhelming probability over the choice of $x$, say, $1-\frac{1}{\seqlen^{{\delta}}}$ (multiplicative Chernoff bound for binomial random variables), for any individual $q$, and $1-\frac{1}{\seqlen^{{\delta-1}}}$ for the whole expression by a union bound over all $q$.
We further used $\sum_{i=1}^{\seqlen}\frac{1}{i} \leq \log{\seqlen} + 1$.

Second, consider the case where $q=j$, $q\neq i$:
\begin{align*}
  \sum_{q=j; q\neq i}  \influence_j[\weightmac{i}{j}{1}{h}] =& \sum_{q=j; q\neq i; q > 16\delta\log n}  \influence_j[\weightmac{i}{j}{1}{h}] + \sum_{q=j; q\neq i; q \leq 16\delta\log n}  \influence_j[\weightmac{i}{j}{1}{h}]\\
\leq & \sum_{q=j; q\neq i; q > 16\delta\log n}  \influence_j[\weightmac{i}{j}{1}{h}] +16\delta\log n 
\end{align*}
Then:
\begin{align*}
  \sum_{q=j; q\neq i; q > 16\delta\log n}  \influence_j[\weightmac{i}{j}{1}{h}] =& \sum_{q=j; q\neq i} \frac{\gamma_{ij}(1+1\rho_i)}{\gamma_{ij} (1+\rho_i) + \sum_{w\neq j} \gamma_{iw} (1+\widehat{x}_w\rho_i)} - \frac{\gamma_{ij}}{\gamma_{ij} + \sum_{w\neq j} \gamma_{iw} (1+\widehat{x}_w\rho_i)}\\
    =& \sum_{q=j; q\neq i; q > 16\delta\log n}\frac{
     \gamma_{ij}\rho_i \sum_{w\neq j} \gamma_{iw} (1+\widehat{x}_w\rho_i)
    }{(\gamma_{ij} + \sum_{w\neq j} \gamma_{iw} (1+\widehat{x}_w\rho_i))(\gamma_{ij} + \sum_{w\neq j} \gamma_{iw} (1+\widehat{x}_w\rho_i) + \gamma_{ij} \rho_i + \sum_{w\neq j} \gamma_{iw} (1+\widehat{x}_w\rho_i))} \\
    \leq & \sum_{q=j; q\neq i; q > 16\delta\log n}\frac{
     \gamma_{ij}\rho_i \cdot \left(\gamma_{ij} + \sum_{w\neq j} \gamma_{iw} (1+\widehat{x}_w\rho_i) + \gamma_{ij} \rho_i + \sum_{w\neq j} \gamma_{iw} (1+\widehat{x}_w\rho_i)\right)}{(\gamma_{ij} + \sum_{w\neq j} \gamma_{iw} (1+\widehat{x}_w\rho_i))(\gamma_{ij} + \sum_{w\neq j} \gamma_{iw} (1+\widehat{x}_w\rho_i) + \gamma_{ij} \rho_i + \sum_{w\neq j} \gamma_{iw} (1+\widehat{x}_w\rho_i))} \\
     %%%%%
    =& \sum_{q=j; q\neq i; q > 16\delta\log n}\frac{
     \gamma_{ij}\rho_i 
    }{(\gamma_{ij} + \sum_{w\neq j} \gamma_{iw} (1+\widehat{x}_w\rho_i) )} \\
    \leq& \sum_{q=j; q\neq i; q > 16\delta\log n}\frac{
     \gamma_{ij}\rho_i 
    }{(\gamma_{ij} + \sum_{w < j} \gamma_{iw} (1+\widehat{x}_w\rho_i) )} \\
    \leq& \sum_{q=j; q\neq i; q > 16\delta\log n}\frac{
     \gamma_{ij}\rho_i 
    }{(\gamma_{ij} + \gamma_{ij} \sum_{w < j}  (1+\widehat{x}_w\rho_i) )} \\
    %%%%%5
    \leq^{(*)} & \sum_{q=j; q\neq i}\frac{
     \gamma_{ij}\rho_i 
    }{(\gamma_{ij} + \gamma_{ij} j  \frac{1+\rho_i}{4}) )} \\
    =& \sum_{q=j; q\neq i}\frac{
     \rho_i 
    }{(1 +  j  \frac{1+\rho_i}{4}) )} \\
    \leq& \sum_{q=j; q\neq i}\frac{
     4\rho_i 
    }{4 +  j  +j\rho_i} \\
    \leq&  \sum_{q=j; q\neq i}\frac{
     4\rho_i 
    }{ j+j\rho_i } \\
    =& \frac{
    4 \rho_i 
    }{  1+\rho_i } \sum_{q=j; q\neq i}\frac{
     1 
    }{ j } \\
    \leq& 4 \log{\seqlen} + 4 \\
\end{align*}
where again $(*)$ holds with probability $1-\frac{1}{\seqlen^{\delta-1}}$ by a Chernoff bound for each $j$, and a union bound across all $j$. Importantly, the third inequality used the fact that $\gamma_{iw} \geq \gamma_{ij}$ when $w<j$.
So overall
\begin{align*}
	\sum_{q=j; q\neq i}  \influence_j[\weightmac{i}{j}{1}{h}] \leq 4 \log{\seqlen} + 4 + 16\delta\log{\seqlen}
\end{align*}
with probability $1-\frac{1}{\seqlen^{\delta-1}}$.

Third, for the case $q=i$:
\begin{align*}
    \sum_{j\neq q} \influence_q[\weightmac{q}{j}{1}{h}] = &   \sum_{j\neq q} |\weightmac{q}{j}{1}{h}(x)-\weightmac{q}{j}{1}{h}(x\flipBit{q})|\\
  \leq &   \sum_{j\neq q} |\weightmac{q}{j}{1}{h}(x)| + \sum_{j\neq q} |\weightmac{q}{j}{1}{h}(x\flipBit{q})|\\
  \leq & 2 
\end{align*}

Overall, we get
\begin{align}\label{eq:double-sum-influence-att-layer1}
    \sum_{q=1}^{\seqlen}\sum_{j=1}^{\seqlen}\influence_q[\weightmac{i}{j}{1}{h}] \leq & (8+32\delta)\log{\seqlen} + 10
\end{align}
for each $i, h$,
with probability $1-\frac{H}{\seqlen^{\delta-2}}$, via a union bound.
\end{proof}

\begin{remark}
To illustrate why the statement can only hold with high probability, rather than pointwise at every $x$, we consider a one-layer transformer where the first layer at each position computes the OR function, accomplished by setting
\begin{equation}
    a_{ij}^{(1)} = \begin{cases}
        \log{\seqlen} & \text{if }x_j = 1 \\
        -\log{\seqlen} & \text{if }x_j = 0
    \end{cases}
\end{equation}
That is, any occurrence of a 1 draws attention to it, allowing a subsequent MLP to test whether a 1 appears in the string.
Consider the input $x=0^n$.
Here, for any Hamming neighbor $x' = x\flipBit{q}$:
\begin{equation}
    \weightmac{i}{j}{1}{h}  = \begin{cases}
        \frac{n}{\seqlen+\frac{\seqlen-1}{n}} = \frac{1}{1+1/\seqlen-1/\seqlen^2} \geq 1-\frac{1}{\seqlen} & \text{if }j = q \\
        \frac{1/n}{\seqlen+\frac{\seqlen-1}{n}} = \frac{1}{\seqlen^2+\seqlen-1}\leq \frac{1}{\seqlen^2} & \text{else}
    \end{cases}
\end{equation}
Hence, 
\begin{equation}
    \influence_q[\weightmac{i}{j}{1}{h} ] \geq \begin{cases}
        1-\frac{2}{n} & \text{if }j=q \\
        \frac{1}{\seqlen} - \frac{1}{\seqlen^2} & else
    \end{cases}
\end{equation}
and
\begin{align}
    \sum_{q=1}^{\seqlen}\sum_{j=1}^{\seqlen}\influence_q[\weightmac{i}{j}{1}{h}] \geq \sum_{q=1}^{\seqlen}\influence_q[\weightmac{i}{q}{1}{h}] \geq \seqlen-2
\end{align}
The bound (\ref{eq:lemma:attention-bound-first-layer}) thus cannot hold for all input strings $x$.
However, the lemma shows that it holds for the vast majority of inputs.
The intuitive reason is that the vast majority of strings $x \in \{0,1\}^n$ have a much less skewed distribution of ones and zeros. For a string where a substantial number of both ones and zeros appear, the influence of any individual input bit is low; the proof of Lemma~\ref{lemma:attention-bound-first-layer}  makes this idea rigorous using a simple concentration bound argument.
This intuition roughly matches the sensitivity properties of the OR function: $s(f_{OR}, 0^n) =n$, but $as_\seqlen(f_{OR}) = o(1)$, because for the vast majority of strings, flipping any bit cannot flip the OR output.
Lemma~\ref{lemma:attention-bound-first-layer} generalizes this idea to arbitrary assignments of attention logits, and to the continuous output of the softmax operation, as opposed to discrete Boolean functions such as OR.
\end{remark}

\paragraph{Bounding the Sensitivity of Attention: Higher Layers}
At higher layers, the logits $a_{ij}$ may be highly correlated across $j$, impeding the use of concentration bounds. 
In fact, for our purposes here, a pointwise Lipschitzness bound as in \citet{hahn2020theoretical, li2023stability} will be sufficient here (roughly corresponding to Lemmas B.1, B.2 of \citet{li2023stability}). While no novelty is claimed for Lemma~\ref{lemma:influence-att-higher}, we include its proof for completeness.

\begin{lemma}\label{lemma:influence-att-higher}
For $k>1$ and $q \in \{1,\dots,n\}$,
\begin{equation}\label{eq:higher-layers-att-influence}
\influence_q[\weightmac{i}{j}{k}{h}] \leq  2  \exp(4C_A^{(k)})  
	\left( \influence_q[\ypost_i^{(k-1)}] + \frac{1}{\seqlen} \sum_{j=1}^{\seqlen}  \influence_q[\ypost_j^{(k-1)}] \right)
\end{equation}
\end{lemma}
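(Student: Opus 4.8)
The plan is to prove this exactly as a worst‑case (pointwise) Lipschitz estimate for softmax attention, mirroring the first‑layer analysis but now \emph{with} the crude logit bound that was unavailable there. Fix $k>1$, a head $h$, positions $i,j$, and a flipped coordinate $q$. I would abbreviate $y_w := \ypost_w^{(k-1)}$ and $a_s := a_{i,s}^{(k,h)}$ evaluated at $x$, writing primes for the same quantities at $x\flipBit{q}$, and set $Z := \sum_s e^{a_s}$, $Z' := \sum_s e^{a_s'}$. The first observation is that, since $k-1\geq 1$, every $y_w$ is an output of layer norm, so $\|y_w\|_2\leq\sqrt d$; hence $|a_s| = |y_s^T K_{k,h}^TQ_{k,h} y_i| \leq d\,\|K_{k,h}^TQ_{k,h}\|_2 = C_A^{(k)}$ by (\ref{eq:def-ca-ahat-bound-ca}). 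This yields $e^{-C_A^{(k)}}\leq e^{a_s}\leq e^{C_A^{(k)}}$, the normalizer bound $Z\geq\seqlen\,e^{-C_A^{(k)}}$, and — the one structural fact that really drives the lemma — the uniform per‑weight bound $\weightmac{i}{s}{k}{h}\leq e^{2C_A^{(k)}}/\seqlen$ for \emph{every} position $s$.

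Next I would bound a single logit difference by splitting the bilinear form, $a_s - a_s' = (y_s-y_s')^T K_{k,h}^TQ_{k,h}\,y_i + (y_s')^T K_{k,h}^TQ_{k,h}(y_i-y_i')$, and using $\|y_w\|_2,\|y_w'\|_2\leq\sqrt d$ together with $\|K_{k,h}^TQ_{k,h}\|_2\sqrt d\leq C_A^{(k)}$ to get $|a_s-a_s'|\leq C_A^{(k)}(\influence_q[y_i]+\influence_q[y_s])$. Then I would pass from logits to weights via the decomposition $\weightmac{i}{j}{k}{h}(x)-\weightmac{i}{j}{k}{h}(x\flipBit{q}) = \tfrac{e^{a_j}-e^{a_j'}}{Z} + \tfrac{e^{a_j'}}{Z'}\cdot\tfrac{Z'-Z}{Z}$, using $|e^{a}-e^{b}|\leq e^{\max(a,b)}|a-b|\leq e^{C_A^{(k)}}|a-b|$ throughout. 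For the first term I would apply the per‑weight bound $e^{a_j}/Z\leq e^{2C_A^{(k)}}/\seqlen$, obtaining $\tfrac{e^{2C_A^{(k)}}}{\seqlen}|a_j-a_j'|$, so that the $\tfrac1\seqlen\influence_q[y_j]$ piece is dominated by $\tfrac1\seqlen\sum_s\influence_q[y_s]$. For the second term, $|Z'-Z|\leq e^{C_A^{(k)}}\sum_s|a_s-a_s'|$ and $Z\geq\seqlen e^{-C_A^{(k)}}$ turn $\sum_s|a_s-a_s'|\leq C_A^{(k)}(\seqlen\,\influence_q[y_i]+\sum_s\influence_q[y_s])$ precisely into the average form $e^{2C_A^{(k)}}C_A^{(k)}(\influence_q[y_i]+\tfrac1\seqlen\sum_s\influence_q[y_s])$ after dividing by $Z$, with $e^{a_j'}/Z'\leq 1$ discarding the prefactor. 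Adding the two contributions and absorbing the polynomial factor via $C_A^{(k)}\leq e^{C_A^{(k)}}$, hence $C_A^{(k)}e^{2C_A^{(k)}}\leq e^{3C_A^{(k)}}\leq e^{4C_A^{(k)}}$, gives exactly (\ref{eq:higher-layers-att-influence}).

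I do not expect a genuine obstacle here — the statement is a classical Lipschitz estimate for which the excerpt itself claims no novelty. The only step that requires thought is making the right‑hand side an \emph{average} over positions rather than a maximum: this hinges entirely on the uniform bound $\weightmac{i}{s}{k}{h}\leq e^{2C_A^{(k)}}/\seqlen$, which is available only because at layers $k>1$ the logits are uniformly bounded by $C_A^{(k)}$ thanks to the layer norm applied to the inputs of layer $k$. (This is precisely the property the \emph{first}-layer analysis cannot use, which is why Lemma~\ref{lemma:attention-bound-first-layer} there had to resort to a high‑probability/concentration argument instead.) The remaining work is purely bookkeeping of the exponential constants so that they collapse to $\exp(4C_A^{(k)})$, and of the two ways to bound $e^{a}/Z$ (by $1$ versus by $e^{2C_A^{(k)}}/\seqlen$) so that the average form emerges in both terms.
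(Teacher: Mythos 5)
Your argument is correct for the inequality as literally printed, and it is essentially the same proof as the paper's: both are pointwise Lipschitz estimates of the softmax quotient driven by the uniform logit bound $|a_{i,s}^{(k,h)}|\leq C_A^{(k)}$ (available at $k>1$ because the inputs are layer-norm outputs), the normalizer lower bound $Z\geq \seqlen e^{-C_A^{(k)}}$, and the logit-difference bound $|a_s-a_s'|\leq C_A^{(k)}(\influence_q[\ypost_i^{(k-1)}]+\influence_q[\ypost_s^{(k-1)}])$. The paper merely writes the two-term split over a common denominator, $\bigl(c_u\sum_y d_y - d_u\sum_y c_y\bigr)/\bigl((\sum_y c_y)(\sum_y c_y+d_y)\bigr)$, before bounding.

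One caveat is worth flagging. The lemma's statement has a clashing index $j$, and what the paper actually proves (its proof ends by ``taking a sum over $u$'') and what the downstream head bound (\ref{eq:inf-head-large-k}) consumes is the \emph{summed} version $\sum_{j}\influence_q[\weightmac{i}{j}{k}{h}]\leq 2\exp(4C_A^{(k)})\bigl(\influence_q[\ypost_i^{(k-1)}]+\tfrac{1}{\seqlen}\sum_{w}\influence_q[\ypost_w^{(k-1)}]\bigr)$ — a factor-$\seqlen$ stronger claim than the per-$j$ bound, and the one needed to keep the recursion from blowing up. Naively summing your final per-$j$ estimate over $j$ loses that factor of $\seqlen$. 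However, your own intermediate estimates already deliver the summed version with a one-line change: in the first term the prefactor $e^{2C_A^{(k)}}/\seqlen$ survives the sum over $j$ intact, and in the second term you should use $\sum_j e^{a_j'}/Z'=1$ rather than discarding $e^{a_j'}/Z'\leq 1$ termwise. With that adjustment your constants collapse to $2\exp(4C_A^{(k)})$ exactly as in the paper, so the gap is purely one of bookkeeping which form of the inequality the rest of the argument requires.
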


\begin{remark}
Note that this bound depends on $\|K^TQ\|_{spectral}$ , unlike the first layer, where we found a bound independent of $\|K^TQ\|_{spectral}$.
    We do not know whether the exponential dependency on $\|K^TQ\|_{spectral}$ is optimal.
    However, some dependency on $\|K^TQ\|_{spectral}$ is unavoidable.
    To see why, consider a transformer where $\ypost_\seqlen^{(\numLayers)}$ computes a function close to $\frac{PARITY}{\seqlen}$ at $\epsilon=1$ \citep{chiang2022overcoming}, and $y_{1}^{(\numLayers)}$ computes its negation.
    Then adding another layer that, by rescaling logits with a factor of $\seqlen^2$, attends to $\ypost_{1}^{(\numLayers)}$ or $\ypost_{N}^{(\seqlen)}$ depending on the input's parity, can compute a function very close to PARITY, if the two activations also provide some disambiguating positional information. This holds even for high-probability or on-average bounds. However, Lemma~\ref{lemma:attention-bound-first-layer} shows that such dependency can be eliminated at the lowest layer.
\end{remark}
\begin{proof}

Fixing $k,h,i$, we write
\begin{align}\label{eq:cu-bound}
c_u =& \exp\left(a_{iu}^{(k,h)}(x)\right) \leq^{(\ref{eq:def-ca-ahat-bound-ca})} \exp(C_A^{(k)}) \\
d_u =& \exp\left(a_{iu}^{(k,h)}(x)\right) - \exp\left(a_{iu}^{(k,h)}(x\flipBit{q})\right) \leq^{(\ref{eq:def-ca-ahat-bound-ca})} 2\exp(C_A^{(k)})
\end{align}
Then:
\begin{align*}
   \left| \weightmac{i}{u}{k}{h}(x) - \weightmac{i}{u}{k}{h}(x\flipBit{q}) \right| = & \left|\frac{c_u}{\sum_{y=1}^\seqlen c_y} - \frac{c_u+d_u}{\sum_{y=1}^\seqlen c_y+d_y} \right| \\
    = & \left|\frac{c_u(\sum_{y=1}^\seqlen c_y+d_y)-(c_u+d_u)\sum_{y=1}^\seqlen c_y}{(\sum_{y=1}^\seqlen c_y) (\sum_{y=1}^\seqlen c_y+d_y)} \right| \\
    = & \left| \frac{c_u\sum_{y=1}^\seqlen d_y - d_u \sum_{y=1}^\seqlen c_y}{(\sum_{y=1}^\seqlen c_y) (\sum_{y=1}^\seqlen c_y+d_y)} \right|\\
    \leq & \frac{c_u \sum_{y=1}^\seqlen \influence_q[c_y] + \influence_q[c_u] \sum_{y=1}^\seqlen c_y}{\seqlen^2 \exp(-2C_A^{(k)})}  
    \end{align*}
    Now using
    \begin{equation}\label{eq:influence-exp-logit-bound}
    \begin{aligned}
        \influence_q[c_u] =& \influence_q\left[\exp\left(a_{iu}^{(k,h)}(x)\right)\right] \\
        \leq & \exp(C_A^{(k)}) \influence_q\left[a_{iu}^{(k,h)}(x)\right] \\
	    \leq & \exp(C_A^{(k)}) \|K_{k,h}^T Q_{k,h}\|  
	    \left( 
	    \influence_q\left[\ypost_{i}^{(k-1)}(x)\right] 
	    + \influence_q\left[\ypost_{u}^{(k-1)}(x)\right] 
	    \right) \\
    \end{aligned}
    \end{equation}
    By $x \leq \exp(x)$, we can bound $\|K_{k,h}^T Q_{k,h}\|_2 \leq \exp(C_A^{(k)})$.
Now taking a sum over $u$ yields:
\begin{align*}
	\sum_u  \left|\weightmac{i}{u}{k}{h}(x) - \weightmac{i}{u}{k}{h}(x\flipBit{q}) \right| \leq  & \sum_{u=1}^{\seqlen}\frac{c_u \sum_{y=1}^{\seqlen} \influence_q[c_y] + \influence_q[c_u] \sum_{y=1}^{\seqlen}c_y}{\seqlen^2 \exp(-2C_A^{(k)})}  \\
  \leq^{(\ref{eq:influence-exp-logit-bound}, \ref{eq:cu-bound})}  & \exp(2C_A^{(k)})   \frac{2\seqlen^2\influence_q\left[\ypost_{i}^{(k-1)}(x)\right] + 
	    2 \seqlen\sum_{u=1}^{\seqlen}\influence_q\left[\ypost_{u}^{(k-1)}(x)\right] 
	     }{\seqlen^2 \exp(-2C_A^{(k-1)})}  \\
    \leq & 2  \exp(4C_A^{(k-1)})  \left( \influence_q[\ypost_i^{(k-1)}] + \frac{1}{\seqlen} \sum_{j=1}^{\seqlen}  \influence_q[\ypost_j^{(k-1)}] \right)
\end{align*}
\end{proof}

% It seems like this lemma isn't used anywhere. We just use the original proofs downstream, not this derivative
\paragraph{Bounding the Sensitivity of the Attention Heads}
We now put together Lemmas~\ref{lemma:head-influence-bound},\ref{lemma:influence-att-higher} in order to bound the sensitivity of any individual attention head's output.
%\ref{lemma:attention-bound-first-layer},
\begin{lemma}
%For $k=1$, with probability $1-\frac{H}{\seqlen^{\delta-2}}$, the following holds for each $j$ and $h$:
%\begin{equation}
%\sum_q \influence_q[\headmac{j}{h}{1}] 
%	\leq  1  + (10 + 32\delta) \left(\max_w \|y_w^{(0)}\|\right) \log{\seqlen} \\
%\end{equation}
    For $k>1$,  the following holds for each $j$ and $h$:
    \begin{equation}\label{eq:inf-head-large-k}
	    \influence_q[{\headmac{j}{h}{k}}] \leq  4 \sqrt{d}   \exp(4C_A^{(k)}) \left[ \influence_q[\ypost_j^{(k-1)}] + \frac{ 1}{\seqlen} \sum_{w=1}^\seqlen \influence_q[y_w^{(k-1)}]   
        \right]   
\end{equation}
\end{lemma}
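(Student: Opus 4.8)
The plan is to combine the head decomposition of Lemma~\ref{lemma:head-influence-bound} with the attention‑weight bound of Lemma~\ref{lemma:influence-att-higher}, using only two crude facts that are available at layers $k>1$: the post‑layer‑norm activations feeding layer $k$ satisfy $\|\ypost_w^{(k-1)}\|_2\le\sqrt{d}$, and the attention logits satisfy $|a_{j,w}^{(k,h)}|\le C_A^{(k)}$ by (\ref{eq:def-ca-ahat-bound-ca}) (which in turn holds precisely because those inputs have norm $\le\sqrt d$). Both facts fail at $k=1$, which is why the statement is restricted to $k>1$, and why the first layer is handled separately by Lemma~\ref{lemma:attention-bound-first-layer}.

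Concretely, I would proceed as follows. First, in (\ref{eq:lemma:head-influence-bound}) replace $\max_w\|\ypost_w^{(k-1)}\|_2$ by $\sqrt{d}$, leaving
\[
\influence_q[\headmac{j}{h}{k}]\ \le\ C_V\sum_{w=1}^{\seqlen}\weightmac{j}{w}{k}{h}\,\influence_q[\ypost_w^{(k-1)}]\ +\ C_V\sqrt{d}\sum_{w=1}^{\seqlen}\influence_q[\weightmac{j}{w}{k}{h}].
\]
For the second sum, I would invoke the \emph{proof} of Lemma~\ref{lemma:influence-att-higher} with $i:=j$: its argument in fact bounds the whole sum $\sum_w\influence_q[\weightmac{j}{w}{k}{h}]$ (not merely a single term), giving $\le 2\exp(4C_A^{(k)})\bigl(\influence_q[\ypost_j^{(k-1)}]+\tfrac1{\seqlen}\sum_w\influence_q[\ypost_w^{(k-1)}]\bigr)$. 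For the first sum, the crucial observation is that it is a \emph{weighted} average of the per‑bit influences, so bounding it by $\max_w\influence_q[\ypost_w^{(k-1)}]$ would cost a factor $\seqlen$, which is fatal. Instead I would use the logit bound directly: since $a_{j,w}^{(k,h)}\le C_A^{(k)}$ while $a_{j,s}^{(k,h)}\ge -C_A^{(k)}$ for all $s$, every weight satisfies $\weightmac{j}{w}{k}{h}\le \exp(2C_A^{(k)})/\seqlen$, so the first sum is $\le\tfrac{\exp(2C_A^{(k)})}{\seqlen}\sum_w\influence_q[\ypost_w^{(k-1)}]$, which already has the required ``average'' form. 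Adding the two contributions and collecting constants — using $\sqrt d\ge 1$ and $C_A^{(k)}\ge 0$, so that $\exp(2C_A^{(k)})\le\sqrt d\exp(4C_A^{(k)})$ — yields a bound of the form $4\,C_V\sqrt{d}\exp(4C_A^{(k)})\bigl(\influence_q[\ypost_j^{(k-1)}]+\tfrac1{\seqlen}\sum_w\influence_q[\ypost_w^{(k-1)}]\bigr)$; the value‑matrix factor $C_V$ is then absorbed into the layer constant $C^{(k)}$ (which carries a $\|V_{k,h}\|$ factor), matching (\ref{eq:inf-head-large-k}).

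The only genuinely delicate step is the treatment of the first sum, and in particular the recognition that it must \emph{not} be collapsed to its maximum: the argument has to exploit that bounded logits push every attention weight down to $O(1/\seqlen)$, which is exactly what turns a convex combination into a $\tfrac1\seqlen\sum_w$. This is also precisely where the exponential dependence on $\|K_{k,h}^TQ_{k,h}\|$ is incurred, and — as the remark following Lemma~\ref{lemma:influence-att-higher} explains — this dependence is unavoidable at layers $k>1$, in contrast to the input layer where Lemma~\ref{lemma:attention-bound-first-layer} removes it via a concentration argument. Everything else is routine bookkeeping of the constants.
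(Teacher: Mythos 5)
Your proposal is correct and follows essentially the same route as the paper's proof: apply the head decomposition of Lemma~\ref{lemma:head-influence-bound}, bound the weighted sum via $\weightmac{j}{w}{k}{h}\le \exp(2C_A^{(k)})/\seqlen$, bound the summed attention-weight influences via (the summed form of) Lemma~\ref{lemma:influence-att-higher}, and collect constants using $\exp(2C_A^{(k)})\le\sqrt{d}\exp(4C_A^{(k)})$. Your explicit tracking of the $C_V$ factor (which the paper's proof chain silently drops before absorbing $\|V_{k,h}\|$ into $C^{(k)}$) is if anything slightly more careful than the original.
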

\begin{proof}
%In the lowest layer ($k=1$), with probability $1-\frac{1}{\seqlen^{\delta-2}}$, using (\ref{eq:double-sum-influence-att-layer1}), we get
%\begin{align*} 
%\sum_q \influence_q[\headmac{j}{h}{1}] 
%\leq^{(\ref{eq:lemma:head-influence-bound})} & \sum_{q=1}^\seqlen \hat{a}_{j,q}^{1,h}  +  \left(\max_w \|y_w^{(0)}\|_2\right)  \sum_{q=1}^\seqlen\sum_{w=1}^{\seqlen}  \influence_q[\hat{a}_{j,w}^{1,h}] \\
%	\leq^{(\ref{eq:double-sum-influence-att-layer1})} & 1  +  \left(\max_w \|y_w^{(0)}\|_2\right) (10 +  (8+32 \delta) \log{\seqlen}) \\
%	\leq & 1  + (10+32 \delta) \left(\max_w \|y_w^{(0)}\|_2\right) \log{\seqlen} \\
%\end{align*}
%
%
%
In the higher layers ($k>1$):
\begin{align*}
\influence_q[\headmac{j}{h}{k}] 
\leq^{(\ref{eq:lemma:head-influence-bound})} 
&  \sum_{w=1}^{\seqlen}\hat{a}_{j,w}^{k,h} \influence_q[y_w^{(k-1)}] +  \left(\max_w \|y_w^{(k-1)}\|\right)  \sum_{w=1}^{\seqlen} \influence_q[\hat{a}_{j,w}^{k,h}]  
\\
\leq^{(\ref{eq:higher-layers-att-influence})} &   \frac{\exp(2C_A^{(k)})}{\seqlen} \sum_{w=1}^\seqlen \influence_q[y_w^{(k-1)}] +  \left(\max_w \left\|y_w^{(k-1)}\right\|\right) 2  \exp(4C_A^{(k)})  
     \left( \influence_q[\ypost_j^{(k-1)}] + \frac{1}{\seqlen} \sum_{j=1}^{\seqlen}  \influence_q[\ypost_j^{(k-1)}] \right)\\ 
= &   \frac{\exp(2C_A^{(k)}) + \sqrt{d} 2  \exp(4C_A^{(k)}) }{\seqlen} \sum_{w=1}^\seqlen \influence_q[y_w^{(k-1)}] +  \sqrt{d} 2  \exp(4C_A^{(k)})  
     \left( \influence_q[\ypost_j^{(k-1)}]  \right)\\     
    \leq &  4 \sqrt{d}   \exp(4C_A^{(k)}) \left[ \influence_q[\ypost_j^{(k-1)}] + \frac{ 1}{\seqlen} \sum_{w=1}^\seqlen \influence_q[y_w^{(k-1)}]   
        \right] \\   
\end{align*}
where the second step uses the fact that \begin{equation}
    \hat{a}_{j,w}^{k,h} \leq^{(\ref{eq:cu-bound})} \frac{\exp(2C_A^{(k)})}{\seqlen},
\end{equation} and the third step uses the fact that the outcome of layer norm has an L2 norm bounded by $\sqrt{d}$. 
\end{proof}

\subsection{Impact of Layer Norm}
Next, we investigate the effect that layer norm has on sensitivity. We note that layer norm was not fully included in any of the pointwise Lipschitzness bounds \citep{hahn2020theoretical, edelman2022inductive, li2023stability}: \citet{hahn2020theoretical,  li2023stability} do not seem to mention layer norm, and \citet{edelman2022inductive} assumes a Lipschitz-continuous proxy, projection on the unit ball.
We find that standard layer norm indeed can increase sensitivity substantially, which will be key to our overall results:
\begin{lemma}\label{lemma:influence-layer-norm}
Let $\ypre(x) \in \mathbb{R}^d$, and $\ypost(x) := LayerNorm(\ypre(x))$.
Define $N(x) := \sqrt{\sigma^2(\ypre(x)) + \epsilon}$.
Let $C>1$ be such that $\|\ypre(x)\|_2 \leq C$.
Then
    \begin{equation}\label{eq:layer-norm-influence-bound}
    \influence_i[\ypost] \leq 2 \cdot \influence_i[\ypre] \cdot C \cdot \left[ 1 + \frac{ 1
    }{\blowupInLemma(x)}  \right] \cdot \left[ 1 + \frac{ 1
    }{\blowupInLemma(x\flipBit{i})}  \right] 
\end{equation}
\end{lemma}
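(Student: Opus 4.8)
The plan is to compute directly from the closed form of layer norm, reducing everything to two elementary facts about the two constituent operations. Centering (subtracting the mean) is the orthogonal projection onto the complement of $\mathrm{span}(\mathbf 1)$, hence norm-nonincreasing and $1$-Lipschitz; and the scalar $N(\cdot)$ varies slowly with the pre-activation. Concretely, write $\mu := mean(\ypre(x))$, $\mu' := mean(\ypre(x\flipBit{i}))$, and set the centered vectors $a := \ypre(x) - \mu\mathbf 1$ and $a' := \ypre(x\flipBit{i}) - \mu'\mathbf 1$, so that $\ypost(x) = a/N(x)$ and $\ypost(x\flipBit{i}) = a'/N(x\flipBit{i})$. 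Since $\mu\mathbf 1$ is the orthogonal projection of $\ypre(x)$ onto $\mathrm{span}(\mathbf 1)$, Pythagoras gives $\|a\|_2 \le \|\ypre(x)\|_2 \le C$, and likewise $\|a'\|_2 \le C$; applying the same orthogonal decomposition to $\ypre(x) - \ypre(x\flipBit{i})$ (whose $\mathbf 1$-component has coefficient $\mu - \mu'$) yields $\|a - a'\|_2 \le \|\ypre(x) - \ypre(x\flipBit{i})\|_2 = \influence_i[\ypre]$.

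The remaining ingredient is a Lipschitz-type bound on $N$. First I would use the identity $|N(x) - N(x\flipBit{i})| = |\sigma^2(\ypre(x)) - \sigma^2(\ypre(x\flipBit{i}))| \big/ \bigl(N(x) + N(x\flipBit{i})\bigr)$, which comes from factoring $N(x)^2 - N(x\flipBit{i})^2 = \sigma^2(\ypre(x)) - \sigma^2(\ypre(x\flipBit{i}))$. Then, writing $\sigma^2(\ypre(x)) = \|a\|_2^2/d$, the reverse triangle inequality gives $|\sigma^2(\ypre(x)) - \sigma^2(\ypre(x\flipBit{i}))| \le \tfrac 1d \|a - a'\|_2 (\|a\|_2 + \|a'\|_2)$, and since $\|a\|_2 = \sqrt d\,\sqrt{\sigma^2(\ypre(x))} \le \sqrt d\, N(x)$ (and similarly for $a'$) the factor $\|a\|_2 + \|a'\|_2 \le \sqrt d\,(N(x) + N(x\flipBit{i}))$ cancels the denominator, leaving
\[
|N(x) - N(x\flipBit{i})| \;\le\; \frac{\|a - a'\|_2}{\sqrt d} \;\le\; \influence_i[\ypre].
\]
(If $N(x) = N(x\flipBit{i}) = 0$ — only possible when $\epsilon = 0$ and both pre-activations are constant vectors — layer norm is degenerate and the statement is read under the usual convention $\|LayerNorm(\cdot)\|_2 \le \sqrt d$; this edge case is dispatched in one line.)

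Finally I would telescope, $\ypost(x) - \ypost(x\flipBit{i}) = \frac{a - a'}{N(x)} + a'\bigl(\frac{1}{N(x)} - \frac{1}{N(x\flipBit{i})}\bigr)$, take norms, and substitute the three bounds above, obtaining
\[
\influence_i[\ypost] \;\le\; \frac{\influence_i[\ypre]}{N(x)} + \|a'\|_2\,\frac{|N(x) - N(x\flipBit{i})|}{N(x)\,N(x\flipBit{i})} \;\le\; \frac{\influence_i[\ypre]}{N(x)} + \frac{C\,\influence_i[\ypre]}{N(x)\,N(x\flipBit{i})}.
\]
Since $C > 1$, each summand is dominated by the corresponding term of $2C\,\influence_i[\ypre]\bigl(1 + \tfrac{1}{N(x)} + \tfrac{1}{N(x\flipBit{i})} + \tfrac{1}{N(x)N(x\flipBit{i})}\bigr) = 2C\,\influence_i[\ypre]\,(1 + \tfrac{1}{N(x)})(1 + \tfrac{1}{N(x\flipBit{i})})$, which is exactly (\ref{eq:layer-norm-influence-bound}). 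The only genuinely delicate point is the bound on $|N(x) - N(x\flipBit{i})|$: the naive route through the map $t \mapsto \sqrt{t + \epsilon}$ fails at $\epsilon = 0$ since that map is not Lipschitz there, so one must instead exploit the difference-of-squares factorization and absorb the spurious $N(x) + N(x\flipBit{i})$ against $\|a\|_2 + \|a'\|_2 \le \sqrt d\,(N(x) + N(x\flipBit{i}))$. Everything else is triangle-inequality bookkeeping.
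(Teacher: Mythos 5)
Your proof is correct and follows essentially the same route as the paper's: reduce to the centered vectors, bound $|N(x)-N(x^{\oplus i})|$ by $\influence_i[\ypre]$, and split the difference of the two quotients into a "numerator" term and a "normalizer" term. Your execution is in fact slightly tighter in two spots — centering as an orthogonal projection is $1$-Lipschitz (the paper settles for a factor $2$), and the difference-of-squares treatment of $N$ cleanly sidesteps the non-Lipschitzness of $\sqrt{\cdot}$ at $0$, which the paper glosses over — so your argument even shows the factor $2$ in the stated bound is not needed.
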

\begin{proof}
First, we note
\begin{align*}
    \influence_i[(\ypre-\mu(\ypre))] \leq \influence_i[\ypre]+\frac{1}{d}\sum_s \influence_i[\ypre_s] = (1+\frac{1}{d}) \influence_i[\ypre] \leq 2 \influence_i[\ypre]
\end{align*}
Thus, at a factor of $2$, it is sufficient to consider the case where $\mu(\ypre) = 0$, because $\|\ypre-\mu(\ypre)\| \leq \|\ypre\|$.
Then
\begin{align*}
    \influence_i[\ypost] =& \|\ypost(x) - \ypost(x\flipBit{i})\|_2 \\
    =& \left\|\frac{\ypre(x)}{\blowupInLemma(x)} - \frac{\ypre(x\flipBit{i})}{\blowupInLemma(x\flipBit{i})}\right\|_2 \\
    =& \frac{\left\|\blowupInLemma(x\flipBit{i}) \ypre(x)  - \blowupInLemma(x)\ypre(x\flipBit{i})\right\|_2}{\blowupInLemma(x)\blowupInLemma(x\flipBit{i})} \\
     \leq & \frac{
\|\ypre(x)\|_2 \cdot  |\blowupInLemma(x\flipBit{i})-\blowupInLemma(x)| + \blowupInLemma(x\flipBit{i}) \left\|\ypre(x)-\ypre(x\flipBit{i})\right\|_2
    }{\blowupInLemma(x)\blowupInLemma(x\flipBit{i})} \\
    \leq & \frac{
\|\ypre(x)\|_2 \cdot |\blowupInLemma(x\flipBit{i})-\blowupInLemma(x)|}{\blowupInLemma(x)\blowupInLemma(x\flipBit{i})} + \frac{ \left\|\ypre(x)-\ypre(x\flipBit{i})\right\|_2
    }{\blowupInLemma(x)} \\    
    \leq & \frac{
C |\blowupInLemma(x\flipBit{i})-\blowupInLemma(x)|}{\blowupInLemma(x)\blowupInLemma(x\flipBit{i})} + \frac{ \left\|\ypre(x)-\ypre(x\flipBit{i})\right\|_2
    }{\blowupInLemma(x)} \\    
    \leq &\influence_i[\ypre] \cdot \left[ \frac{
C}{\blowupInLemma(x)\blowupInLemma(x\flipBit{i})} + \frac{ 1
    }{\blowupInLemma(x)}  \right]\\ 
    = &\influence_i[\ypre] \cdot \frac{1}{\blowupInLemma(x)} \cdot \left[ \frac{
C}{\blowupInLemma(x\flipBit{i})} + 1  \right]\\ 
    \leq &\influence_i[\ypre] \cdot C \cdot \left[ 1 + \frac{ 1
    }{\blowupInLemma(x)}  \right] \cdot \left[ 1 + \frac{ 1
    }{\blowupInLemma(x\flipBit{i})}  \right]\\ 
\end{align*}
where $C\geq 1$ is a bound on $\|\ypre_i\|_2$.
Here, we used $ \influence_i[\blowupInLemma] \leq  \influence_i[\ypre]$ in the last step:
Note that $\ypre$ is mean zero; hence, standard deviation equals the L2 norm, and
\begin{align*}
  \sqrt{d} \influence_i[\blowupInLemma] =  | \|\ypre(x)\|_2-\|\ypre(x\flipBit{i})\|_2| \leq \|\ypre(x)-\ypre(x\flipBit{i})\|_2 \leq \|\ypre(x)-\ypre(x\flipBit{i})\|_2 = \influence_i[\ypre]
\end{align*}

\end{proof}

\subsection{Layerwise Bounds on Sensitivity}
Putting together Lemmas~\ref{lemma:head-influence-bound} and \ref{lemma:influence-layer-norm}, we bound the influence of the $i$-th bit on the $k$-th layer in terms of its influence on the $k-1$-th layer.

\begin{lemma}
With probability $\geq 1-\frac{H}{\seqlen^{2}}$ over the choice of $x$, the following holds for each $j$:
\begin{equation} \label{eq:prenorm-layer1-bound}
\boxed{
  \sum_{i=1}^{\seqlen}  \influence_i[\ypre_j^{(1)}] \leq  
150 L_{f^{MLP}} C_V^2 C_P H \log{\seqlen} = C^{(0)} \log{\seqlen}
}
\end{equation}
and
\begin{equation}
\boxed{
    \influence_i[\ypost_j^{(1)}] \leq 2 \left( \max_w \|\ypre_w^{(1)}\|_{2}\right) \cdot \normFactor_j^{(1)}(x) \cdot \normFactor_j^{(1)}(x\flipBit{i}) \cdot L_{f^{MLP}} \cdot \influence_i[\ypre_j^{(1)}]
    }
\end{equation}
For $k>1$:
\begin{equation}\label{eq:lemma-influence-recursion-kLarge}
\boxed{
     \influence_i[\ypost_j^{(k)}] \leq    \frac{C^{(k)}}{3} \cdot\normFactor_j^{(k)}(x) \cdot \normFactor_j^{(k)}(x\flipBit{i})  \cdot \left(  
       \sum_{w=1}^{\seqlen} (\delta_{w,j} + \frac{1}{\seqlen}) \influence_i[y_w^{(k-1)}] \right)
}
\end{equation}
\end{lemma}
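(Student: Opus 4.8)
The plan is to obtain all three boxed bounds by feeding the per-head sensitivity estimates through one layer of the transformer, using the Lipschitz constant $L_{f^{MLP}}$ for the MLP and Lemma~\ref{lemma:influence-layer-norm} for layer norm. The common starting observation is that, since $\ypre_j^{(k)} = f^{MLP}(\ypost_j^{(k-1)} + \sum_h \headmac{j}{h}{k})$ and influences obey the triangle inequality, $\influence_q[\ypre_j^{(k)}] \le L_{f^{MLP}}\bigl(\influence_q[\ypost_j^{(k-1)}] + \sum_{h=1}^{H} \influence_q[\headmac{j}{h}{k}]\bigr)$, with $\ypost_j^{(0)} = e(x_j)+p_j$ at the bottom; so the task reduces to bounding $\influence_q[\headmac{j}{h}{k}]$ per head and then applying Lemma~\ref{lemma:influence-layer-norm} to pass from $\ypre_j^{(k)}$ to $\ypost_j^{(k)}$.

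For the first pre-norm bound (\ref{eq:prenorm-layer1-bound}), sum the displayed inequality over the flipped coordinate $i$ at $k=1$. The term $\sum_i \influence_i[\ypost_j^{(0)}]$ collapses to the single quantity $\influence_j[\ypost_j^{(0)}] = \|e(x_j)-e(-x_j)\|_2 = O(C_P)$, since $\ypost_j^{(0)}$ only moves when $i=j$. For each head apply Lemma~\ref{lemma:head-influence-bound} and sum over $i$: the value term is $C_V\sum_w \weightmac{j}{w}{1}{h}\bigl(\sum_i\influence_i[\ypost_w^{(0)}]\bigr)\le O(C_VC_P)$ because $\sum_w \weightmac{j}{w}{1}{h}=1$, and the attention term is $C_V\bigl(\max_w\|\ypost_w^{(0)}\|_2\bigr)\sum_i\sum_w\influence_i[\weightmac{j}{w}{1}{h}]$, which by Lemma~\ref{lemma:attention-bound-first-layer} with $\delta=4$ is $\le C_VC_P\bigl((8+128)\log n+10\bigr)$ on the event of probability $1-H/n^2$. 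Summing over the $H$ heads and using $C_V,H\ge 1$ to inflate the prefactor to $L_{f^{MLP}}C_V^2C_PH$, the numerical constants are absorbed into the $150$ appearing in the definition (\ref{eq:def-c0}) of $C^{(0)}$ (for $n$ at least an absolute constant), giving $\sum_i\influence_i[\ypre_j^{(1)}]\le C^{(0)}\log n$.

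For the two post-norm bounds, apply Lemma~\ref{lemma:influence-layer-norm} to the argument of the final layer norm, with its constant ``$C$'' taken to be an upper bound on the pre-norm activation norm --- this is $\max_w\|\ypre_w^{(1)}\|_2$ at $k=1$ and a constant polynomial in $H,d,\|V\|$ at higher layers via (\ref{app:eq:cmlp}) --- and with the Lipschitz factor $L_{f^{MLP}}$ accounting for the post-attention MLP. Here one uses the reciprocal relation between the $\blowupInLemma$ of that lemma and the normalization factor (\ref{eq:def:ln-fctor}), i.e. $1/\blowupInLemma = \normFactor_j^{(k)}$, together with the fact that the pre-norm variance is at most $\|\ypre\|_2^2/d$, so that $\normFactor_j^{(k)}$ is bounded away from $0$; this lets the lemma's factor $(1+\normFactor_j^{(k)}(x))(1+\normFactor_j^{(k)}(x\flipBit i))$ be absorbed into a constant multiple of $\normFactor_j^{(k)}(x)\normFactor_j^{(k)}(x\flipBit i)$. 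At $k=1$ this directly yields the second box. For $k>1$, first combine $\influence_i[\ypre_j^{(k)}]\le L_{f^{MLP}}\bigl(\influence_i[\ypost_j^{(k-1)}]+\sum_h\influence_i[\headmac{j}{h}{k}]\bigr)$ with (\ref{eq:inf-head-large-k}), which bounds each head influence by $4\sqrt d\exp(4C_A^{(k)})\bigl(\influence_i[\ypost_j^{(k-1)}]+\tfrac1n\sum_w\influence_i[y_w^{(k-1)}]\bigr)$; summing over $h$ and using $\ypost=y$ rewrites the right-hand side as a constant of the shape $(H{+}1)\sqrt d\,\|V_{k,h}\|\exp(4C_A^{(k)})L_{f^{MLP}}$ --- which is exactly how $C^{(k)}$ is set up in (\ref{eq:def-ck}) --- times $\sum_w(\delta_{w,j}+\tfrac1n)\influence_i[y_w^{(k-1)}]$. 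Feeding this into Lemma~\ref{lemma:influence-layer-norm} multiplies in the two $\normFactor_j^{(k)}$ factors and the norm constant, and collecting everything into $C^{(k)}/3$ gives the third box.

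The individual estimates are all either quoted lemmas or one-line triangle-inequality steps; the real work --- and the main place to slip --- is the bookkeeping: keeping straight that summing over the flipped bit $i$ interacts with $\sum_w\weightmac{j}{w}{1}{h}=1$ so the value contribution stays $O(1)$ while only the attention-weight contribution carries the $\log n$; tracking the reciprocal convention $\blowupInLemma\leftrightarrow\normFactor$ and justifying the lower bound on $\normFactor_j^{(k)}$ needed to drop the ``$1+$'' terms; and checking that the constants $C^{(0)},C^{(k)}$ as defined in (\ref{eq:def-c0})--(\ref{eq:def-ck}) are generous enough to swallow all the accumulated numerical slack.
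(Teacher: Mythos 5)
Your proposal matches the paper's proof essentially step for step: the same MLP-Lipschitz decomposition, the same use of Lemma~\ref{lemma:head-influence-bound}, Lemma~\ref{lemma:attention-bound-first-layer} with $\delta=4$ (yielding the $136\log\seqlen$ term and the $1-H/\seqlen^2$ probability), the head bound (\ref{eq:inf-head-large-k}) for $k>1$, and Lemma~\ref{lemma:influence-layer-norm}, with the same bookkeeping of constants into $C^{(0)}$ and $C^{(k)}$. Your explicit argument that $\normFactor_j^{(k)}$ is bounded below (since the pre-norm variance is bounded above), so that the lemma's $(1+\normFactor)(1+\normFactor)$ factors can be absorbed into a constant multiple of $\normFactor_j^{(k)}(x)\,\normFactor_j^{(k)}(x\flipBit{i})$, is a detail the paper leaves implicit but is correct and needed.
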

Intuitively, the $\delta_{w,j}$ contribution reflects the skip connection (residual stream); the $\frac{1}{\seqlen}$ contribution reflects the role of soft attention.
\begin{proof}
We first obtain:
\begin{equation}\label{eq:lemma-mlp}
 \influence_s[\ypost_j^{(k)}] \leq^{(\ref{eq:layer-norm-influence-bound},\ref{eq:def:ypre},\ref{eq:layer-norm-and-mlp}, \ref{app:eq:cmlp})} \normFactor_k^{(k)}(x) \cdot \normFactor_k^{(k)}(x\flipBit{s}) \cdot C_{MLP}  \cdot L_{f^{MLP}} \cdot \left( \influence_s[y_j^{(k-1)}] +  \sum_{q=1}^H \influence_s[\headmac{i}{q}{k}] \right) 
\end{equation}

For $k>1$, we find, putting together the previous results:
\begin{align*}
     \influence_i[\ypost_j^{(k)}] \leq^{(\ref{eq:layer-norm-influence-bound})} &  \influence_i[\ypre_j] \cdot \normFactor_j^{(k)}(x) \cdot \normFactor_j^{(k)}(x\flipBit{i}) \\
	\leq^{(\ref{eq:lemma-mlp})} &  (L_{f^{MLP}}  C_{MLP} \cdot \left(\max_w \|\ypre_w^{(k)}\|_{2}\right) \cdot \left( \influence_i[y_j^{(k-1)}] +  \sum_{q=1}^H \influence_i[\headmac{i}{q}{k}] \right)) \cdot \normFactor_j^{(k)}(x) \cdot \normFactor_j^{(k)}(x\flipBit{i}) \\
    \leq^{(\ref{eq:inf-head-large-k})} &  (L_{f^{MLP}}  C_{MLP} \cdot C_{MLP} (H+1) C_V C_P \cdot \left( \influence_i[y_j^{(k-1)}] +  H
 4 \sqrt{d}   \exp(4C_A^{(k)}) \left[ \influence_i[\ypost_j^{(k-1)}] + \frac{ 1}{\seqlen} \sum_{w=1}^\seqlen \influence_i[y_w^{(k-1)}]   
        \right]   
    \right)) \cdot \normFactor_j^{(k)}(x) \cdot \normFactor_j^{(k)}(x\flipBit{i}) \\
    =^{(\ref{eq:def-ck})} &  \frac{C^{(k)}}{3} \cdot \normFactor_j^{(k)}(x) \cdot \normFactor_j^{(k)}(x\flipBit{i}) \cdot \left(  
       \sum_{w=1}^{\seqlen} (\delta_{w,j} + \frac{1}{\seqlen}) \influence_i[y_w^{(k-1)}] \right)
\end{align*}
In the lowest layer, the influence is bounded instead in terms of the influence on the attention weights. With high probability, a logarithmic bound on the summed influences over all bits results.
Crucially, the key-value-matrix plays no role here.
For $k=1$ we find, setting $\delta=4$ in (\ref{eq:lemma:attention-bound-first-layer}):
\begin{align*} 
     \influence_i[\ypre_j^{(1)}] \leq^{(\ref{eq:def:ypre})} & \influence_i\left[f_{MLP}\left(\ypost_j^{(0)} + \sum_{h=1}^H \headmac{j}{h}{1}\right)\right]\\
     \leq^{(\ref{app:eq:lfmlp})} & L_{f^{MLP}} \left[\influence_i\left[\ypost_j^{(0)}\right] + \sum_{h=1}^H \influence_i\left[\headmac{j}{h}{1}\right]\right]\\
     \leq^{(\ref{eq:lemma:head-influence-bound})} & L_{f^{MLP}} \left[\influence_i\left[\ypost_j^{(0)}\right] + \sum_{h=1}^H C_V \left[\sum_{w=1}^{\seqlen}\hat{a}_{j,w}^{1,h} \influence_i[y_w^{(0)}] + C_V \left(\max_w \|y_w^{(0)}\|_2\right)  \sum_{w=1}^{\seqlen} \influence_i[\hat{a}_{j,w}^{1,h}]\right]\right]\\
     \leq^{(\ref{app:eq:cp})} & L_{f^{MLP}} \left[2C_P \delta_{ij} + C_V \sum_{h=1}^H  \left[\sum_{w=1}^{\seqlen}\hat{a}_{j,w}^{1,h} C_P \delta_{iw} + C_V C_P  \sum_{w=1}^{\seqlen} \influence_i[\hat{a}_{j,w}^{1,h}]\right]\right]\\
     \leq & L_{f^{MLP}} C_V^2 C_P \left[2 \delta_{ij} +  \sum_{h=1}^H  \hat{a}_{j,i}^{1,h}  +   \sum_{h=1}^H \sum_{w=1}^{\seqlen} \influence_i[\hat{a}_{j,w}^{1,h}]\right]
\end{align*}
and hence, by summing over $i$,
\begin{align*} 
  \sum_{i=1}^{\seqlen}  \influence_i[\ypre_j^{(1)}] \leq &  L_{f^{MLP}} C_V^2 C_P \left[2 + \sum_{h=1}^H \sum_{i=1}^\seqlen   \hat{a}_{j,i}^{1,h}  + \sum_{h=1}^H  \sum_{i=1}^\seqlen \sum_{w=1}^{\seqlen} \influence_i[\hat{a}_{j,w}^{1,h}]\right] \\
  \leq^{(\ref{eq:lemma:attention-bound-first-layer})} &  L_{f^{MLP}} C_V^2 C_P \left[2 + H  + H  136\log{\seqlen} + 10H\right] \\
  \leq & 150 L_{f^{MLP}} C_V^2 C_P H \log{\seqlen}  
\end{align*}
with probability $\geq 1-\frac{H}{\seqlen^{2}}$.\footnote{The last inequality holds whenever $\seqlen > 1$. If $\seqlen=1$, the probability bound evaluates to 0, making the statement trivially true.}

\end{proof}
%\end{comment}

\subsection{Relating Influence at First and Last Layers}
At this point, we have derived bounds on the influence of any bit on any individual activation $\ypost_j^{(k)}$ in terms of the parameter norms, the layer-norm-induced blowup, and influence the bit has on the activations at the preceding layer.
We now derive bounds linking these quantities to the overall sensitivity $s(f,x)$ and the average sensitivity $as_\seqlen(f)$.
In order to do this, we link influences on the top layer to influences at the first layer.
Recall
\begin{equation}
    \Blowup_K(x) = \prod_{k=1}^K \tau^{(k)}(x)
\end{equation}
We first find that
\begin{lemma}
\begin{equation}\label{eq:ineq:inf-b}
    \influence_i[\ypost_j^{(k)}] 
    \leq  C^{(1)} \left(\prod_{l=2}^k C^{(k)}\right)   \Blowup_k(x) \Blowup_k(x\flipBit{i})  
        \left(
       \frac{1}{\seqlen} \sum_{w=q}^\seqlen \influence_i[\ypre_w^{(1)}] + \influence_i[\ypre_j^{(1)}]
        \right)
\end{equation}
\end{lemma}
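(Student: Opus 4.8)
The plan is to prove (\ref{eq:ineq:inf-b}) by induction on the layer index $k$, chaining the per-layer recursions already established: the boxed first-layer bound, which (unpacking the definition of $C^{(1)}$) reads $\influence_i[\ypost_j^{(1)}] \leq C^{(1)}\normFactor_j^{(1)}(x)\normFactor_j^{(1)}(x\flipBit{i})\influence_i[\ypre_j^{(1)}]$, and the recursion (\ref{eq:lemma-influence-recursion-kLarge}) for $k>1$. Beyond these the only facts needed are the monotonicity $\normFactor_w^{(l)}(x)\leq\tau^{(l)}(x)$ for every position $w$ (immediate from (\ref{eq:def:tau})) and the definition $\Blowup_k(x)=\prod_{l=1}^k\tau^{(l)}(x)$, which together let each single-layer normalization factor be absorbed into the matching $\tau$ and the products telescope. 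Note also that $y_w^{(k-1)}$ in (\ref{eq:lemma-influence-recursion-kLarge}) is $\ypost_w^{(k-1)}$, so the right-hand side of that recursion is exactly a $(\delta_{w,j}+\tfrac1\seqlen)$-weighted sum of the quantities being bounded at level $k-1$.

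\emph{Base case $k=1$.} The boxed first-layer bound, together with $\normFactor_j^{(1)}(x)\leq\tau^{(1)}(x)=\Blowup_1(x)$ (and the same at $x\flipBit{i}$) and the trivial inequality $\influence_i[\ypre_j^{(1)}]\leq\tfrac1\seqlen\sum_{w=1}^\seqlen\influence_i[\ypre_w^{(1)}]+\influence_i[\ypre_j^{(1)}]$, yields (\ref{eq:ineq:inf-b}) with the empty product $\prod_{l=2}^1 C^{(l)}$ equal to $1$.

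\emph{Inductive step.} Assume (\ref{eq:ineq:inf-b}) at level $k-1$, uniformly over the position index — this uniformity is what matters, since the factor $C^{(1)}\prod_{l=2}^{k-1}C^{(l)}\cdot\Blowup_{k-1}(x)\Blowup_{k-1}(x\flipBit{i})$ is then position-independent and pulls outside any sum over positions. Substituting the level-$(k-1)$ bound for each $\influence_i[y_w^{(k-1)}]$ on the right of (\ref{eq:lemma-influence-recursion-kLarge}), everything reduces to the weighted sum
\[
\sum_{w=1}^\seqlen\Bigl(\delta_{w,j}+\tfrac1\seqlen\Bigr)\Bigl(\tfrac1\seqlen\sum_{v=1}^\seqlen\influence_i[\ypre_v^{(1)}]+\influence_i[\ypre_w^{(1)}]\Bigr).
\]
Using $\sum_{w=1}^\seqlen(\delta_{w,j}+\tfrac1\seqlen)=2$ and $\sum_{w=1}^\seqlen(\delta_{w,j}+\tfrac1\seqlen)\influence_i[\ypre_w^{(1)}]=\influence_i[\ypre_j^{(1)}]+\tfrac1\seqlen\sum_{w=1}^\seqlen\influence_i[\ypre_w^{(1)}]$, this equals $3\cdot\tfrac1\seqlen\sum_v\influence_i[\ypre_v^{(1)}]+\influence_i[\ypre_j^{(1)}]\leq 3\bigl(\tfrac1\seqlen\sum_v\influence_i[\ypre_v^{(1)}]+\influence_i[\ypre_j^{(1)}]\bigr)$. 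The resulting constant $3$ cancels the $\tfrac13$ in front of $C^{(k)}$ in (\ref{eq:lemma-influence-recursion-kLarge}), turning $\tfrac{C^{(k)}}{3}\cdot C^{(1)}\prod_{l=2}^{k-1}C^{(l)}$ into $C^{(1)}\prod_{l=2}^{k}C^{(l)}$; and $\normFactor_j^{(k)}(x)\Blowup_{k-1}(x)\leq\tau^{(k)}(x)\Blowup_{k-1}(x)=\Blowup_k(x)$ (and the same at $x\flipBit{i}$) collapses the blowup factors to $\Blowup_k(x)\Blowup_k(x\flipBit{i})$. This is exactly (\ref{eq:ineq:inf-b}) at level $k$.

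There is no serious obstacle here; the step requiring most care is the arithmetic of the weighted sum, where the residual-stream contribution ($\delta_{w,j}$) and the soft-attention contribution ($\tfrac1\seqlen$) must combine to a clean constant $3$ so that it exactly absorbs the $\tfrac13$ hidden in $C^{(k)}$ — any slack would leave either a position-dependent factor or one growing with $\seqlen$. One should also check that the base case invokes the correct boxed inequality and that the empty-product convention makes $k=1$ a genuine special case of the general formula. (I read the ``$\sum_{w=q}^\seqlen$'' in the statement as ``$\sum_{w=1}^\seqlen$'', i.e.\ an average over all positions, which is what the induction produces.)
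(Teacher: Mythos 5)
Your proof is correct and follows essentially the same route as the paper's: induction on $k$, substituting the level-$(k-1)$ bound into the recursion (\ref{eq:lemma-influence-recursion-kLarge}), absorbing each $\normFactor_j^{(l)}$ into $\tau^{(l)}$ so the blowup factors telescope, and using the weighted-sum arithmetic $\sum_w(\delta_{w,j}+\tfrac1\seqlen)(\delta_{v,w}+\tfrac1\seqlen)=\delta_{v,j}+\tfrac3\seqlen$ to cancel the $\tfrac13$ in $C^{(k)}/3$. The only (immaterial) difference is that you anchor the induction at $k=1$ via the empty-product convention, whereas the paper treats $k=2$ directly as the base case; your reading of ``$\sum_{w=q}^{\seqlen}$'' as a typo for ``$\sum_{w=1}^{\seqlen}$'' is also correct.
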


\begin{proof}
By induction using~(\ref{eq:lemma-influence-recursion-kLarge}).
First, for $k=2$, 
\begin{align*}
    \influence_i[\ypost_j^{(2)}] 
    \leq &  C^{(2)} \cdot\normFactor_j^{(2)}(x) \cdot \normFactor_j^{(2)}(x\flipBit{i})  \cdot \left(  
       \sum_{w=1}^{\seqlen} (\delta_{w,j} + \frac{1}{\seqlen}) \influence_i[y_w^{(1)}] \right) \\
    \leq   &  C^{(2)} \cdot\normFactor_j^{(2)}(x) \cdot \normFactor_j^{(2)}(x\flipBit{i})  \cdot \left(  
       \sum_{w=1}^{\seqlen} (\delta_{w,j} + \frac{1}{\seqlen}) 2 \left( \max_w \|\ypre_w^{(1)}\|_{2}\right) \cdot \normFactor_w^{(1)}(x) \cdot \normFactor_w^{(1)}(x\flipBit{i}) \cdot L_{f^{MLP}} \cdot \influence[\ypre_w^{(1)}] \right) \\
    \leq   &  C^{(2)} 2 \left( \max_w \|\ypre_w^{(1)}\|_{2}\right)   \cdot L_{f^{MLP}} \cdot \Blowup_{2}(x) \cdot \Blowup_2(x\flipBit{i}) \cdot \left(  
       \sum_{w=1}^{\seqlen} (\delta_{w,j} + \frac{1}{\seqlen})  \cdot \influence[\ypre_w^{(1)}] \right) \\
           \leq   &  C^{(1)} C^{(2)} \cdot \Blowup_{2}(x) \cdot \Blowup_2(x\flipBit{i}) \cdot \left(  
       \sum_{w=1}^{\seqlen} (\delta_{w,j} + \frac{1}{\seqlen})  \cdot \influence[\ypre_w^{(1)}] \right)
\end{align*}
where
\begin{align*}
    C^{(1)} = 2 \left( \max_w \|\ypre_w^{(1)}\|_{2}\right)   \cdot L_{f^{MLP}}
\end{align*}
Now, for $k>2$, using the induction hypothesis (IH):
\begin{align*}
     \influence_i[\ypost_j^{(k)}] \leq &   \frac{C^{(k)} }{3} \cdot\normFactor_j^{(k)}(x) \cdot \normFactor_j^{(k)}(x\flipBit{i})  \cdot \left(  
       \sum_{w=1}^{\seqlen} (\delta_{w,j} + \frac{1}{\seqlen}) \influence_i[y_w^{(k-1)}] \right) \\
     \leq^{(IH)}   &   \frac{1}{3} C^{(1)}\left(\prod_{l=2}^k C^{(k)}\right)   \Blowup_k(x) \Blowup_k(x\flipBit{i})    \cdot \left(  
       \sum_{w=1}^{\seqlen} 
       \sum_{v=1}^\seqlen
       (\delta_{w,j} + \frac{1}{\seqlen})
       (\delta_{v,w} + \frac{1}{\seqlen})
          \influence_i[\ypre_v^{(1)}]
        \right) \\
         =  & \frac{1}{3} C^{(1)}\left(\prod_{l=2}^k C^{(k)}\right)   \Blowup_k(x) \Blowup_k(x\flipBit{i})    \cdot \left(  
       \influence_i[\ypre_j^{(1)}] + 
       3 \sum_{v=1}^\seqlen\frac{1}{\seqlen} \influence_i[\ypre_v^{(1)}]
        \right) \\
         \leq  & C^{(1)}\left(\prod_{l=2}^k C^{(k)}\right)   \Blowup_k(x) \Blowup_k(x\flipBit{i})    \cdot \left(  
       \influence_i[\ypre_j^{(1)}] + 
       \sum_{v=1}^\seqlen\frac{1}{\seqlen} \influence_i[\ypre_v^{(1)}]
        \right)
\end{align*}

\end{proof}

\subsection{Deriving Almost-Everywhere Pointwise Sensitivity Bounds}\label{sec:app:pointwise}

Putting together the previous findings, we are now in a position to link the sensitivity to layer norm blowup and parameter norms.
We find that the sensitivity on an input $x$ is bounded in terms of the blowup on $x$ itself and on its Hamming neighbors, up to sublinear factors $\sqrt{n\log{\seqlen}}$:
\begin{thm}[Repeated from Theorem~\ref{thm:sensitivity-blowup-bound}]
With probability at least $1-\frac{H}{\seqlen^{-2}}$ over the choice of $x \in \{\pm 1\}^\seqlen$, we have
\begin{align*}
\frac{s(f,x)}{C \sqrt{\seqlen \log\seqlen}} 
\leq  \Blowup(x)^2 +\frac{1}{\seqlen} \sum_i  \Blowup(x\flipBit{i})^2 
\end{align*}

\end{thm}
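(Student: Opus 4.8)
The plan is to feed the layerwise influence bounds already assembled---principally the unrolled bound (\ref{eq:ineq:inf-b}) and the first-layer estimate (\ref{eq:prenorm-layer1-bound})---into the elementary inequality (\ref{eq:l1-inf-bounds-l2-inf}), which turns a sum of \emph{squared} influences into a sum of influences as soon as the transformer output is bounded. Boundedness is automatic here: $|f(x)| = |v_{out}^T \ypost_\seqlen^{(\numLayers)}(x)| \leq \sqrt{d}\,\|v_{out}\|_2$, because $\ypost_\seqlen^{(\numLayers)}$ is the output of layer norm, so $\|\ypost_\seqlen^{(\numLayers)}\|_2 \leq \sqrt d$. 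Applying (\ref{eq:l1-inf-bounds-l2-inf}) to $f$ and then $\influence_i[f] \leq \|v_{out}\|_2\,\influence_i[\ypost_\seqlen^{(\numLayers)}]$, I would first get
\[
s(f,x) \;=\; \tfrac14 \sum_i \influence_i[f](x)^2 \;\leq\; \tfrac{\sqrt{d}\,\|v_{out}\|_2}{2} \sum_i \influence_i[f](x) \;\leq\; \tfrac{\sqrt{d}\,\|v_{out}\|_2^2}{2} \sum_i \influence_i[\ypost_\seqlen^{(\numLayers)}](x),
\]
so it suffices to control $\sum_i \influence_i[\ypost_\seqlen^{(\numLayers)}](x)$.

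Next I would invoke (\ref{eq:ineq:inf-b}) at $k=\numLayers$, $j=\seqlen$ (where $\Blowup_\numLayers = \Blowup$). Writing $D := C^{(1)}\prod_{l=2}^\numLayers C^{(l)}$ and $g_i := \tfrac{1}{\seqlen} \sum_{w} \influence_i[\ypre_w^{(1)}] + \influence_i[\ypre_\seqlen^{(1)}]$, this gives the deterministic bound $\influence_i[\ypost_\seqlen^{(\numLayers)}] \leq D\,\Blowup(x)\,\Blowup(x\flipBit{i})\,g_i$. Two facts about $g_i$ drive everything. First, \emph{pointwise} $g_i \leq c$ for a constant $c$ independent of $\seqlen$: the intermediate estimate on $\influence_i[\ypre_j^{(1)}]$ inside the proof of (\ref{eq:prenorm-layer1-bound}) is $\leq L_{f^{MLP}} C_V^2 C_P\big[2 + \sum_h \hat a_{j,i}^{1,h} + \sum_h\sum_w \influence_i[\hat a_{j,w}^{1,h}]\big] \leq L_{f^{MLP}} C_V^2 C_P(3H+2)$, using the trivial $\sum_w \influence_i[\hat a_{j,w}^{1,h}] \leq 2$ (triangle inequality for two probability vectors), whence $g_i \leq 2L_{f^{MLP}} C_V^2 C_P(3H+2) =: c \leq C^{(0)}$ for $H \geq 1$. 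Second, \emph{on average}: on the event of probability $\geq 1 - H\seqlen^{-2}$ supplied by (\ref{eq:prenorm-layer1-bound}), swapping the $i$- and $w$-sums yields $\sum_i g_i \leq 2 C^{(0)}\log\seqlen$.

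Now I would combine these via a rescaled AM--GM, $\Blowup(x)\Blowup(x\flipBit{i}) \leq \tfrac{t}{2}\Blowup(x)^2 + \tfrac{1}{2t}\Blowup(x\flipBit{i})^2$, with $t := \sqrt{\seqlen/\log\seqlen}$. Summing against the weights $g_i$: the $\Blowup(x)^2$ part is controlled by $\sum_i g_i \leq 2 C^{(0)}\log\seqlen$ and contributes $\leq C^{(0)} t\log\seqlen\,\Blowup(x)^2 = C^{(0)}\sqrt{\seqlen\log\seqlen}\,\Blowup(x)^2$, while the neighbour part is controlled by $g_i \leq c$ and contributes $\leq \tfrac{c}{2t}\sum_i \Blowup(x\flipBit{i})^2 = \tfrac{c}{2}\cdot\tfrac{\sqrt{\seqlen\log\seqlen}}{\seqlen}\sum_i \Blowup(x\flipBit{i})^2$. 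Since $c \leq C^{(0)}$, this gives $\sum_i g_i\,\Blowup(x)\Blowup(x\flipBit{i}) \leq C^{(0)}\sqrt{\seqlen\log\seqlen}\big(\Blowup(x)^2 + \tfrac{1}{\seqlen}\sum_i \Blowup(x\flipBit{i})^2\big)$. Chaining back through the first two paragraphs, $s(f,x) \leq \tfrac{\sqrt d\,\|v_{out}\|_2^2}{2}\,D\,C^{(0)}\sqrt{\seqlen\log\seqlen}\big(\Blowup(x)^2 + \tfrac{1}{\seqlen}\sum_i \Blowup(x\flipBit{i})^2\big)$; since $D\,C^{(0)} = \prod_{l=0}^\numLayers C^{(l)}$, all remaining numerical factors are absorbed into $C$ as defined in (\ref{eq:def:c-appendix}), and dividing by $C\sqrt{\seqlen\log\seqlen}$ gives the claim.

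The step I expect to be the main obstacle is the calibration in the third paragraph: it is precisely the \emph{pointwise} bound $g_i = O(1)$ (rather than the weaker $g_i = O(\log\seqlen)$ that one gets from the summed estimate alone) together with the choice $t \asymp \sqrt{\seqlen/\log\seqlen}$ that makes the crude neighbour estimate $\sum_i g_i\,\Blowup(x\flipBit{i})^2 \leq c\sum_i \Blowup(x\flipBit{i})^2$ collapse into the correct $\tfrac1\seqlen$-weighted average while simultaneously pinning the $\Blowup(x)^2$ coefficient at $\sqrt{\seqlen\log\seqlen}$ rather than the weaker $\sqrt{\seqlen}\log\seqlen$. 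Had $g_i$ only been logarithmically bounded pointwise, this balancing would fail. Everything else---the reduction in the first paragraph, the unrolling in the second, and bookkeeping the single high-probability event from (\ref{eq:prenorm-layer1-bound}) across all $j$ and $i$---is routine tracking of the constants in (\ref{eq:def-c0})--(\ref{eq:def:c-appendix}).
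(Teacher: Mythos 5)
Your proposal is correct and follows essentially the same route as the paper: reduce $s(f,x)$ to $\sum_i \influence_i[\ypost_\seqlen^{(\numLayers)}]$ via (\ref{eq:l1-inf-bounds-l2-inf}), unroll through the layers with (\ref{eq:ineq:inf-b}), and invoke the high-probability first-layer bound (\ref{eq:prenorm-layer1-bound}) to get the $\log\seqlen$ factor. The only divergence is the endgame: the paper balances the two $\Blowup$ factors by Cauchy--Schwarz followed by Young's inequality (absorbing the pointwise boundedness of first-layer influences through the squared-influence estimate (\ref{eq:layer-1-summed-influence-final})), whereas you use a weighted AM--GM with the tuned parameter $t=\sqrt{\seqlen/\log\seqlen}$ together with an explicit deterministic $O(1)$ bound on $g_i$ read off from the intermediate display in the proof of (\ref{eq:prenorm-layer1-bound}) --- the two balancing devices are interchangeable and yield the same $\sqrt{\seqlen\log\seqlen}$ calibration.
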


\begin{proof}

The most immediate idea is to sum both sides of (\ref{eq:ineq:inf-b}) across $i$ to get a bound on $s(f,x) = \frac{1}{4} \sum_{i=1}^\seqlen \influence_i[\ypost_\seqlen^{(\numLayers)}]^2$. 
In fact, it will pay off to \emph{average} the influence across the input bits, and only later eliminate the $1/\seqlen$ factor.
Formally,
\begin{align*}
\frac{1}{\seqlen} s(f,x) =^{(\ref{eq:sens-infl-link})} & \frac{1}{4\seqlen} \sum_{i=1}^\seqlen \influence_i[v^T \ypost_{\seqlen}^{(\numLayers)}]^2 \\
\leq^{(\ref{eq:def:influence})} & \frac{1}{4\seqlen} \sum_{i=1}^\seqlen \influence_i[ \ypost_{\seqlen}^{(\numLayers)}]^2 \|v_{out}\|_2^2  \\
\leq^{(\ref{eq:l1-inf-bounds-l2-inf})} & \frac{1}{2\seqlen} \sum_{i=1}^\seqlen \influence_i[ \ypost_{\seqlen}^{(\numLayers)}] 
 \|v_{out}\|_2^2 \sqrt{d} \\
    \leq^{(\ref{eq:ineq:inf-b})} & 
   \frac{\sqrt{d} \|v_{out}\|^2}{2} \left(\prod_{l=2}^k C_k\right)  \Blowup(x) \sum_i  \frac{1}{\seqlen}   \Blowup(x\flipBit{i}) \left[\frac{1}{\seqlen} \sum_w \influence_i[\ypre_w^{(1)}] +\influence_i[\ypre_\seqlen^{(1)}] \right] \\
    \leq^{(CSB)} & \frac{\sqrt{d} \|v_{out}\|^2}{2}\left(\prod_{l=2}^k C_k\right)  \Blowup(x)   \sqrt{\sum_i  \frac{1}{\seqlen}\Blowup(x\flipBit{i})^2} \left[ \sqrt{\sum_i  \frac{1}{\seqlen} \frac{1}{\seqlen} \sum_w \influence_i[\ypre_w^{(1)}]^2} + \sqrt{\sum_i  \frac{1}{\seqlen}\influence_i[\ypre_\seqlen^{(1)}]^2}\right] 
    \end{align*}
    where the last inequality uses Cauchy-Schwarz-Bunyakovsky (CSB).
Now, we find for any $w \in \{1,\dots,\seqlen\}$:
\begin{equation}\label{eq:layer-1-summed-influence-final}
  \sum_i  \influence_i[\ypre_w^{(1)}]^2 \leq^{(\ref{eq:l1-inf-bounds-l2-inf})} 2 \left(\max_x \|\ypre_w^{(1)}\|_2\right) \sum_i \influence_i[\ypre_w^{(1)}] \leq^{(\ref{eq:prenorm-layer1-bound})} \left(\max_x \|\ypre_w^{(1)}\|_2\right) C_0  \log{\seqlen}
\end{equation}
with probability (simultaneously over $w$) $1-\frac{H}{\seqlen^{\delta-2}}$ over the choice of $x$.
Setting $\delta =4$ and defining
    and plugging in (\ref{eq:layer-1-summed-influence-final}) together with the fact $\sqrt{x}\leq x$ when $x\geq 1$, we get:
    \begin{align*}
  \frac{1}{\seqlen} s(f,x) 
\leq & C \Blowup(x) \sqrt{\frac{ \log{\seqlen}}{n}}  \sqrt{\sum_i  \frac{1}{\seqlen}\Blowup(x\flipBit{i})^2}  
\end{align*}
Rearranging, we find
\begin{align*}
  \frac{1}{C} s(f,x) \sqrt{\frac{1}{ \seqlen \log{\seqlen}}} \leq &   \Blowup(x)   \sqrt{\sum_i  \frac{1}{\seqlen}\Blowup(x\flipBit{i})^2}  \\
\leq & \Blowup(x)^2 + \sum_i  \frac{1}{\seqlen}\Blowup(x\flipBit{i})^2  
\end{align*}
where the last step follows by Young's Inequality ($ab \leq \frac{a^2+b^2}{2}$ for $a,b\geq 0$).
As we chose $\delta=4$, this is true with probability $1-\frac{H}{n^2}$ over the choice of $x$.
\end{proof}

\subsection{Deriving On-Average Sensitivity Bounds}\label{sec:app:average}
Finally, we can convert the high-probability statement from the preceding lemma, which bounded the layer norm blowup on $x$ itself \emph{or} its Hamming neighbors, into an on-average bound over the entire input space:
\begin{corollary}[Repeated from Corollary \ref{thm:sensitivity-blowup-bound}]
\begin{equation}
 C \cdot  \mathbb{E}[\Blowup(x)^2] \geq       \frac{as_\seqlen(f)}{\sqrt{\seqlen\log{\seqlen}}}  -   \frac{H}{n}
\end{equation}

\end{corollary}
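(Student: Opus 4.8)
The plan is to integrate the pointwise bound of Theorem~\ref{thm:sensitivity-blowup-bound} against the uniform distribution on $\{\pm1\}^\seqlen$, exploiting that coordinate flips are measure preserving. Write $G \subseteq \{\pm1\}^\seqlen$ for the event on which the conclusion of Theorem~\ref{thm:sensitivity-blowup-bound} holds, so that $\Prob[x\in G] \geq 1 - \tfrac{H}{\seqlen^2}$. Multiplying the pointwise inequality valid on $G$ by $\mathbf 1_{\{x\in G\}}$ and then discarding this indicator on the right-hand side (legitimate since $\Blowup(\cdot)^2 \geq 0$), and finally taking $\E_x$, gives
\begin{equation*}
\frac{\E[\mathbf 1_{\{x\in G\}}\, s(f,x)]}{C\sqrt{\seqlen\log\seqlen}} \;\leq\; \E[\Blowup(x)^2] + \frac1\seqlen\sum_{i=1}^\seqlen \E[\Blowup(x\flipBit{i})^2].
\end{equation*}

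For the right-hand side, note that for each fixed $i$ the map $x\mapsto x\flipBit{i}$ is a bijection of $\{\pm1\}^\seqlen$, so $\E[\Blowup(x\flipBit{i})^2] = \E[\Blowup(x)^2]$, and the right-hand side collapses to $2\,\E[\Blowup(x)^2]$; the factor $2$ is absorbed into the polynomial part of $C$.

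For the left-hand side, I would write $\E[\mathbf 1_{\{x\in G\}}\, s(f,x)] = as_\seqlen(f) - \E[\mathbf 1_{\{x\notin G\}}\, s(f,x)]$ and bound the error term crudely: because layer norm is applied at the last layer, $|T_\theta(x)| \leq \|v_{out}\|_2\sqrt d$, so $s(f,x) \leq \seqlen\,\|v_{out}\|_2^2 d$ holds pointwise, whence $\E[\mathbf 1_{\{x\notin G\}}\, s(f,x)] \leq \|v_{out}\|_2^2 d \cdot \seqlen \cdot \Prob[x\notin G] \leq \tfrac{\|v_{out}\|_2^2 d\, H}{\seqlen}$. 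Combining the two halves and rearranging yields $C\,\E[\Blowup(x)^2] \geq \frac{as_\seqlen(f)}{\sqrt{\seqlen\log\seqlen}} - \frac{\|v_{out}\|_2^2 d\, H}{\seqlen\sqrt{\seqlen\log\seqlen}}$, and since $\sqrt{\seqlen\log\seqlen}\geq \|v_{out}\|_2^2 d$ for all large $\seqlen$, the subtracted term is at most $\tfrac{H}{\seqlen}$, which is the claimed bound.

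The only step carrying any subtlety is the treatment of the exceptional set $G^c$: one needs a \emph{pointwise} ceiling on $s(f,x)$ that grows no faster than linearly in $\seqlen$, and the natural such ceiling is precisely the $\sqrt d$-boundedness of post-layer-norm activations, which keeps the $G^c$ contribution at $O(H/\seqlen)$ rather than something growing with $\seqlen$. Everything else is linearity of expectation, the symmetry of bit flips, and nonnegativity of $\Blowup$.
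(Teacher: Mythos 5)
Your proposal is correct and follows essentially the same route as the paper's proof: integrate the pointwise bound of Theorem~\ref{thm:sensitivity-blowup-bound} over the good set, use the measure-preserving bit-flip bijections to collapse the averaged Hamming-neighbor blowup term into $\mathbb{E}[\Blowup(x)^2]$ (absorbing the resulting factor of $2$ into $C$, as the paper also implicitly does), and control the exceptional set by a crude pointwise ceiling $s(f,x) = O(\seqlen)$ coming from the boundedness of post-layer-norm outputs. Your handling of the exceptional set is if anything slightly more explicit about the constant $\|v_{out}\|_2^2 d$ than the paper's, but the argument is the same.
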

\begin{proof}
Recall from the preceding lemma that, with probability $1-\frac{H}{\seqlen^{2}}$ over the choice of $x$, we have
\begin{equation}\label{eq:from-preceding-lemma}
    \frac{s(f,x)}{C\sqrt{\seqlen\log\seqlen}}  
    \leq  \Blowup(x)^2 +\frac{1}{\seqlen} \sum_i  \Blowup(x\flipBit{i})^2 
\end{equation}
We first note the following, for any function $\phi : \{\pm 1\}^{\seqlen}\rightarrow \mathbb{R}$:
\begin{align*}
    \mathbb{E}[\phi(x)] = \frac{1}{\seqlen} \mathbb{E}_{x} \sum_{i=1}^{\seqlen}\phi(x\flipBit{i})
\end{align*}
since every one of the maps $x \mapsto x\flipBit{i}$ ($i=1,\dots, n$) is a bijection on $\{\pm 1\}$.
Hence,
\begin{align*}
    \mathbb{E}[\phi(x)] = \mathbb{E}_{x} \left[\frac{1}{2} \phi(x) + \frac{1}{2}\sum_{i=1}^{\seqlen}\phi(x\flipBit{i})\right]
\end{align*}
Taking $\phi(x) := \Blowup(x)^2$ yields:
\begin{equation}
\mathbb{E}[\Blowup(x)^2] = \mathbb{E}_{x}  \left[\frac{1}{2} \Blowup(x)^2 + \frac{1}{2} \frac{1}{\seqlen} \sum_i \Blowup(x\flipBit{i})^2\right]
\end{equation}
    Let $A$ be the set of $x$ satisfying (\ref{eq:from-preceding-lemma}); $|A| \geq 2^{\seqlen}(1-\frac{H}{\seqlen^{2}})$. Then
\begin{align*}
\mathbb{E}[\Blowup(x)^2] = & \frac{1}{2^n} \sum_{x \in \{\pm 1\}^\seqlen}  \left[\frac{1}{2} \Blowup(x)^2 + \frac{1}{2} \frac{1}{\seqlen} \sum_i \Blowup(x\flipBit{i})^2\right] \\
\geq &  \frac{1}{2^n} \sum_{x\in A}  \left[\frac{1}{2} \Blowup(x)^2 + \frac{1}{2} \frac{1}{\seqlen} \sum_i \Blowup(x\flipBit{i})^2\right] \\
\geq &   \frac{1}{C\sqrt{\seqlen\log\seqlen}} \frac{1}{2^\seqlen}  \sum_{x\in A}  s(f,x) \\
\geq &  \frac{1}{C\sqrt{\seqlen\log\seqlen}} \frac{1}{2^\seqlen} \sum_{x \in \{\pm 1\}^\seqlen}   s(f,x) - \frac{1}{C\sqrt{\seqlen\log\seqlen}} \frac{1}{2^\seqlen} \sum_{x\in\in A}   s(f,x) \\
\geq &  \frac{1}{C\sqrt{\seqlen\log\seqlen}}  as_n(f) - \frac{1}{C\sqrt{\log\seqlen}} \frac{H}{\seqlen^{3/2}}   \\
\end{align*}
Rearranging, we find:
\begin{align*}
 C \cdot  \mathbb{E}[\Blowup(x)^2] \geq &  \frac{1}{\sqrt{\seqlen\log\seqlen}}  as_n(f) - \frac{1}{\sqrt{\log\seqlen}} \frac{H}{\seqlen^{3/2}}  \geq   \frac{1}{\sqrt{\seqlen\log\seqlen}}  as_n(f) -  \frac{H}{\seqlen}  \\
\end{align*}

\end{proof}

\section{Theorem~\ref{thm:lrho-bound}}\label{appendix:sec:theorem-sharpness}

Before proving Theorem~\ref{thm:lrho-bound}, we require the following lemma in the analysis of Boolean functions. Informally, the lemma describes how the RMSE between two functions can be lower-bounded in terms of their average sensitivities.

For any function $f : \{\pm 1\}\rightarrow \mathbb{R}$, we write $\|f\|_2 := \sqrt{\frac{1}{2^\seqlen}\sum_{x\in\{\pm 1\}^\seqlen} f(x)^2} = \mathbb{E}_x[f(x)^2]$. Then the lemma states:

\begin{lemma}\label{lemma:as-rmse-diff}

For any $f, f' : \{\pm 1\}^\seqlen\rightarrow\mathbb{R}$, we have 
for any $\alpha > 1$:
\begin{equation}
\sqrt{\mathbb{E}[(f(x)-f'(x))^2]} \geq \sqrt{\frac{(1-\frac{1}{\alpha}) as_\seqlen(f)}{(\seqlen-\frac{as_\seqlen(f)}{\alpha \cdot \|f\|_2^2})}} - \|f\|_2 \sqrt{\alpha \frac{as_\seqlen(f')}{{as_\seqlen(f)}}}
\end{equation}
%where
%\begin{equation}
%Var(f) := \mathbb{E}[f(x)^2] - \mathbb{E}[f(x)]^2
%\end{equation}
Note that the optimal $\alpha$ depends on $f, f'$. 
The choice $\alpha = 2$ will be sufficient for our needs:
\begin{equation}
\sqrt{\mathbb{E}[(f(x)-f'(x))^2]} \geq \sqrt{\frac{as_\seqlen(f)}{2\seqlen-\frac{as_\seqlen(f)}{ \|f\|_2^2}}} - \|f\|_2 \sqrt{2 \frac{as_\seqlen(f')}{{as_\seqlen(f)}}}
\end{equation}

\end{lemma}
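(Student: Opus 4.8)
The plan is to argue entirely in the Fourier--Walsh basis. Write $f = \sum_P \hat f(P)\chi_P$ and $f' = \sum_P \hat{f'}(P)\chi_P$, so that by Parseval $\|f\|_2^2 = \sum_P \hat f(P)^2$ and $\mathbb{E}[(f(x)-f'(x))^2] = \sum_P(\hat f(P)-\hat{f'}(P))^2$, while by the Fourier identity for average sensitivity (Eq.~(\ref{eq:sensitivity-fourier})) one has $as_\seqlen(g) = \sum_P |P|\,\hat g(P)^2$ for any $g$. Abbreviate $S := as_\seqlen(f)$, $S' := as_\seqlen(f')$, $N := \|f\|_2^2$, and for a real threshold $k \in (0,\seqlen)$ let $W^{>k}(g) := \sum_{|P|>k}\hat g(P)^2$ and $P_{>k}g := \sum_{|P|>k}\hat g(P)\chi_P$ denote, respectively, the Fourier weight and the Fourier projection strictly above degree $k$.

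I would first establish two one-line estimates. (i) A \emph{lower} bound on the high-degree weight of $f$: splitting $S = \sum_{|P|\le k}|P|\hat f(P)^2 + \sum_{|P|>k}|P|\hat f(P)^2 \le k\,(N - W^{>k}(f)) + \seqlen\,W^{>k}(f)$ and rearranging gives $W^{>k}(f) \ge \frac{S - kN}{\seqlen - k}$, which is genuinely positive once $kN < S$. (ii) An \emph{upper} bound on the high-degree weight of $f'$ via Markov: $S' \ge \sum_{|P|>k}|P|\hat{f'}(P)^2 \ge k\,W^{>k}(f')$, so $W^{>k}(f') \le S'/k$. Then, since $\mathbb{E}[(f-f')^2] = \|f-f'\|_2^2 \ge \|P_{>k}f - P_{>k}f'\|_2^2$ and, by the reverse triangle inequality in $L_2$, $\|P_{>k}f - P_{>k}f'\|_2 \ge \|P_{>k}f\|_2 - \|P_{>k}f'\|_2 = \sqrt{W^{>k}(f)} - \sqrt{W^{>k}(f')}$, combining (i) and (ii) yields, for every admissible threshold $k$,
\[
\sqrt{\mathbb{E}[(f(x)-f'(x))^2]} \;\ge\; \sqrt{\frac{S - kN}{\seqlen - k}} \;-\; \sqrt{\frac{S'}{k}}.
\]

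Finally I would optimize the threshold: the clean choice is $k := \frac{S}{\alpha N} = \frac{as_\seqlen(f)}{\alpha\,\|f\|_2^2}$ for the free parameter $\alpha>1$. Then $kN = S/\alpha < S$ (so estimate (i) is non-vacuous), $k \le \seqlen/\alpha < \seqlen$ (so the denominator $\seqlen - k = \seqlen - \frac{as_\seqlen(f)}{\alpha\|f\|_2^2}$ is positive), and $S - kN = (1-\tfrac1\alpha)S$; substituting, the first term becomes $\sqrt{\tfrac{(1-1/\alpha)\,as_\seqlen(f)}{\seqlen - as_\seqlen(f)/(\alpha\|f\|_2^2)}}$ and the second becomes $\sqrt{\alpha N S'/S} = \|f\|_2\sqrt{\alpha\,as_\seqlen(f')/as_\seqlen(f)}$, which is exactly the claimed inequality; setting $\alpha=2$ recovers the displayed special case. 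There is no real obstacle here — the entire content is spotting the degree-truncation argument and the particular value of $k$. The only points needing (routine) care are the degenerate cases: $as_\seqlen(f)=0$ makes the statement ill-posed and is implicitly excluded, and whenever the right-hand side evaluates to a negative number the inequality is trivial since the left-hand side is nonnegative, so the one-sided triangle inequality used above is all that is required.
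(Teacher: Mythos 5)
Your proposal is correct and follows essentially the same route as the paper's proof: a Fourier--Walsh degree truncation at threshold $k=\frac{as_\seqlen(f)}{\alpha\|f\|_2^2}$, with Parseval giving the two-sided bounds on the high-degree weight of $f$ and $f'$, and the reverse triangle inequality (plus contractivity of the projection) converting the gap in high-degree weights into a lower bound on $\|f-f'\|_2$. The only differences are cosmetic (strict vs.\ non-strict degree cutoff, and your explicit handling of the degenerate cases), so nothing further is needed.
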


\begin{proof}
Recall the discussion of the Fourier-Walsh decomposition preceding (\ref{eq:sensitivity-fourier}).
The proof idea is that, when functions have different average sensitivities, their degree profiles must be quite different, which then entails that they must be substantially distinct in behavior.

     Let $1 \leq \lambda_{Min} \leq \seqlen$; we will fix it later.
Let $\Pi$ be the orthogonal projection on the Fourier components with degree at least $\lambda_{Min}$, formally given as the unique linear map satisfying
\begin{equation}
\Pi \chi_P = \begin{cases}\chi_P & \text{if }|P| \geq \lambda_{Min}\\ 0 & \text{else}\end{cases}
\end{equation}
    Then, for any $g : \{\pm 1\}^\seqlen\rightarrow\mathbb{R}$, by (\ref{eq:sensitivity-fourier}) and Parseval's theorem for the Fourier-Walsh transform \citep{odonnell2014analysis}:\footnote{If $d_0 \dots, d_\seqlen$ is the degree profile of $g$, then $\lambda_{Min}  \|\Pi g\|_2^2 = \sum_{i \geq \lambda_{Min}} \lambda_{Min} d_i \leq as_\seqlen(g) \leq \sum_{0 < i < \lambda_{Min}} \lambda_{Min} d_i + \sum_{i \geq \lambda_{Min}} \seqlen d_i = \lambda_{Min} (\|g\|_2^2-\|\Pi g\|_2^2) + \seqlen \|\Pi g\|_2^2 $, where the inequalities follow from (\ref{eq:sensitivity-fourier}) and the equalities follow from Parseval's theorem applied to $\Pi g$ and to $g$.}
    \begin{equation}
     \lambda_{Min}  \|\Pi g\|_2^2 \leq   as_\seqlen(g)  \leq \lambda_{Min} (\|g\|_2^2-\|\Pi g\|_2^2) + \seqlen \|\Pi g\|_2^2
    \end{equation}
    Rearranging and inserting $f, f'$ for $g$ in the two inequalities, respectively:
    \begin{align*}
   \frac{   as_\seqlen(f) - \lambda_{Min} \|f\|_2^2 }{\seqlen-\lambda_{Min} } \leq  & \|\Pi f\|_2^2 \\
          \|\Pi f'\|_2^2 \leq  & \frac{as_\seqlen(f)}{\lambda_{Min}}
    \end{align*}
    Hence
    \begin{align*}
        \|\Pi f\|_2 - \|\Pi f'\|_2 \geq \sqrt{\frac{   as_\seqlen(f) - \lambda_{Min}\|f\|_2^2}{(\seqlen-\lambda_{Min})}} - \sqrt{\frac{as_\seqlen(f')}{\lambda_{Min}}}
    \end{align*}
Now at $\lambda_{Min} = \frac{as_\seqlen(f)}{\alpha \|f\|_2^2}$, we get
\begin{align*}
        \|\Pi f\|_2 - \|\Pi f'\|_2 \geq \sqrt{\frac{(1-\frac{1}{\alpha}) as_\seqlen(f)}{(\seqlen-\frac{as_\seqlen(f)}{\alpha \|f\|_2^2})}} - \|f\|_2 \sqrt{\alpha \frac{as_\seqlen(f')}{{as_\seqlen(f)}}}
    \end{align*}
Now by the reverse triangle inequality
\begin{equation}
    \|\Pi f\|_2-\|\Pi f'\|_2 \leq \||\Pi f - \Pi f'\|_2 \leq \| f -  f'\|_2
\end{equation}
and the claim follows.

\end{proof}

\begin{proof}[Proof of the Theorem]
The proof can be decomposed into the following steps.

\paragraph{Step 1: Decomposing $\Delta$}
We have $(F \cdot d + G \cdot d^2) \cdot L$ parameters (where $F, G$ are constant).
We are assuming that $\Delta$ is from $\rho \mathbb{S}$. 
For each of the $\numLayers$ instances layer norm, we consider the part of $\Delta$ corresponding to the immediately preceding bias; we will name this $\Delta_1, \dots, \Delta_{\numLayers}$.

\paragraph{Step 2: Effect of Perturbations on $C_\theta$}
We will make explicit the dependence of $C$ on $\theta$ by writing $C_\theta$.
We next note that, for any fixed $\theta$, as $\rho \rightarrow 0$,  the effect of a small perturbation $\Delta$ on $C_\theta$ is bounded, because $C_\theta$ is locally Lipschitz in $\theta$.\footnote{This follows from the fact that in (\ref{eq:def:c}), for $C_P$, $\ell^1, \ell^\infty$ norms are locally Lipschitz; for $C_{MLP}$ because it can be chosen to be the product of parameter matrix norms in the MLP; for the exponential term because the spectral norm is locally Lipschitz.}
Thus, as $\rho\rightarrow 0$:
\begin{equation}\label{eq:scaling-c-rho}
    C_{\theta+\Delta} = C_\theta \cdot (1+O(\rho))
\end{equation}
uniformly across all $\Delta$ with $\|\Delta\|=\rho$,
where $\mathcal{O}(\cdot)$ includes constants depending on $\theta$, but not $\rho$, $\Delta$, or $n$.

\paragraph{Step 2: Lower Bounding Error in Terms of Sensitivity Drop}
Set
\begin{equation}
    \mu := \liminf_{n\rightarrow \infty} \frac{as_\seqlen(T_\theta)}{n}
\end{equation}
Necessarily, $\mu \in [0,1]$.
The claim of the theorem is nontrivial if and only if $\mu > 0$.
Lemma (\ref{lemma:as-rmse-diff}) shows that the difference between $f_\theta$ and $f_{\theta+\Delta}$ can be lower-bounded in terms of their sensitivities.
First, we note, as $n \rightarrow \infty$, with $\alpha=2$,
\begin{equation}
\frac{as_\seqlen(f)}{2\seqlen-\frac{as_\seqlen(f)}{\|f\|_2^2}} = \frac{\mu }{2-\frac{\mu}{\|f\|_2^2}}  + o(1) \geq \frac{\mu}{2} + o(1)
\end{equation}
where we used $\frac{\mu}{\|f\|_2^2} \in [0,1]$ up to $o(1)$.
Indeed,  if the output of $\pm 1$, we have $\|f\|_2^2=1$, and the term indeed is lower-bounded by $\mu$:
\begin{equation}\label{eq:improved-bound-boolean}
\frac{as_\seqlen(f)}{2\seqlen-\frac{as_\seqlen(f)}{\|f\|_2^2}} = \frac{\mu }{2-\mu}  + o(1) \geq \mu + o(1)
\end{equation}
showing that the $2$ factor can be eliminated in the case of Boolean output.
	Next,  as $n \rightarrow \infty$, for $\alpha=2$ as above, and using $0 \leq \frac{as_\seqlen(T_\theta)}{n} \leq \|T_\theta\|_2^2 \leq 1$, we find
\begin{equation}\label{eq:expected-deviation-bound}
\begin{aligned}
	& \mathbb{E}_\Delta\mathbb{E}[(T_\theta(x)-T_{\theta+\Delta}(x))^2] \\
	\geq  &\mathbb{E}_\Delta\underbrace{\frac{(1-\frac{1}{\alpha}) as_\seqlen(T_\theta)}{(\seqlen-\frac{as_\seqlen(T_\theta)}{\alpha \|T_\theta(x)\|_2^2})}}_{\text{independent of $\Delta$}} + \underbrace{\|T_\theta\|_2^2 \alpha \mathbb{E}_\Delta\frac{as_\seqlen(T_{\theta+\Delta})}{{as_\seqlen(T_\theta)}}}_{\geq 0} - 2  \sqrt{\frac{(1-\frac{1}{\alpha}) as_\seqlen(T_\theta)}{(\seqlen-\frac{as_\seqlen(T_\theta)}{\alpha \|T_\theta(x)\|_2^2})}}  \|T_\theta\|_2 \mathbb{E}_\Delta\sqrt{\alpha \frac{as_\seqlen(T_{\theta+\Delta})}{{as_\seqlen(T_\theta)}}} \\
	\geq  & \frac{ as_\seqlen(T_\theta)}{2\seqlen-\frac{as_\seqlen(T_\theta)}{\|T_\theta(x)\|_2^2}} - 2  \sqrt{\frac{1}{(\seqlen-\frac{1}{2 \|T_\theta(x)\|_2^2})}}  \|T_\theta\|_2 \mathbb{E}_\Delta\sqrt{ as_\seqlen(T_{\theta+\Delta})} \\
	\geq  & \frac{ as_\seqlen(T_\theta)}{2\seqlen-\frac{as_\seqlen(T_\theta)}{\|T_\theta(x)\|_2^2}} - 2  \mathbb{E}_\Delta\sqrt{\frac{as_\seqlen(T_{\theta+\Delta})}{(\seqlen-\frac{1}{2 \|T_\theta(x)\|_2^2})}}   \\
	\geq^{(\dagger)}  & \frac{ as_\seqlen(T_\theta)}{2\seqlen-\frac{as_\seqlen(T_\theta)}{\|T_\theta(x)\|_2^2}} - 4  \mathbb{E}_\Delta\sqrt{\frac{as_\seqlen(T_{\theta+\Delta})}{\seqlen}}   \\
	=  & \frac{ as_\seqlen(T_\theta)}{2\seqlen-\frac{as_\seqlen(T_\theta)}{\|T_\theta(x)\|_2^2}} - 4  \mathbb{E}_\Delta\sqrt{\frac{\mathbb{E}_x s_\seqlen(T_{\theta+\Delta},x)}{\seqlen}}   \\
	\geq  & \frac{\mu}{2} - 4  \sqrt{\frac{ \mathbb{E}_\Delta  \mathbb{E}_x  s(T_{\theta+\Delta},x)}{n}} + o(1) \\
\end{aligned}
\end{equation}
	where $(\dagger)$ uses $(\seqlen-\frac{1}{2 \|T_\theta(x)\|_2^2}) \geq \seqlen-1/2 \geq \seqlen/4$, and the final step used Jensen's inequality applied to the concave function $\sqrt{\cdot}$.
	By the reasoning above, in Eq.~(\ref{eq:improved-bound-boolean}), the factor 2 in the first term can be eliminated when the output is $\pm 1$.
	In order to lower-bound (\ref{eq:expected-deviation-bound}), we want to show the following:

\emph{
Take any $\zeta > \frac{2L}{d}$.
For each $x$, with probability at least $(1-\mathcal{O}(n^{2-(d-1)\zeta/L}) - \numLayers\exp(-\Theta(d)))$ (where $\mathcal{O}$ contains constants depending on $\rho, d, \numLayers$, but not depending on $x$ and $n$), the blowup is simultaneously bounded on $x$ and its radius-1 Hamming ball:}
\begin{equation}\label{eq:tau-perturbed-bound}
 \tau^{(k)}_{\theta+\Delta}(X) \leq 1+n^{\zeta/L}, \ \ \ \ \ \forall X \in \{ x, x\flipBit{1}, x\flipBit{2}, \dots, x\flipBit{\seqlen}\}
\end{equation}
We will prove this below in Step 4.
We may then bound $ \tau^{(k)}_{\theta+\Delta}(X) \leq 2n^{\zeta/L}$ for large $n$.
By (\ref{eq:main-formula-localized}), if (\ref{eq:tau-perturbed-bound}) is proven, then:
\begin{align*}
    \frac{s(T_{\theta+\Delta},x)}{C_{\theta+\Delta}\sqrt{\seqlen\log\seqlen}}  
   & \leq  \Blowup(x)^2 +\frac{1}{\seqlen} \sum_{i=1}^{\seqlen} \Blowup(x\flipBit{i})^2 
   \leq  4 n^{2\zeta} 
\end{align*}
and hence
\begin{equation}\label{eq:raw-sens-perturbed-over-n}
  s(T_{\theta+\Delta},x)  
    \leq  4 n^{2\zeta+.5} C_{\theta+\Delta}  \sqrt{ \log{\seqlen}}
\end{equation}
and hence
\begin{equation}\label{eq:sens-perturbed-over-n}
\begin{aligned}
  \frac{s(T_{\theta+\Delta},x)}{n}  
   & \leq  4 n^{2\zeta-.5} C_{\theta+\Delta} \sqrt{\log{\seqlen}} \\
\end{aligned}
\end{equation}
with probability $1-\frac{H}{\seqlen^{2}}$ over $x$ and $1-\mathcal{O}(\numLayers n^{2-(d-1)\zeta/\numLayers}) - \numLayers\exp(-\Theta(d))$ over $\Delta$, where we used a union bound over the $\numLayers$ layers.
Let us now fix $\zeta = \frac{3L}{d}$. Then, as $n\rightarrow \infty$,
\begin{align*}
    \mathbb{E}_\Delta\mathbb{E}_x  \frac{s(T_{\theta+\Delta},x)}{n} \leq & 4 n^{6L/d-.5}  C_{\theta+\Delta} \sqrt{ \log{\seqlen}} + \mathcal{O}(n^{2-3\frac{d-1}{d}}) + \frac{H}{\seqlen^{2}} + \numLayers \exp(-\Omega(d)) +  o(1)
\end{align*}
Under the hypothesis $12L < d$, all terms except for $\numLayers\exp(-\Omega(d))$ are $o(1)$ as $n\rightarrow \infty$.
Inserting this into (\ref{eq:expected-deviation-bound}), the claim follows.

\paragraph{Step 4: Effect on Layer Norm}
It remains to show (\ref{eq:tau-perturbed-bound}).
We proceed inductively from the input layer upwards, examining the effect of perturbing parameters.
In the inductive step, we consider a transformer where all the parameters below the bias immediately preceding the $i$-th layer norm have already been perturbed.

Fix an input string $x \in \{\pm 1\}^\seqlen$.
Write $\ypre_i^{(k)}$ for the activation computed using the perturbed parameters $\theta+\Delta$ for all parameters except the bias of interest, for which $\theta$ is used here. Thus, after applying the perturbation to that bias, the activation is $\ypre_i^{(k)}+\Delta_i$.
The aim is to lower-bound the SD of $\ypre_i^{(k)}+\Delta_i$ with high probability over the choice of $\Delta$.

Let $center(x) := x-mean(x)$, i.e. the pre-scaling component of layer norm.
First,
\begin{align*}
\mathbb{P}\left(\|center(\Delta_i)\| \leq \frac{\rho}{2} \sqrt{\frac{d}{D}}\right) & \leq \exp(-\Omega(d))
\end{align*}
by standard concentration arguments, where $\Omega(d)$ scales positively with $d$.\footnote{One way of obtaining this is by parameterizing $\Delta$ in terms of first sampling $D$ independent $X_1, \dots, X_D \sim \mathcal{N}(0,\frac{1}{D})$, and dividing the resulting vector by its norm, before multiplying by $\rho$ in the end. Before normalization, the squared norms of the entire vector and the norm of the part corresponding to $\Delta_i$ concentrate with exponential tail bounds in $d$, around $1$ and $\frac{d}{D}$, respectively. Furthermore, the mean of $\Delta_i$ concentrates around 0. Thus, the norm of $center(\Delta_i)$ concentrates around $\rho \sqrt{\frac{d}{D}}$.}
Let $\Pi_{\mathbb{S}} : \mathbb{R}^d \rightarrow \mathbb{S}^{d-1} \subset \mathbb{R}^d$ be the projection on the sphere around 0 of radius $\frac{\rho}{2} \sqrt{\frac{d}{D}}$:
\begin{equation}
    \Pi_{\mathbb{S}}(x) = \frac{\rho}{2} \sqrt{\frac{d}{D}} \frac{x}{\|x\|}
\end{equation}
Let $\Pi_{\mathbb{B}} : \mathbb{R}^d \rightarrow B_{\frac{\rho}{2} \sqrt{\frac{d}{D}}}(0)$ be the projection on the ball around 0 of radius $\frac{\rho}{2} \sqrt{\frac{d}{D}}$.
$\Pi_{\mathbb{B}}$ is a contraction (referred to below with ($\dagger$)\footnote{The contractivity of $\Pi_{\mathbb{B}}$ is proven, in a different context, as Lemma A.9 in \citet{edelman2022inductive}.}).
Conditional on $\|center(\Delta_i)\| \geq \frac{\rho}{2} \sqrt{\frac{d}{D}}$, we have $\Pi_{\mathbb{B}}center(\Delta_i) = \Pi_{\mathbb{S}}center(\Delta_i)$ and its distribution is, by symmetry, uniform on the ball's surface.
We now bound the probability that the SD of the perturbed activation is large:
\begin{align*}
& \mathbb{P}\left(\exists j : SD(\Delta_i + \ypre_i^{(k)}) < n^{-\zeta/L} \right) \\
 =   & \mathbb{P}\left(\exists j : \|center(\Delta_i + \ypre_i^{(k)})\| < \sqrt{d} n^{-\zeta/L} \right) \\
    = & \mathbb{P}\left(\exists j : center(\Delta_i) \in B_{n^{-\zeta/L}}\left(-center( \ypre_i^{(k)})\right) \right) \\
    \leq & \sum_{j=1}^\seqlen \mathbb{P}\left(center(\Delta_i) \in B_{\sqrt{d}n^{-\zeta/L}}\left(-center( \ypre_i^{(k)})\right) \right) \\
    \leq & \mathbb{P}(\|center(\Delta_i)\| \leq \frac{\rho}{2} \sqrt{\frac{d}{D}}) + \sum_{j=1}^\seqlen \mathbb{P}\left(center(\Delta_i) \in B_{\sqrt{d}n^{-\zeta/L}}\left(-center( \ypre_i^{(k)})\right) \bigg| \|center(\Delta_i)\| > \frac{\rho}{2} \sqrt{\frac{d}{D}} \right)\\
    \leq^{\dagger} & \mathbb{P}(\|center(\Delta_i)\| \leq \frac{\rho}{2} \sqrt{\frac{d}{D}}) + \sum_{j=1}^\seqlen \mathbb{P}\left(\Pi_{\mathbb{B}} center(\Delta_i) \in B_{\sqrt{d}n^{-\zeta/L}}\left(-\Pi_{\mathbb{B}} center( \ypre_i^{(k)})\right) \bigg| \|center(\Delta_i)\| > \frac{\rho}{2} \sqrt{\frac{d}{D}} \right)\\
    \leq & \exp(-\Omega(d)) + \sum_{j=1}^\seqlen \mathbb{P}\left(\Pi_{\mathbb{B}} center(\Delta_i) \in B_{\sqrt{d}n^{-\zeta/L}}\left(-\Pi_{\mathbb{B}} center( \ypre_i^{(k)})\right) \bigg| \|center(\Delta_i)\| > \frac{\rho}{2} \sqrt{\frac{d}{D}} \right)\\
    \leq & \exp(-\Omega(d)) + \sum_{j=1}^\seqlen \mathbb{P}\left(\Pi_{\mathbb{S}} center(\Delta_i) \in B_{\sqrt{d}n^{-\zeta/L}}\left(-\Pi_{\mathbb{S}} center( \ypre_i^{(k)})\right) \bigg| \|center(\Delta_i)\| > \frac{\rho}{2} \sqrt{\frac{d}{D}} \right)\\
    = & \exp(-\Omega(d)) + \sum_{j=1}^\seqlen \frac{Area\left(B_{\sqrt{d}n^{-\zeta/L}}\left(-\Pi_{\mathbb{S}} center( \ypre_i^{(k)})\right)\right)}{Area\left(\frac{\rho}{2} \sqrt{\frac{d}{D}} \mathbb{S}\right)}\\
\end{align*}
where $Area(B_{\sqrt{d}n^{-\zeta/L}}\left(-\Pi_{\mathbb{S}} center( \ypre_i^{(k)})\right))$ is the area of a hypersphere cap of radius $\sqrt{d}n^{-\zeta/L}$ on a hypersphere of radius $\frac{\rho}{2} \sqrt{\frac{d}{D}}$, which is at most $n^{-(d-1)\zeta/L}$ times some constant depending on $d, D, \rho$ but not $n, x$\footnote{A nonasymptotic formula for the area of the hypersphere cap is given by \citet{li2010concise}. A simple derivation of the asymptotic relationship used here is by approximating, when $n$ is large and thus $n^{-\zeta/L}$ is small in relation to $d$, the cap as a disk on the $d-1$-dimensional tangent space to the hypersphere. This approximation is valid in this limit since the radius of the hypersphere, and thus its curvature, are independent of $n$.}. The denominator is constant in $n$. Overall, as $n\rightarrow\infty$,
\begin{align*}
    \mathbb{P}\left(\exists j : \|center(\Delta_i + \ypre_i^{(k)})\| < n^{-\zeta/L} \right) \leq \exp(-\Omega(d)) + \mathcal{O}(n^{1-(d-1)\zeta/L})
\end{align*}
If the event denoted by this probability is avoided, then under the parameter setting $\theta+\Delta$, 
\begin{equation}\label{eq:tau-layer-norm-bound}
    \tau_k(x) \leq 1+ n^{\zeta/L}
\end{equation}
for both $x$ and its Hamming ball of radius 1, 
with probability (by a union bound over the radius-1 Hamming ball $B_1(x)$ around $x$):
\begin{align*}
   & \mathbb{P}\left(\forall x' \in B_1(x) : \forall  j : \|center(\Delta_i + \ypre_i^{(k)}(x'))\| \geq n^{-\zeta/L}  \right) \\
   \geq & 1-\mathbb{P}\left(\|center(\Delta_i)\| \leq \frac{\rho}{2} \sqrt{\frac{d}{D}}\right)-\sum_{x' \in B_1(x)} \mathbb{P}\left(center(\Delta_i) \in B_{n^{-\zeta/L}}\left(-center( \ypre_i^{(k)}(x'))\right) \bigg| \|center(\Delta_i)\| > \frac{\rho}{2} \sqrt{\frac{d}{D}} \right) \\
   \geq & (1-\mathcal{O}(n^{2-(d-1)\zeta/L}) - \exp(-\Theta(d))) 
\end{align*}
This concludes the proof.

\paragraph{Remark}
Note that we assumed for notational simplicity that layer norm only applies a single time, following the MLP (\ref{eq:layer-norm-and-mlp}).
Depending on the implementation, LN may appear in other cases, e.g. directly following attention (e.g., in \citet{vaswani2017attention}).
In this case, we can  analogously consider the the effect of $V + \Delta_V$, using the properties of the random matrix $\Delta_V$ and the effects of the perturbation on the preceding MLP (or the token embeddings).
First, we note that $\eta_i := \sum_h \sum_j \widehat{a}^{(k,h)}_{ij} \ypost_{j}^{(k-1)}$ is, after adding $\Delta$ to the parameters involved, very unlikely to have norm $o(n^{1/(\zeta L)})$, because of the random perturbation applied to the MLP bias in the preceding layer (or to the token embeddings, if $k=1$), provided $d \gg H$.
Second, considering the spectral properties of the random matrix $\Delta_V$,  the perturbed head activation $\ypost_{i}^{(k-1)} + (V+\Delta_i)\eta_i$ is unlikely to have much smaller SD, hence a bound analogous to (\ref{eq:tau-layer-norm-bound}) follows with high probability.

\end{proof}

\section{Theorem~\ref{thm:scratchpad}}\label{sec:proof:scratchpad}

\begin{proof}
    Each autoregressive step consists in determining, based on a sequence $x_1\dots x_N {t}_1 \dots {t}_t$ the state ${t}_{t+1} = T({t}_t, x_t)$. Assuming $\log |\Sigma|$ and $\log |X|$ bits are used to encode $x_i$ and ${t}_i$, respectively, the function thus only depends on $\log |\Sigma||X|$ inputs, and is represented by a polynomial of that degree, independent of $N$.
\end{proof}

\section{Further Experimental Results}

\subsection{The Details of Experimental Setup}
\label{app:hyperparameters}

\begin{table}[h]
    \centering
    \begin{minipage}{0.45\textwidth}
        \centering
        \begin{tabular}{ll}
            \hline
            \textbf{Parameter Name} & \textbf{Value} \\ \hline
            Learning Rate  & 0.0003  \\
            Weight Decay & 0.1 \\
            Batch Size     & 1024    \\
            Training Steps         & 10,000    \\
            Optimizer      & AdamW   \\ 
            $\rho$ (in computation of sharpness) & 0.02 \\
            \hline
        \end{tabular}
    \end{minipage}
    \hfill
    \begin{minipage}{0.45\textwidth}
        \centering
        \begin{tabular}{ll}
            \hline
            \textbf{Parameter Name} & \textbf{Value} \\ 
            \hline
            Hidden Dimension & 128 \\
            Transformer Layers & 2 \\
            Attention Heads & 2 \\
            Dropout Rate & 0 \\
            \hline
        \end{tabular}
    \end{minipage}
    \caption{In all the experiments, the hyperparameters above were used unless stated otherwise.}
\end{table}

In the experiment presenting the tradeoff between weight norm and LayerNorm blowup, we uniformly sampled weight decay between 0 and 0.4 and learning rate between 0.0001 and 0.0005.

In the scratchpad experiment, we used an encoder-decoder Transformer with 2 attention heads per block, hidden dimension 32 and 2 layers.

In the experiment including training the models for high sequence lengths (Figures \ref{app:exp-scaling-higher-lengths} and \ref{app:exp-scaling-higher-lengths-aligned}) we used hidden dimension 4.

\subsection{Results}
\label{appendix:exp_results}

\begin{figure}[h!]
    \centering
    \begin{subfigure}[b]{0.47\linewidth}
        \includegraphics[width=\linewidth]{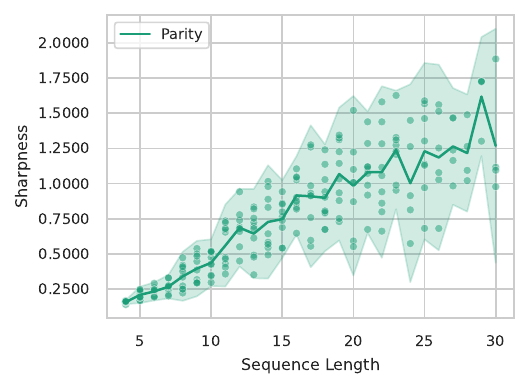}
        \caption{PARITY}
    \end{subfigure}
    \hfill
    \begin{subfigure}[b]{0.47\linewidth}
        \includegraphics[width=\linewidth]{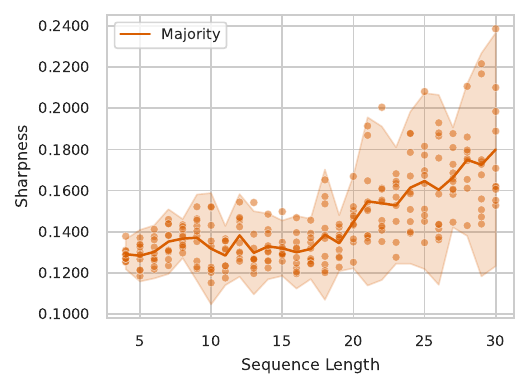}
        \caption{MAJORITY}
    \end{subfigure}
    
    \begin{subfigure}[b]{0.47\linewidth}
        \includegraphics[width=\linewidth]{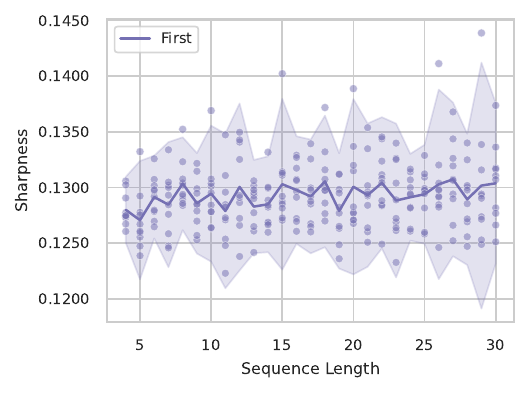}
        \caption{FIRST}
    \end{subfigure}
    \hfill
    \begin{subfigure}[b]{0.47\linewidth}
        \includegraphics[width=\linewidth]{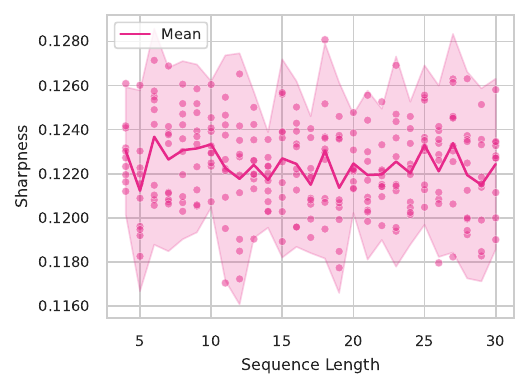}
        \caption{MEAN}
    \end{subfigure}
    
    \caption{Sharpness as a function of sequence length for all the functions discussed in the paper. As predicted by Theorem~\ref{thm:lrho-bound}, parameters fitting PARITY have substantial sharpness as inputs get longer. For functions with lower sensitivity, sharpness barely increases with the input length.
    For PARITY, the sharpness approaches the theoretical asymptotic lower bound of $1$  from Theorem~\ref{thm:lrho-bound} already at $\seqlen \approx 30$.
    See Figure~\ref{ex:sharpness-all-aligned} for a version with aligned y-axes.
    }
    \label{exp:sharpness-all}
\end{figure}

\begin{figure}
    \centering
    \begin{subfigure}[b]{0.47\linewidth}
        \includegraphics[width=\linewidth]{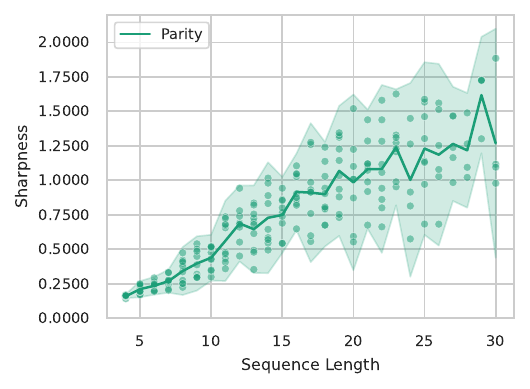}
        \caption{PARITY}
    \end{subfigure}
    \hfill
    \begin{subfigure}[b]{0.47\linewidth}
        \includegraphics[width=\linewidth]{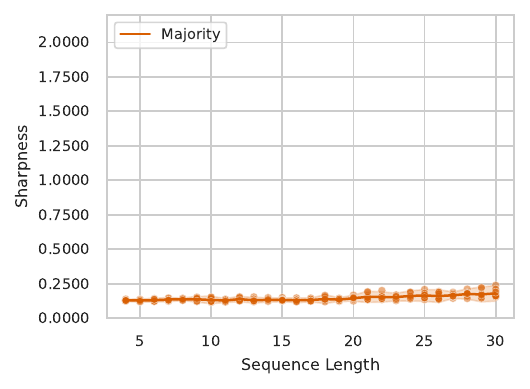}
        \caption{MAJORITY}
    \end{subfigure}
    
    \begin{subfigure}[b]{0.47\linewidth}
        \includegraphics[width=\linewidth]{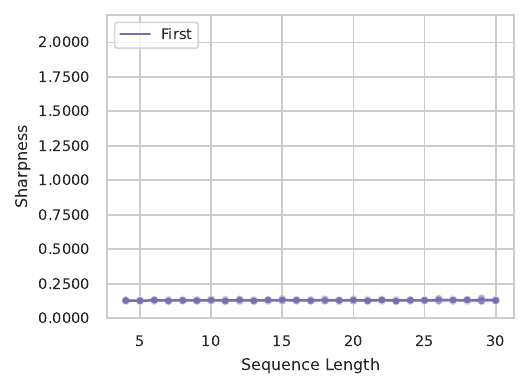}
        \caption{FIRST}
    \end{subfigure}
    \hfill
    \begin{subfigure}[b]{0.47\linewidth}
        \includegraphics[width=\linewidth]{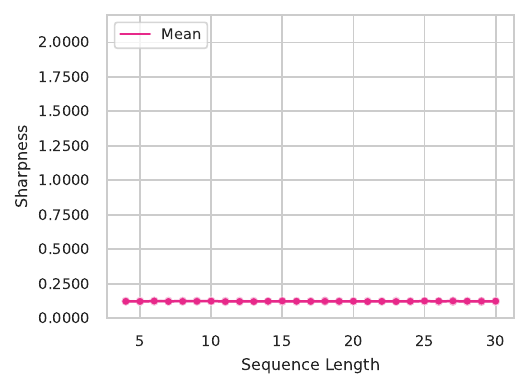}
        \caption{MEAN}
    \end{subfigure}
    
    \caption{Same as Figure \ref{exp:sharpness-all}, but $y$-axes are aligned.}\label{ex:sharpness-all-aligned}
\end{figure}

\begin{table}
\centering
\begin{tabular}{lccc}
\toprule
\textbf{Function} & \makecell{\textbf{Regression} \\ \textbf{Slope}} & \makecell{\textbf{Pearson} \\ \textbf{Correlation}} & \makecell{\textbf{p-value} \\ \textbf{($H_0: \mathrm{slope} = 0$)}}\\
\midrule \addlinespace
PARITY & $5.0 \cdot 10^{-2}$ & 0.88 & $6 \cdot 10^{-72}$ \\
MAJORITY & $1.8 \cdot 10^{-3}$ & 0.67 & $2 \cdot 10^{-36}$ \\
FIRST & $6.4 \cdot 10^{-5}$ & 0.16 & $0.0075$ \\
MEAN & $-1.9 \cdot 10^{-5}$ & -0.07 & 0.22 \\
\bottomrule
\end{tabular}
\caption{The statistical properties of the length-sharpness relationship computed for $n \in [4, 30]$. For all the functions except for MEAN, sequence length and minima sharpness are statistically significantly correlated within this range of $\seqlen$; though for PARITY the slope is 2 orders of magnitude higher than for other functions. See the visualisations in Figure \ref{exp:sharpness-all}.}
\label{exp:sharpness-stats-table}
\end{table}

\begin{figure}
    \centering
    \begin{subfigure}[b]{0.47\linewidth}
        \includegraphics[width=\linewidth]{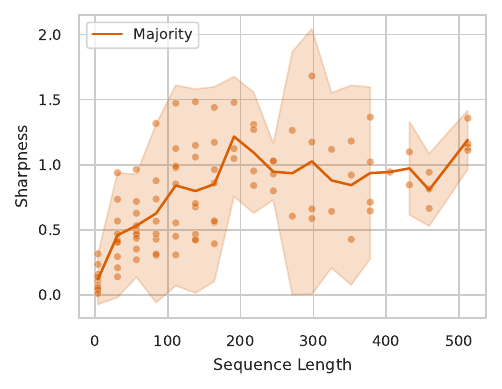}
        \caption{MAJORITY}
    \end{subfigure}
    \hfill
    \begin{subfigure}[b]{0.47\linewidth}
        \includegraphics[width=\linewidth]{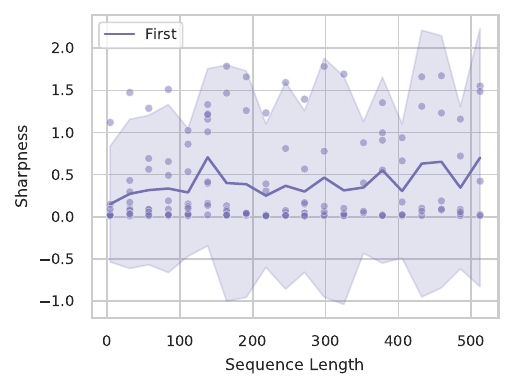}
        \caption{FIRST}
    \end{subfigure}

    \begin{subfigure}[b]{0.47\linewidth}
        \includegraphics[width=\linewidth]{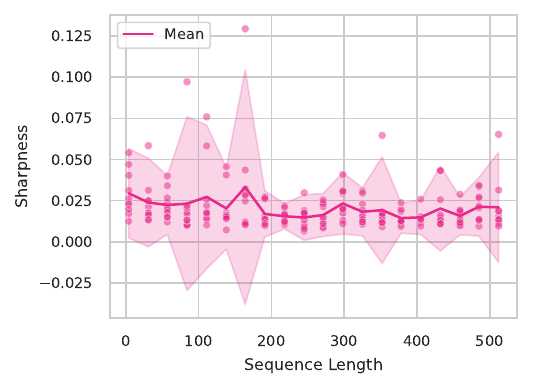}
        \caption{MEAN}
    \end{subfigure}
    \caption{Length-Sharpness dependency for high sequence lengths. Note that training a comparable setup for PARITY at such input lengths was not feasible. 
    For MAJORITY, sharpness does increase, in line with its higher (though still sublinear) asymptotic average sensitivity.
    For FIRST and MEAN, whose average sensitivity does not increase with $\seqlen$, sharpness shows little discernible dependency on length.
    We note that the absolute sharpness values for high input lengths (as in this figure) and for low input lengths (as in Figure \ref{exp:sharpness-all}) are incomparable, as different model hyperparameters were used, due to the computational cost of fitting models at high sequence lengths.
    }
    \label{app:exp-scaling-higher-lengths}
\end{figure}

\begin{figure}
    \centering
    \begin{subfigure}[b]{0.47\linewidth}
        \includegraphics[width=\linewidth]{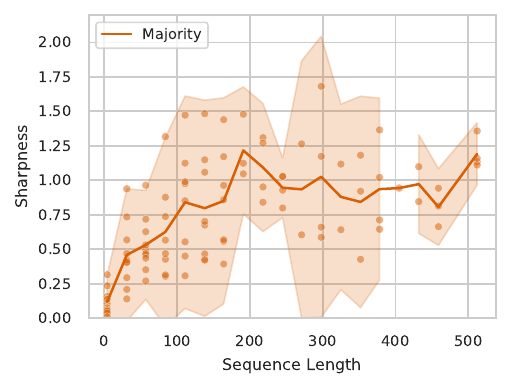}
        \caption{MAJORITY}
    \end{subfigure}
    \hfill
    \begin{subfigure}[b]{0.47\linewidth}
        \includegraphics[width=\linewidth]{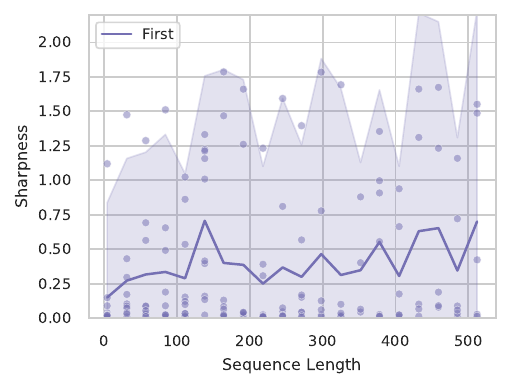}
        \caption{FIRST}
    \end{subfigure}

    \begin{subfigure}[b]{0.47\linewidth}
        \includegraphics[width=\linewidth]{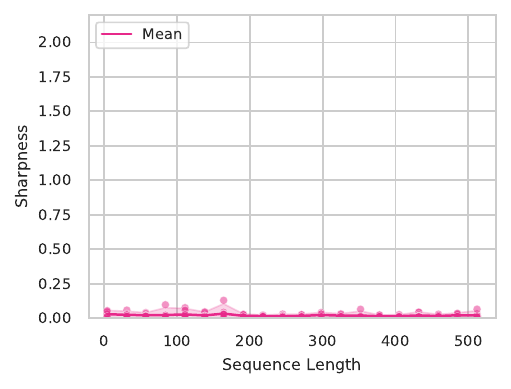}
        \caption{MEAN}
    \end{subfigure}
    \caption{Same as Figure \ref{app:exp-scaling-higher-lengths}, but $y$-axes are aligned.}
    \label{app:exp-scaling-higher-lengths-aligned}
\end{figure}

\begin{figure}
    \centering
    \begin{subfigure}[b]{0.47\linewidth}
        \includegraphics[width=\linewidth]{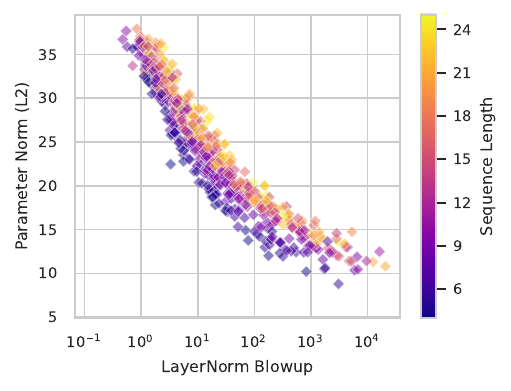}
        \caption{PARITY}
    \end{subfigure}
    \hfill
    \begin{subfigure}[b]{0.47\linewidth}
        \includegraphics[width=\linewidth]{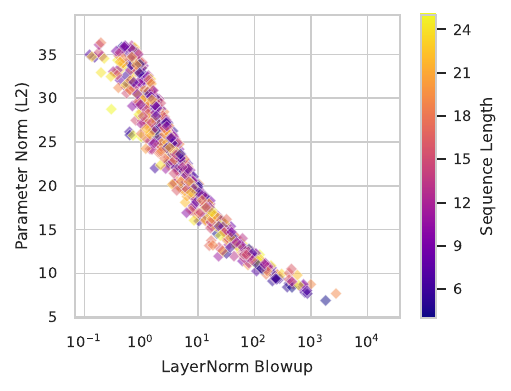}
        \caption{MAJORITY}
    \end{subfigure}
    
    \begin{subfigure}[b]{0.47\linewidth}
        \includegraphics[width=\linewidth]{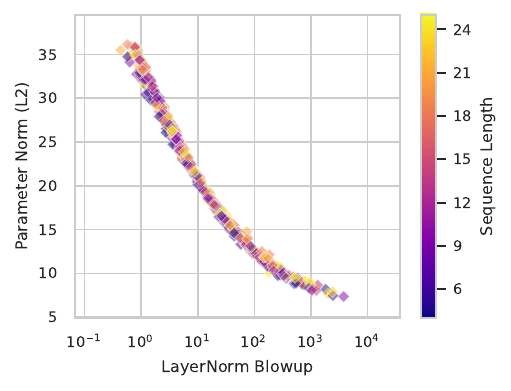}
        \caption{FIRST}
    \end{subfigure}
    \hfill
    \begin{subfigure}[b]{0.47\linewidth}
        \includegraphics[width=\linewidth]{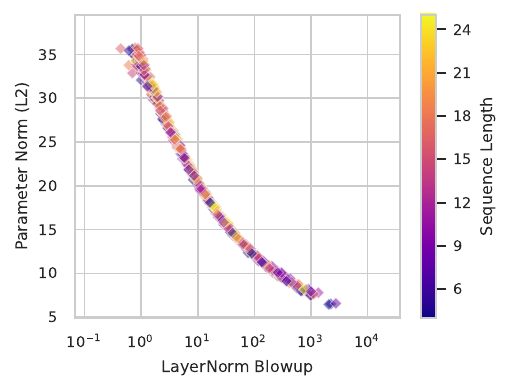}
        \caption{MEAN}
    \end{subfigure}

    \caption{Tradeoff between layer norm blowup and parameter norm: When varying the layer norm penalty, transformers find different tradeoffs between these two quantities. For PARITY, the tradeoff depends on the input length; blowup or parameter weights need to increase with the input length (in accordance with Corollary \ref{thm:bigtheorem}).
    For functions with lower sensitivity, little dependency on the input length is observed.
    }
    \label{exp:tradeoff-all-functions-comparison}
\end{figure}

\begin{figure}
    \centering
    \includegraphics[scale=0.9]{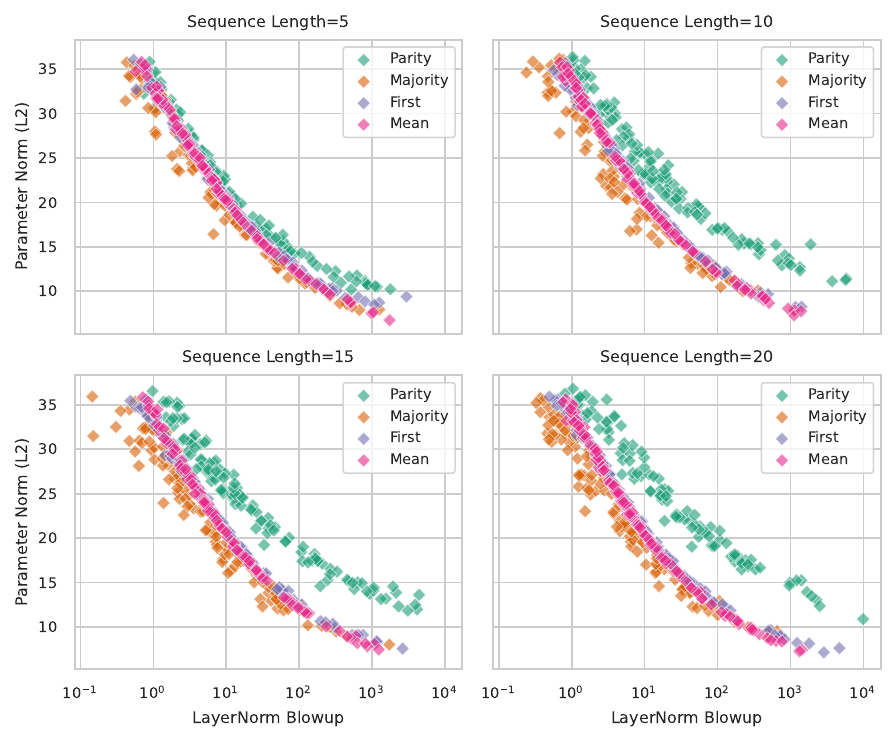}
    \caption{Parameter Norm/Blowup tradeoff for input lengths 5, 10, 15 and 20. At input length 5, PARITY behaves similarly to the other functions. However, for higher input lengths, fixed LN Blowup requires higher weight norm for PARITY than for other functions, and vice versa. This aligns with the theory, as increasing input length for PARITY increases the lower bound of weight norm and LN blowup in (\ref{eq:theorem-third}).}
    \label{exp:tradeoff-4-lengths}
\end{figure}

\begin{figure}
    \centering
    \includegraphics[width=0.4\textwidth]{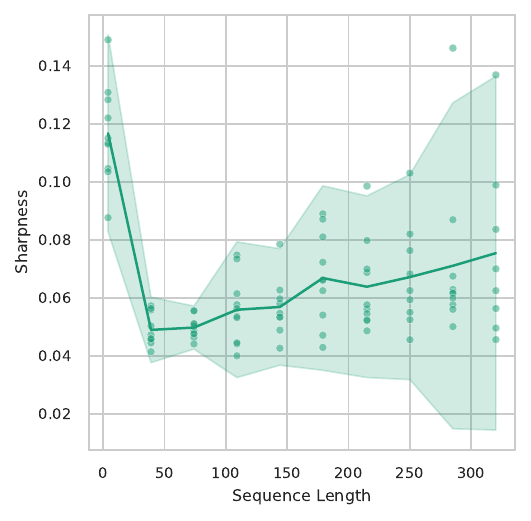}
    \caption{Sharpness of a Transformer with scratchpad trained for PARITY. Despite approximating a highly sensitive function, sharpness stays low even at hundreds of bits and shows no clear increase with length. 
    }
    \label{exp:scratchpad-sharpness}
\end{figure}

\begin{figure}
    \centering
    \begin{subfigure}[b]{0.47\linewidth}
        \includegraphics[width=\linewidth]{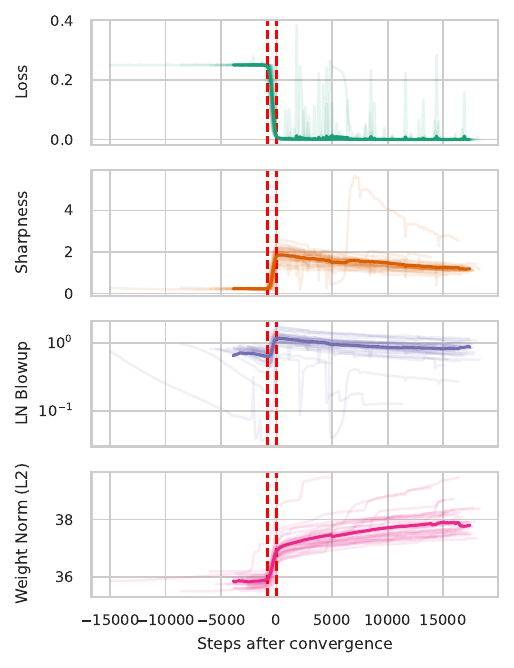}
        \caption{Weight Decay = 0.0}
    \end{subfigure}
    \hfill
    \begin{subfigure}[b]{0.47\linewidth}
        \includegraphics[width=\linewidth]{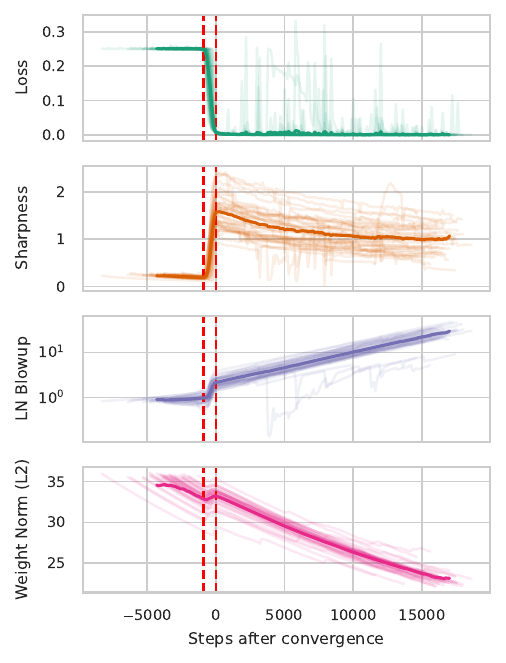}
        \caption{Weight Decay = 0.1}
    \end{subfigure}
    
    \begin{subfigure}[b]{0.47\linewidth}
        \includegraphics[width=\linewidth]{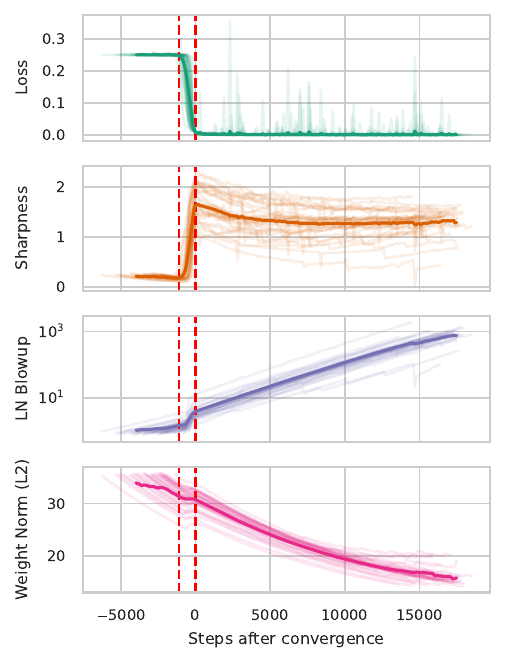}
        \caption{Weight Decay = 0.2}
    \end{subfigure}
    \hfill
    \begin{subfigure}[b]{0.47\linewidth}
        \includegraphics[width=\linewidth]{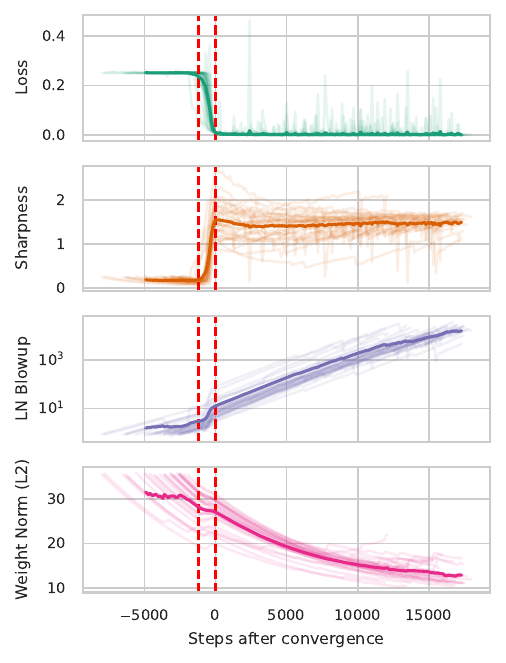}
        \caption{Weight Decay = 0.3}
    \end{subfigure}

    \caption{Training dynamics of a Transformer for PARITY at different values of weight decay. Sharpness always increases dramatically at the same time as training loss falls to 0. For non-zero weight decay, after the learning phase parameter norm starts to decrease, causing the appropriate increase in LayerNorm blowup, which does not have a significant effect on sharpness.}

    \label{exp:dynamic-main}
\end{figure}

\begin{figure}
    \centering
    \begin{subfigure}[b]{0.47\linewidth}
        \includegraphics[width=\linewidth]{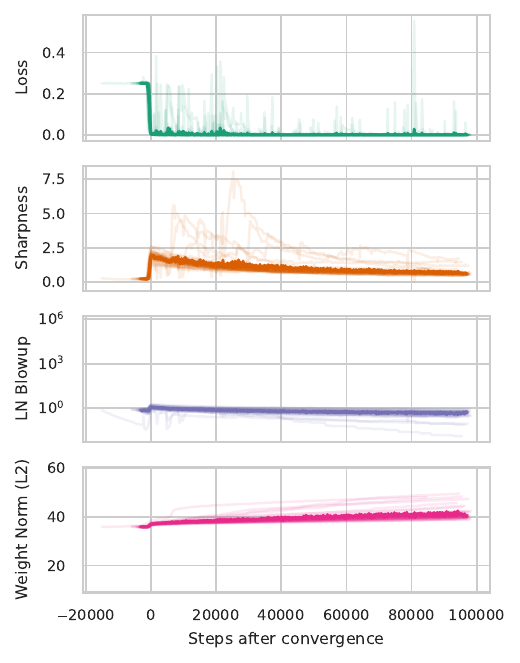}
        \caption{Weight Decay = 0.0}
    \end{subfigure}
    \hfill
    \begin{subfigure}[b]{0.47\linewidth}
        \includegraphics[width=\linewidth]{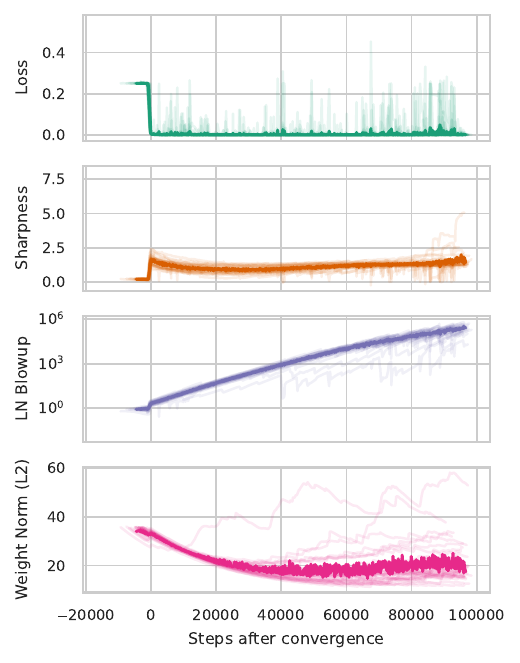}
        \caption{Weight Decay = 0.1}
    \end{subfigure}

    \caption{Training dynamics at weight decay 0.0 (left) and 0.1 (right), 100k training steps, PARITY. As predicted by Corollary~\ref{thm:bigtheorem}, over the course of training, layer norm blowup and parameter norm trade off. Zero weight decay ultimately enables a lower sharpness, but even after 100k steps, it remains substantially higher than the low sharpness quickly reached on low-sensitivity functions. }\label{fig:dynamics-100k}
\end{figure}

\begin{figure}
    \centering
    \begin{subfigure}[b]{0.47\linewidth}
        \includegraphics[width=\linewidth]{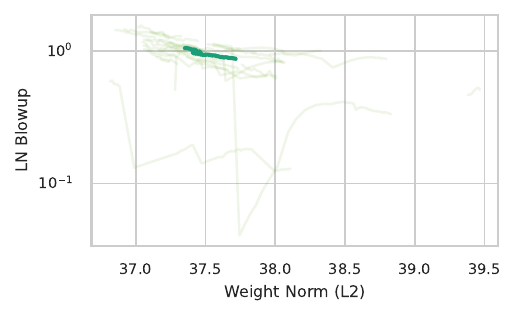}
        \caption{Weight Decay = 0.0}
    \end{subfigure}
    \hfill
    \begin{subfigure}[b]{0.47\linewidth}
        \includegraphics[width=\linewidth]{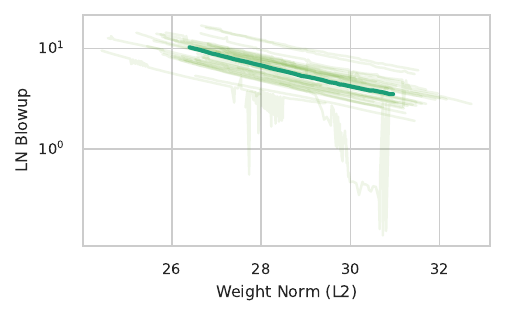}
        \caption{Weight Decay = 0.1}
    \end{subfigure}
    
    \begin{subfigure}[b]{0.47\linewidth}
        \includegraphics[width=\linewidth]{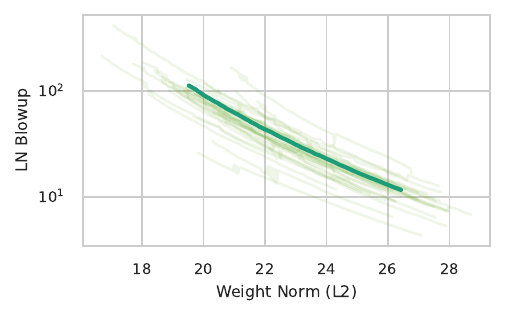}
        \caption{Weight Decay = 0.2}
    \end{subfigure}
    \hfill
    \begin{subfigure}[b]{0.47\linewidth}
        \includegraphics[width=\linewidth]{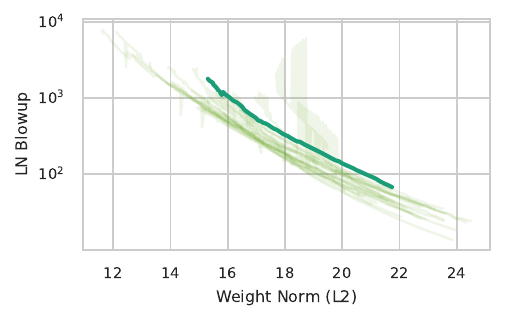}
        \caption{Weight Decay = 0.3}
    \end{subfigure}

    \caption{Tradeoff between Weight Norm and LayerNorm Blowup when training with different weight decay values. In the course of training, non-zero weight decay drives down parameter norm, and that makes LN Blowup to increase. The observed dependency is log-linear, similar to the results in Figure \ref{exp:tradeoff-4-lengths}.}
\end{figure}

\begin{figure}
\centering
\includegraphics{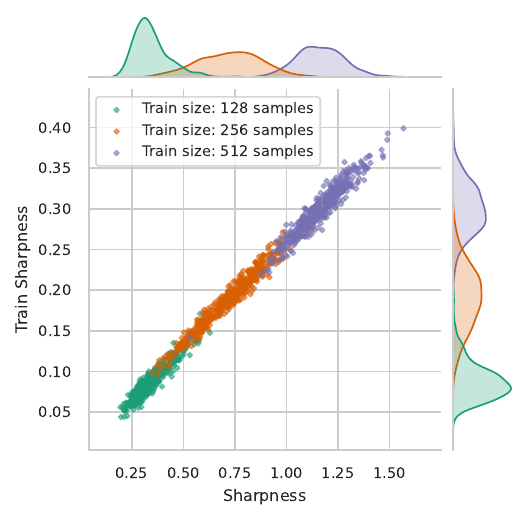}
    \caption{Generalization: The sharpness of the model when restricting sharpness (\ref{eq:def:lrho}) to the train set is almost the same as the sharpness on the whole input space. }\label{fig:generalization-sharpness}
\end{figure}

\end{document}